\documentclass[10pt]{article} 
\usepackage[preprint]{tmlr}

\usepackage{amsmath,amsfonts,bm}

\def\eqref#1{equation~\ref{#1}}

\def\1{\bm{1}}

\DeclareMathAlphabet{\mathsfit}{\encodingdefault}{\sfdefault}{m}{sl}
\SetMathAlphabet{\mathsfit}{bold}{\encodingdefault}{\sfdefault}{bx}{n}

\usepackage{hyperref}
\usepackage{url}
\usepackage{booktabs}
\usepackage{placeins}
\usepackage{graphicx}
\usepackage{subcaption}
\usepackage{algorithm}
\usepackage{algpseudocode}
\algnewcommand\Input{\item[\textbf{Input:}]}
\usepackage{amsthm}
\newtheorem{theorem}{Theorem}[section]
\newtheorem{proposition}[theorem]{Proposition}
\newtheorem{lemma}[theorem]{Lemma}
\newtheorem{corollary}[theorem]{Corollary}
\theoremstyle{definition}

\theoremstyle{remark}

\title{When Is a Draft Accepted?
\\ A Theory of Acceptance in Speculative Decoding}

\author{\name Aaryam Sharma\thanks{Website: \url{https://aaryam.info/}} \email a584shar@uwaterloo.ca \\
      \addr Independent Researcher\\
      Waterloo, Ontario, Canada}

\def\month{MM}  
\def\year{YYYY} 
\def\openreview{\url{https://openreview.net/forum?id=XXXX}} 

\begin{document}

\maketitle

\begin{abstract}
Speculative decoding accelerates language model inference by using a fast drafter to propose candidate tokens that are then verified by a larger target model. Existing theory largely studies the stochastic, distribution-preserving setting, where the goal is to exactly sample from the target distribution. In contrast, many practical systems use greedy decoding, relaxed acceptance rules, or tree-based candidate sets, where success is governed by local ranking and threshold events rather than exact distributional equality. We develop a theory for these regimes. We identify that many common acceptance criteria have rejection regions that can be characterized as lower level sets of the target distribution. For these, we characterize the exact KL divergence required for rejection yielding exact certificates and sharp margin-based bounds for strict greedy decoding, additive and multiplicative relaxed acceptance, top-(m) relaxed criteria, and entropy-thresholded acceptance. We then extend the framework to greedy tree decoding, deriving exact and margin-only certificates for when the target greedy token remains covered by the drafter's top-(m) candidates. Finally, we evaluate the resulting certificates on Qwen3 models, showing that relaxed and tree-based criteria substantially enlarge the region of certified acceptance, especially on decoding steps with low target model distribution margin. These results complement existing distribution-preserving analyses of speculative decoding by characterizing the deterministic local acceptance events common in practical inference systems.
\end{abstract}

\section{Introduction}

Most autoregressive LLMs generate only one token at a time. This means that generating $t$ tokens requires $t$ forward passes since each output token becomes part of the input for the next token. This is increasingly expensive as the size of deployed models grows. In a time when LLMs are being deployed in latency-sensitive applications, this bottleneck is a major concern.

Speculative decoding is a technique that addresses this limitation by introducing a smaller, faster draft model that proposes one or more future tokens that the larger target model verifies in parallel. The foundational speculative decoding and speculative sampling algorithms showed that this approach can significantly accelerate generation while maintaining the same output distribution as the target model \citep{leviathan2023fast, chen2023accelerating}.

These drafters range from lightweight n-gram proposals to trained neural models that share layers, hidden states, or embeddings with the target model, and drafts may be produced as a single sequence or as a tree of candidate continuations (Section~\ref{sec:related}). 

The performance of a speculative decoding technique is typically measured using the length of tokens accepted as well as latency and throughput gains. However, these metrics are not always measured in the same setting. While many speculative decoding methods aim to match the target model distribution exactly, a large focus of recent research has been on relaxing this requirement. These relaxed criteria furnish substantial practical performance benefits, yet have received comparatively little theoretical treatment.

Indeed, most theoretical analyses of speculative decoding today focus on the exact sampling setting, studying stochastic sampling, unbiasedness, or optimal transport \citep{yin2024a,sun2023spectr,ahn2023spectr}. This leaves open the deterministic, local questions that govern the methods used in practice: greedy decoding, relaxed greedy decoding, and tree-based greedy variants. 

This paper seeks to address this gap by providing a theoretical analysis of the local properties of speculative decoding. We begin by analyzing a family of acceptance methods under the bounds of KL divergence between target and draft output distributions. Our primary observation is that many practical acceptance rules, including greedy acceptance, relaxed acceptance, and even Medusa-style relaxed acceptance, can be characterized as lower level sets of the target distribution. We hence convert the problem of acceptance into a question about how much KL divergence is required before a draft can be rejected. We provide exact certificates such that if the target model and draft model KL divergences are smaller than the certificate we guarantee acceptance. We provide tight lower and upper bounds to these thresholds. We also similarly analyze a simple tree-based greedy decoding method relating the branching amount to local acceptance.

Finally, we compute statistics on model output distributions of Qwen3 models \citep{qwen3} and empirically analyze the bounds derived in our theoretical analysis.

Our contributions are as follows:
\begin{itemize}
    \item We develop a unified certificate for single draft token acceptance criteria whose rejection regions are lower level sets of the target distribution, obtaining bounds on strict greedy, additive relaxation, multiplicative relaxation, top-$m$ relaxed acceptance, and entropy-based acceptance as corollaries.
    \item We derive tight lower and upper bounds on the exact minimum KL required for rejection, and construct examples showing that these bounds are tight for various acceptance criteria.
    \item We extend the single draft token framework to greedy tree decoding when branching with factor $m$, deriving exact conditions for acceptance and general bounds. We also derive a universal $\log(m+1)$ upper bound on the tree certificate, extending the $\log(2)$ ceiling of the single-token case.
    \item We empirically evaluate the resulting certificates on Qwen3 models \citep{qwen3} and show how relaxed and tree-based criteria enlarge the region of certified acceptance, especially on low confidence steps.

\end{itemize}

\section{Related Works}
\label{sec:related}

Speculative execution predates modern language models, but its use for neural autoregressive decoding was developed in the context of blockwise parallel decoding by \citet{stern2018blockwise} who proposed predicting several future positions in parallel and accepting the longest prefix verified by the target model. The modern LLM formulation based on speculative sampling was introduced by \citet{leviathan2023fast} and \citet{chen2023accelerating} who showed that this method can exactly sample from the target model distribution while achieving significant speedups.

\paragraph{Architectural variants.}
Much subsequent research improves the algorithm through architectural changes to the drafter. Kangaroo and Draft \& Verify reuse parts of the target model itself within the draft model \citep{zhang2024draftverify,liu2024kangaroo}. Multi-head approaches such as Medusa and Hydra attach additional heads that predict future tokens based on the last target hidden state \citep{cai2024medusa,ankner2024hydra}. Models such as EAGLE, HASS, and Combined Token/Embedding Speculators use drafters that share hidden states or embeddings with the target model \citep{li2024eagle,li_eagle-2_2024,li2026eagle,wertheimer2024combined,zhang2025learning}, and parallel drafters such as DFlash use target hidden states to generate drafts in a single forward pass, while PARD-2 can run on both target model hidden states and standalone \citep{chen_dflash_2026,an2026pard,an_pard-2_2026}.

\paragraph{Acceptance criteria.}

Another popular research direction explores alternatives to exact-match acceptance. Medusa and Hydra relax exact matching with an entropy-based criterion that admits high-probability non-greedy tokens \citep{cai2024medusa,ankner2024hydra}. Popular inference libraries provide similar support: TensorRT-LLM uses a multiplicative margin for greedy decoding, while SGLang offers a multiplicative, sampling-based relaxation \citep{nvidia_tensorrt_llm_2026,zheng2024sglang,sglang_2026}. Fuzzy Speculative Decoding determines acceptance using divergences between the target and draft distributions \citep{holsman2025fuzzy}, and SPRINTER introduces a verifier that calls the target model only when the verifier rejects a draft \citep{zhong2025approxverify}. MARS additionally accounts for the margin between the top two target tokens, accepting the second-most-likely token when it is close enough to the most likely one \citep{song2026mars}. In another line of work, tree-based methods such as SpecInfer, EAGLE-2, and SEQUOIA explore multiple draft sequences in parallel where draft tokens are proposed and verified as a tree \citep{miao_specinfer_2024,li_eagle-2_2024,chen2024sequoia}.

\paragraph{Theoretical analysis.}
Most theoretical work targets the distribution-preserving setting. SpecTr formulates speculative decoding through optimal transport and studies transport-based acceptance rules \citep{sun2023spectr}, and SpecTr++ improves the resulting transport plan \citep{ahn2023spectr}. \citet{yin2024a} model speculative decoding as a Markov chain, derive expected-rejection formulas, prove the optimality of standard rejection-based speculative decoding within an unbiased algorithm class, and study batch decoding and the trade-off between inference cost and output quality. These results, close to ours in spirit, focus on stochastic output preservation and rejection-based efficiency. In contrast, we study deterministic local acceptance in strict greedy, relaxed acceptance, entropy-based, and tree-based criteria. Our analysis asks when local acceptance is stable when draft and target model distributions diverge.

\section{Preliminaries}

\subsection{Speculative Decoding}
\label{sec:prelim-spec-dec}

In this section, we provide a mathematical formulation for speculative decoding, and introduce the necessary notation and definitions to facilitate our theoretical analysis.

Let us define the current sequence of tokens as $x_0,...,x_n$. In a standard autoregressive decoding process, the model generates the next token $x_{n+1}$ based on the previous tokens $x_0,...,x_n$. This is highlighted in Algorithm 1. However, this technique is limited in that it can only generate one token at a time. On the other hand, a transformer forward pass can score multiple positions in parallel once a sequence is given. Speculative decoding exploits this by drafting several tokens cheaply and verifying them with one target-model call.

In speculative decoding, a smaller draft model is introduced. This model is typically significantly faster than the target model. Given the same input sequence, this model is used to generate a sequence of $k$ draft tokens $y_1,...,y_k$. The larger target model then verifies the correctness of these $k$ tokens in parallel, and accepts a prefix of these tokens based on a certain acceptance criterion. Often, the first rejected token is resampled from the target model, and if no tokens are rejected the target model generates an extra token.

Let us define $p(  \cdot | x_0,...,x_n)$ as the probability distribution of the next token $x_{n+1}$ for the target model, and $q( \cdot | x_0,...,x_n)$ as the analogous probability distribution for the draft model.

\begin{figure}[h]
\centering
\begin{minipage}[t]{0.47\textwidth}
\begin{algorithm}[H]
\caption{Autoregressive Decoding}
\label{alg:autoregressive}
\begin{algorithmic}[1]
\Input Prefix $x_{1:n}$, target model $p$
\For{$t = n+1, n+2, \ldots$}
    \State Sample $x_t \sim p(\cdot \mid x_{1:t-1})$
\EndFor
\end{algorithmic}
\end{algorithm}
\end{minipage}
\hfill
\begin{minipage}[t]{0.50\textwidth}
\begin{algorithm}[H]
\caption{Speculative Decoding (General)}
\label{alg:spec-dec-general}
\begin{algorithmic}[1]
\Input Prefix $x_{1:n}$, draft model $q$, target model $p$, draft length $k$
\State \textbf{Draft:} Sample $y_1, \ldots, y_k$ from $q$
\State \textbf{Verify:} Compute $p(\cdot \mid x_{1:n}, y_{1:i-1})$ for $i = 1, \ldots, k$
\State \textbf{Accept:} Accept a prefix $y_{1:\tau}$ for some $\tau \leq k$
\end{algorithmic}
\end{algorithm}
\end{minipage}
\end{figure}

While several algorithms exist for speculative decoding, among the most commonly used are \textit{speculative sampling} and \textit{greedy decoding}. Speculative sampling is a sampling approach that exactly simulates the target model's output distribution. Here, the draft tokens are sampled according to the draft model's probability distribution, and accepted based on a probabilistic acceptance criterion. Greedy decoding, on the other hand, is a deterministic approach wherein the draft tokens proposed are those with the highest probability, and a token is accepted iff it is the token with the highest probability under the target model distribution and its prefix has been accepted. For speculative sampling, we define for any two probability distributions $p$ and $q$ the distribution $[p - q]_+$ as the normalized probability distribution of $\max(0, p - q)$. A standard result establishes that under speculative sampling, the per-token acceptance probability equals $1 - \mathrm{TV}(p, q)$ \citep{leviathan2023fast,yin2024a}.

\begin{figure}[h]
\centering
\begin{minipage}[t]{0.47\textwidth}
\begin{algorithm}[H]
\caption{Speculative Sampling \citep{chen2023accelerating, leviathan2023fast}}
\label{alg:spec-sampling}
\begin{algorithmic}[1]
\Input Prefix $x_{1:n}$, draft model $q$, target model $p$, draft length $k$
\For{$i = 1, \ldots, k$}
    \State Sample $y_i \sim q(\cdot \mid x_{1:n}, y_{1:i-1})$
\EndFor
\For{$i = 1, \ldots, k$}
    \State Let $p_i(\cdot) := p(\cdot \mid x_{1:n}, y_{1:i-1})$, $q_i(\cdot) := q(\cdot \mid x_{1:n}, y_{1:i-1})$
    \State Sample $r \sim \mathrm{Uniform}[0,1]$
    \If{$r \leq \min\!\left(1,\, \dfrac{p_i(y_i)}{q_i(y_i)}\right)$}
        \State Accept $y_i$
    \Else
        \State Resample $y_i \sim [p_i - q_i]_+(\cdot)$. \textbf{Break.}
    \EndIf
\EndFor
\end{algorithmic}
\end{algorithm}
\end{minipage}
\hfill
\begin{minipage}[t]{0.50\textwidth}
\begin{algorithm}[H]
\caption{Greedy Speculative Decoding}
\label{alg:greedy-spec}
\begin{algorithmic}[1]
\Input Prefix $x_{1:n}$, draft model $q$, target model $p$, draft length $k$
\For{$i = 1, \ldots, k$}
    \State $y_i \leftarrow \arg\max_v\, q(\cdot \mid x_{1:n}, y_{1:i-1})$
\EndFor
\For{$i = 1, \ldots, k$}
    \State Let $p_i(\cdot) := p(\cdot \mid x_{1:n}, y_{1:i-1})$
    \If{$y_i = \arg\max_v\, p_i(v)$}
        \State Accept $y_i$
    \Else
        \State $y_i \leftarrow \arg\max_v\, p_i(v)$. \textbf{Break.}
    \EndIf
\EndFor
\end{algorithmic}
\end{algorithm}
\end{minipage}
\end{figure}

\subsection{Relaxed Greedy Acceptance}
\label{sec:prelim-relaxed}

Strict greedy acceptance requires the draft token to coincide with the target's argmax. Several practical systems relax this requirement, accepting draft tokens whose target probability is close to that of the argmax. We analyze two such criteria.

Let $p_i(\cdot) := p(\cdot \mid x_{1:n}, y_{1:i-1})$ denote the target distribution at step $i$, and let $x_i^\ast := \arg\max_v p_i(v)$ be the target's argmax. We consider:
\begin{itemize}
    \item \textbf{Additive relaxation} with margin $t \in [0,1]$: accept $y_i$ if $y_i = x_i^\ast$ or $p_i(y_i) > p_i(x_i^\ast) - t$.
    \item \textbf{Multiplicative relaxation} with factor $\alpha \in (0,1]$: accept $y_i$ if $y_i = x_i^\ast$ or $p_i(y_i) > \alpha \cdot p_i(x_i^\ast)$.
\end{itemize}
Strict greedy decoding is recovered by setting $t = 0$ or $\alpha = 1$. The multiplicative form is used in TensorRT-LLM relaxed acceptance \citep{nvidia_tensorrt_llm_2026}, while we will demonstrate that additive margins provide useful theoretical guarantees. Both criteria modify only the acceptance test in Algorithm~\ref{alg:greedy-spec}; the drafting step is unchanged.

\subsection{Tree-Based Greedy Decoding}
\label{sec:prelim-tree}

Tree-based speculative decoding generalizes the linear draft sequence to a tree of candidate continuations, allowing multiple drafts to be verified in parallel \citep{miao_specinfer_2024,li_eagle-2_2024,chen2024sequoia}. Our analysis focuses on a simple symmetric tree where at each level $i = 1, \ldots, k$, the drafter proposes the top $m$ tokens under the draft distribution producing $m^k$ candidate sequences.

Formally, let $\mathrm{top}_m(q_i)$ denote the set of $m$ tokens with highest probability under $q_i$. A path $(y_1, \ldots, y_k)$ in the tree is \emph{accepted} up to depth $\tau$ if, for every $i \leq \tau$, the target argmax $x_i^\ast$ lies in $\mathrm{top}_m(q_i)$ conditioned on $y_{1:i-1}$. The accepted token at each level is $x_i^\ast$ itself.

Setting $m = 1$ recovers strict greedy decoding. Larger $m$ trades verification cost (which scales with the number of tree nodes) for a higher chance that the target argmax is covered at each level.

\section{Single Draft Token Acceptance Criteria}
\label{sec:single-token-acceptance}

A draft token is accepted or rejected according to a criterion comparing it against the target distribution $p$. The acceptance criteria of several practical speculative decoders such as strict greedy decoding, additive and multiplicative relaxation, and entropy-thresholded acceptance, all reject a draft token exactly when it lies in a lower level set of $p$. In this section, we prove that a single exact KL certificate governs the acceptance of every such criterion. We apply this result to the individual acceptance criteria as corollaries from which we derive interpretable lower and upper bounds and show that they are tight by constructing examples.

\subsection{Setup}
\label{sec:single-setup}

Throughout this section, we fix a decoding step, thereby fixing our conditioning context. We thus suppress the dependence on the prefix $x_{1:n}$ since all the probabilities and results are conditioned on this context prefix. We write $p, q \in \Delta^{|V|-1}$ for the target and draft distributions over the vocabulary $V$, and
\[
x_0 \;:=\; \arg\max_{v \in V} p(v)
\]
for the target's argmax. We assume throughout that $x_0$ is unique. 

When $q$ has multiple modes, we assume the worst case for acceptance, that if any mode of $q$ lies in the rejection region, the drafter selects it. This makes our results valid under any deterministic tie-breaking rule even in the worst case.

A \emph{rejection region} is a set $\mathcal{R} \subseteq V \setminus \{x_0\}$ such that the draft is rejected precisely when $\arg\max_v q(v) \cap \mathcal{R} \ne \emptyset$. As KL divergence is a common training objective for speculative drafters, the quantity of interest is the minimum KL divergence that permits rejection,
\[
R_p(\mathcal{R}) \;:=\; \inf\bigl\{ \mathrm{KL}(p \,\Vert\, q) : \arg\max_v q(v) \cap \mathcal{R} \neq \emptyset \bigr\},
\]
This is the exact certificate for the acceptance criterion, since $\mathrm{KL}(p \,\Vert\, q) < R_p(\mathcal{R})$ guarantees acceptance for any $q$. In this section, our acceptance criteria are of the form $p(v) > c$ for some constant $c$ that may depend on $p$. Hence, $\mathcal{R} = \{v \in V : p(v) \leq c\}$ is a lower level set of $p$. The discussions below aim to explore the setting where $\mathrm{KL}(p \,\Vert\, q) \leq \epsilon$ holds.

Since target language models use a softmax output layer, we assume throughout this section that $p(v)>0$ for every $v\in V$. The arguments can be extended to distributions with zero coordinates by restricting to the support of $p$, but the full-support case is the relevant one for our setting.

\subsection{Exact Certificate for Single-Token Acceptance}

\begin{lemma}[Reduction to single-token problems]
\label{lem:single-reduction}
For any token $x_1 \in V \setminus \{x_0\}$, define
\[
M(x_1) \;:=\; \inf_{q \in \Delta^{|V|-1}}\; \mathrm{KL}(p \,\Vert\, q) \quad \text{subject to} \quad q(v) \leq q(x_1) \;\;\text{for all } v \in V.
\]
Then $R_p(\mathcal{R}) = \min_{x_1 \in \mathcal{R}} M(x_1)$.
\end{lemma}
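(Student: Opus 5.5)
The plan is to show that the feasible set in the definition of $R_p(\mathcal{R})$ is exactly the union, over $x_1\in\mathcal{R}$, of the feasible sets defining $M(x_1)$. Infimum then commutes with the finite union, and the claimed identity follows. Concretely, let
\[
F := \{q\in\Delta^{|V|-1} : \arg\max_v q(v)\cap\mathcal{R}\neq\emptyset\}
\]
and
\[
F_{x_1} := \{q\in\Delta^{|V|-1} : q(v)\le q(x_1)\ \text{for all } v\in V\}.
\]
A distribution $q$ lies in $F_{x_1}$ exactly when $x_1$ is one of the modes of $q$, that is, $x_1\in\arg\max_v q(v)$. Hence $q\in F$ if and only if some $x_1\in\mathcal{R}$ satisfies $q\in F_{x_1}$, giving $F=\bigcup_{x_1\in\mathcal{R}}F_{x_1}$. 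This uses the worst-case tie convention of the setup: a draft is rejected whenever any mode of $q$ lies in $\mathcal{R}$, and this is precisely why non-strict inequalities appear in $M(x_1)$.

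Next, the infimum over a union is the infimum of the infima:
\[
R_p(\mathcal{R})=\inf_{q\in F}\mathrm{KL}(p\Vert q)=\inf_{x_1\in\mathcal{R}}\ \inf_{q\in F_{x_1}}\mathrm{KL}(p\Vert q)=\inf_{x_1\in\mathcal{R}}M(x_1).
\]
I would verify both inequalities explicitly. For ``$\le$'': each $F_{x_1}\subseteq F$. For ``$\ge$'': any $q\in F$ lies in some $F_{x_1}$, so $\mathrm{KL}(p\Vert q)\ge M(x_1)\ge\inf_{x_1} M(x_1)$.

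It remains to replace $\inf$ by $\min$ over $\mathcal{R}$. Since $V$ is finite, $\mathcal{R}$ is a finite set, so the infimum over it is attained as long as $\mathcal{R}\neq\emptyset$. If $\mathcal{R}=\emptyset$, both sides equal $+\infty$ under the usual conventions, and I would note this degenerate case separately.

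Each $M(x_1)$ is finite, since the uniform $q$ is feasible and $p$ has full support. I do not expect to need attainment of the inner infimum itself: the lemma is a statement about infima, and $\mathrm{KL}(p\Vert\cdot)$ may be $+\infty$ on the boundary of the simplex without affecting the argument. The only point needing care is the tie-breaking identification of $F$ with the union of the $F_{x_1}$. Once that is stated cleanly, the rest is bookkeeping.
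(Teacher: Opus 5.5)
Your proof is correct and takes the same approach as the paper. Both arguments rest on the observation that $q$ is feasible for $M(x_1)$ exactly when $x_1$ is a mode of $q$, so the feasible set of $R_p(\mathcal{R})$ is the union of the feasible sets $F_{x_1}$ over $x_1\in\mathcal{R}$. For the ``$\le$'' direction, your inclusion $F_{x_1}\subseteq F$ is slightly cleaner than the paper's appeal to a minimizer of $M(x_1)$, since it does not require the inner infimum to be attained.
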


\begin{proof}
The constraint in $M(x_1)$ forces $x_1$ to be a mode of $q$. If $\arg\max_v q(v) \cap \mathcal{R} \ne \emptyset$, then choosing $x_1$ to be that argmax shows $q$ is feasible for $M(x_1)$, so $\mathrm{KL}(p \,\Vert\, q) \geq M(x_1) \geq \min_{x_1 \in \mathcal{R}} M(x_1)$. 

Conversely, the minimizer of any $M(x_1)$ with $x_1 \in \mathcal{R}$ has $x_1$ as a mode, hence its argmax lies in $\mathcal{R}$, so it is feasible for $R_p(\mathcal{R})$. The two bounds coincide.
\end{proof}

\begin{lemma}[Single Token Minimizer]
\label{lem:single-kkt}
Fix $x_1 \in V \setminus \{x_0\}$. Let $c(x_1)$ be the unique level guaranteed by Lemma~\ref{lem:existence-of-c}, and set
\[
A(x_1) \;:=\; \{x_1\} \cup \{v \neq x_1 : p(v) > c(x_1)\}, \qquad s(x_1) \;:=\; \sum_{v \in A(x_1)} p(v).
\]
Then the minimizer of $M(x_1)$ is
\[
q^\ast(v) \;=\; \begin{cases} c(x_1) & v \in A(x_1), \\ p(v) & v \notin A(x_1), \end{cases}
\qquad\text{with}\qquad
c(x_1) \;=\; \frac{s(x_1)}{|A(x_1)|},
\]
and
\[
M(x_1) \;=\; s(x_1) \cdot \mathrm{KL}\!\left( \frac{p|_{A(x_1)}}{s(x_1)} \,\Bigm\Vert\, \mathrm{Unif}(A(x_1)) \right).
\]
\end{lemma}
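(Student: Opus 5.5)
The plan is to treat $M(x_1)$ as a convex program and verify the KKT conditions directly at the proposed $q^\ast$. Since $p$ has full support, minimizing $\mathrm{KL}(p\,\Vert\,q)$ over $q$ is the same as minimizing $-\sum_v p(v)\log q(v)$. This objective is strictly convex in $q$ on the open positive orthant. The constraints $\sum_v q(v)=1$ and $q(v)-q(x_1)\le 0$ for $v\neq x_1$ are all linear. Linear constraints satisfy Slater-type regularity automatically, so KKT is necessary and sufficient, and strict convexity makes the minimizer unique. It therefore suffices to exhibit multipliers $\lambda$ and $\mu_v\ge 0$ for $v\ne x_1$ that satisfy stationarity, feasibility and complementary slackness at $q^\ast$.

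Write the Lagrangian $-\sum_v p(v)\log q(v)+\lambda(\sum_v q(v)-1)+\sum_{v\ne x_1}\mu_v\,(q(v)-q(x_1))$. Stationarity gives two families of equations:
\[
\frac{p(v)}{q(v)}=\lambda+\mu_v \quad (v\neq x_1), \qquad \frac{p(x_1)}{q(x_1)}=\lambda-\sum_{v\neq x_1}\mu_v .
\]
I take $\lambda=1$. For $v\notin A(x_1)$ I set $\mu_v=0$, which gives $q^\ast(v)=p(v)\le c(x_1)$; this is feasible, and slackness holds trivially. For $v\in A(x_1)\setminus\{x_1\}$ the constraint is active with $q^\ast(v)=c(x_1)$. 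I set $\mu_v=p(v)/c(x_1)-1$, which is strictly positive because $p(v)>c(x_1)$. The remaining equation at $x_1$ then reads
\[
\frac{p(x_1)}{c(x_1)}=1-\sum_{v\in A(x_1)\setminus\{x_1\}}\Bigl(\frac{p(v)}{c(x_1)}-1\Bigr),
\]
and this rearranges exactly to $s(x_1)=|A(x_1)|\,c(x_1)$, the defining property of the level from Lemma~\ref{lem:existence-of-c}. Normalization then follows: $\sum_v q^\ast(v)=|A(x_1)|\,c(x_1)+\sum_{v\notin A(x_1)}p(v)=s(x_1)+(1-s(x_1))=1$. Finally, $q^\ast(v)\le c(x_1)=q^\ast(x_1)$ for every $v$, so $q^\ast$ is feasible and all KKT conditions hold.

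For the value, only coordinates in $A(x_1)$ contribute, since $q^\ast=p$ elsewhere. On $A(x_1)$ we have $p(v)/c(x_1)=|A(x_1)|\,p(v)/s(x_1)$, so
\[
M(x_1)=\sum_{v\in A(x_1)}p(v)\log\frac{p(v)/s(x_1)}{1/|A(x_1)|}=s(x_1)\,\mathrm{KL}\!\left(\frac{p|_{A(x_1)}}{s(x_1)}\,\Bigm\Vert\,\mathrm{Unif}(A(x_1))\right).
\]

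The main obstacle is self-consistency of the level: $c(x_1)$ appears both in the definition of $A(x_1)$ and as the average of $p$ over $A(x_1)$. This is exactly what Lemma~\ref{lem:existence-of-c} supplies, so I will invoke it rather than reprove it.

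One subtle point needs care: $x_1$ is placed in $A(x_1)$ regardless of how $p(x_1)$ compares to $c(x_1)$. I will check that $c(x_1)>0$, so that $q^\ast$ is strictly positive and the KL is finite. I will also check that no $\mu_v$ becomes negative; this holds because $\mu_v$ is nonzero only for $v$ with $p(v)>c(x_1)$. If Lemma~\ref{lem:existence-of-c} also guarantees $p(x_1)\le c(x_1)$, I will note it as consistent with the stationarity equation at $x_1$, though the KKT verification above does not need it.
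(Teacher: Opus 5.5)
Your proposal is correct and follows essentially the same route as the paper: verify the KKT conditions at $q^\ast$ with the simplex multiplier equal to $1$, zero inequality multipliers off $A(x_1)$, multipliers $p(v)/c(x_1)-1>0$ on $A(x_1)\setminus\{x_1\}$, reduce stationarity at $x_1$ to $|A(x_1)|\,c(x_1)=s(x_1)$, and then evaluate the objective. Your extra remarks are small but welcome tightenings of points the paper leaves implicit: strict convexity of $-\sum_v p(v)\log q(v)$ gives uniqueness of the minimizer, and $c(x_1)\ge p(x_1)>0$ makes $q^\ast$ strictly positive so the KL is finite.
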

\begin{proof}
First, we observe that the problem is convex, since KL is convex in its second argument and the constraints are linear. The interior of the simplex is strictly feasible, so KKT conditions are necessary and sufficient. 

With simplex multiplier $\mu$ and inequality multipliers $\lambda_v \geq 0$ for $v \neq x_1$, we obtain the Lagrangian as:
\[
\mathcal{L}(q, \lambda, \mu) \;=\; \mathrm{KL}(p \,\Vert\, q) + \sum_{v \neq x_1} \lambda_v\bigl(q(v) - q(x_1)\bigr) + \mu\Bigl(\sum_v q(v) - 1\Bigr)
\]
Stationarity gives the conditions
\[
-\frac{p(v)}{q(v)} + \lambda_v + \mu \;=\; 0 \quad \text{for } v \neq x_1, \qquad -\frac{p(x_1)}{q(x_1)} - \sum_{v \neq x_1} \lambda_v + \mu \;=\; 0.
\]
Let $c := c(x_1)$ and $A := A(x_1)$. Let us verify each KKT condition at $q^\ast$. For $v \notin A$, we have $q^\ast(v) = p(v)$, so stationarity forces $\lambda_v + \mu = 1$. The constraint $q^\ast(v) \leq q^\ast(x_1) = c$ holds since $p(v) \leq c$ for $v \notin A$. We set $\lambda_v = 0$, giving $\mu = 1$, and also satisfying complementary slackness for these inequality conditions.

For $v \in A \setminus \{x_1\}$, we have $q^\ast(v) = c$, and stationarity gives $\lambda_v = p(v)/c - \mu = p(v)/c - 1$. Since $v \in A \setminus \{x_1\}$ requires $p(v) > c$, we get $\lambda_v > 0$. The constraint $q^\ast(v) = q^\ast(x_1)$ is active, so complementary slackness holds.

The remaining stationarity condition at $x_1$ is
\[
\frac{p(x_1)}{c} \;=\; \mu - \sum_{v \neq x_1} \lambda_v \;=\; 1 - \sum_{v \in A \setminus \{x_1\}} \!\!\bigl(\frac{p(v)}{c} - 1\bigr).
\]
Multiplying through by $c$ and rearranging, this is equivalent to $|A| \cdot c = p(x_1) + \sum_{v \in A \setminus \{x_1\}} p(v) = \sum_{v \in A} p(v)$, which is the defining equation of $c$. Finally, $q^\ast$ satisfies the simplex constraint since by the same identity, $\sum_v q^\ast(v) = |A| \cdot c + \sum_{v \notin A} p(v) = \sum_{v \in A} p(v) + \sum_{v \notin A} p(v) = 1$.

At $q^\ast$, the terms for $v \notin A$ contribute zero to the KL since $q^\ast(v) = p(v)$. Writing $s := s(x_1)$ and $k := k(x_1) := |A|$, and using $c = s/k$,
\[
M(x_1) \;=\; \sum_{v \in A} p(v) \log\frac{p(v)}{c} \;=\; \sum_{v \in A} p(v) \log\frac{k \cdot p(v)}{s} \;=\; s \sum_{v \in A} \frac{p(v)}{s} \log\frac{p(v)/s}{1/k}.
\]
The final expression is exactly $s \cdot \mathrm{KL}\bigl(p|_A / s \,\Vert\, \mathrm{Unif}(A)\bigr)$.
\end{proof}

\begin{lemma}[Minimizing token]
\label{lem:single-swap}
$M(x_1)$ is non-increasing in $p(x_1)$. Consequently, if $\mathcal{R}$ has a most-probable element $x^\ast := \arg\max_{v \in \mathcal{R}} p(v)$, then $\min_{x_1 \in \mathcal{R}} M(x_1) = M(x^\ast)$.
\end{lemma}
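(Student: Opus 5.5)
The cleanest route avoids the explicit formula of Lemma~\ref{lem:single-kkt} and argues directly from the definition of $M(x_1)$ by a coordinate swap. Suppose $x_1, x_2 \in V \setminus \{x_0\}$ with $p(x_2) \geq p(x_1)$. We want to show $M(x_2) \leq M(x_1)$.

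Take any $q$ feasible for $M(x_1)$, so that $q(x_1) \geq q(v)$ for all $v$. Define $\tilde q$ by exchanging the coordinates of $x_1$ and $x_2$: $\tilde q(x_2) = q(x_1)$, $\tilde q(x_1) = q(x_2)$, and $\tilde q(v) = q(v)$ otherwise. Then $\tilde q$ is a distribution whose maximum value sits at $x_2$, so $\tilde q$ is feasible for $M(x_2)$.

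The KL terms outside $\{x_1, x_2\}$ are unchanged, so
\[
\mathrm{KL}(p\Vert q)-\mathrm{KL}(p\Vert\tilde q) = \bigl(p(x_2)-p(x_1)\bigr)\bigl(\log q(x_1)-\log q(x_2)\bigr).
\]
To verify this, expand $-p(x_1)\log q(x_1) - p(x_2)\log q(x_2) + p(x_1)\log q(x_2) + p(x_2)\log q(x_1)$ and collect terms; the $p\log p$ terms cancel. Both factors on the right are nonnegative: the first by the assumption $p(x_2)\ge p(x_1)$, the second because $q(x_1)\ge q(x_2)$ by feasibility. Hence $\mathrm{KL}(p\Vert\tilde q)\le \mathrm{KL}(p\Vert q)$.

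Taking the infimum over feasible $q$ gives $M(x_2)\le M(x_1)$, which is the claimed monotonicity. No attainment is needed for this step. The only subtlety is $q(x_2)=0$. In that case $\mathrm{KL}(p\Vert q)=\infty$ because $p$ has full support, so the inequality holds trivially. Since feasible $q$ with finite KL have full support, the logarithms are all well defined.

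For the consequence, every $x_1\in\mathcal R$ satisfies $p(x_1)\le p(x^\ast)$, so monotonicity gives $M(x^\ast)\le M(x_1)$. Since $x^\ast\in\mathcal R$, we conclude $\min_{x_1\in\mathcal R}M(x_1)=M(x^\ast)$.

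The main obstacle is only the sign bookkeeping in the swap identity; the rest is immediate. An alternative route through the formula $s(x_1)\,\mathrm{KL}(\cdot\Vert\mathrm{Unif})$ would be messier, since $A(x_1)$ changes with $x_1$. The swap argument sidesteps that entirely.
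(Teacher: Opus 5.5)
Your proof is correct and uses the same coordinate-swap argument as the paper. The paper also exchanges $q(x_i)$ and $q(x_j)$ and uses the identity $\mathrm{KL}(p\Vert q)-\mathrm{KL}(p\Vert q')=\bigl(p(x_j)-p(x_i)\bigr)\log\frac{q(x_i)}{q(x_j)}\ge 0$. The only difference is that the paper swaps coordinates of the minimizer supplied by Lemma~\ref{lem:single-kkt}, whereas you swap coordinates of an arbitrary feasible $q$ and pass to the infimum, so your version does not depend on the infimum being attained.
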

\begin{proof}
Let $x_i, x_j$ with $p(x_i) \leq p(x_j)$, and let $q$ be the minimizer of $M(x_i)$ from Lemma~\ref{lem:single-kkt}. Since $x_i$ is a mode of $q$, $q(x_i) \geq q(x_j)$. Define $q'$ by swapping the values $q(x_i)$ and $q(x_j)$. Then $q'(x_j) = q(x_i) \geq q(v)$ for all $v$, so $x_j$ is a mode of $q'$ and $q'$ is feasible for $M(x_j)$. A direct computation gives
\[
\mathrm{KL}(p \,\Vert\, q) - \mathrm{KL}(p \,\Vert\, q') \;=\; \bigl(p(x_j) - p(x_i)\bigr) \log\frac{q(x_i)}{q(x_j)} \;\geq\; 0,
\]
as both factors are non-negative. Hence $M(x_j) \leq M(x_i)$, establishing monotonicity. The minimum over $\mathcal{R}$ is therefore attained at its most probable element $x^\ast$.
\end{proof}

\begin{theorem}[Exact certificate]
\label{thm:single}
Let $\mathcal{R} \subseteq V \setminus \{x_0\}$ be a rejection region with a most-probable element $x^\ast := \arg\max_{v \in \mathcal{R}} p(v)$, and let $c, A := A(x^\ast), s := s(x^\ast)$ be as in Lemma~\ref{lem:single-kkt}. Then
\[
R_p(\mathcal{R}) \;=\; s \cdot \mathrm{KL}\!\left( \frac{p|_A}{s} \,\Bigm\Vert\, \mathrm{Unif}(A) \right).
\]
In particular, $\mathrm{KL}(p \,\Vert\, q) < R_p(\mathcal{R})$ guarantees acceptance under the criterion whose rejection region is $\mathcal{R}$.
\end{theorem}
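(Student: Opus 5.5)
The proof chains the three preceding lemmas. First I apply Lemma~\ref{lem:single-reduction}, which gives $R_p(\mathcal{R}) = \min_{x_1 \in \mathcal{R}} M(x_1)$. This turns the infimum over all drafters whose argmax meets $\mathcal{R}$ into a minimum over finitely many single-token problems, each asking that a fixed token $x_1$ be a mode of $q$.

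Second, I use Lemma~\ref{lem:single-swap}. Because $M(x_1)$ is non-increasing in $p(x_1)$ and $\mathcal{R}$ is assumed to have a most-probable element $x^\ast$, the minimum over $\mathcal{R}$ is attained at $x^\ast$, so $R_p(\mathcal{R}) = M(x^\ast)$. If several elements of $\mathcal{R}$ tie for the largest $p$-value, monotonicity in both directions makes their $M$-values equal, so any choice of $x^\ast$ works. Third, I apply Lemma~\ref{lem:single-kkt} with $x_1 = x^\ast \in \mathcal{R} \subseteq V\setminus\{x_0\}$, which is a legitimate choice. It gives $M(x^\ast) = s\cdot \mathrm{KL}(p|_A/s \,\Vert\, \mathrm{Unif}(A))$, where $c = c(x^\ast)$ is the level from Lemma~\ref{lem:existence-of-c}, $A = A(x^\ast)$ and $s = s(x^\ast)$. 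Combining the three steps yields the displayed formula.

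For the ``in particular'' clause I argue by contrapositive. Suppose the draft is rejected, meaning $\arg\max_v q(v)\cap\mathcal{R}\neq\emptyset$ under the worst-case tie-breaking convention. Then $q$ lies in the feasible set defining $R_p(\mathcal{R})$, so $\mathrm{KL}(p\Vert q)\ge R_p(\mathcal{R})$. Hence $\mathrm{KL}(p\Vert q)<R_p(\mathcal{R})$ forces acceptance.

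The only delicate point is that the reduction lemma's converse direction needs the minimizer of $M(x_1)$ to exist. Lemma~\ref{lem:single-kkt} supplies it explicitly, so the infimum is attained and the argument is otherwise bookkeeping.
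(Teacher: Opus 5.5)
Your proposal is correct and follows the paper's proof: the reduction lemma gives $R_p(\mathcal{R})=\min_{x_1\in\mathcal{R}}M(x_1)$, the swap lemma evaluates this at $x^\ast$, and the KKT lemma supplies the closed form; your contrapositive for the ``in particular'' clause and your remark on ties are sound additions. As a minor aside, the converse direction of the reduction does not need the minimizer of $M(x_1)$ to exist: any $q$ feasible for $M(x_1)$ already has $x_1\in\arg\max_v q(v)$ and is therefore feasible for $R_p(\mathcal{R})$, so the inequality between the infima follows from the containment of feasible sets alone.
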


\begin{proof}
Combine Lemmas~\ref{lem:single-reduction}, \ref{lem:single-kkt}, and~\ref{lem:single-swap}: the reduction gives $R_p(\mathcal{R}) = \min_{x_1 \in \mathcal{R}} M(x_1)$, the swap lemma evaluates this at $x^\ast$, and the single-token minimizer lemma gives the closed form of $M(x^\ast)$.
\end{proof}

\subsection{General Lower Bound}

To obtain more interpretable bounds we show that the problem can be relaxed to a two token problem. For all our acceptance criteria below, this two-token relaxation gives tight worst-case bounds based on relevant margins and thresholds. While the exact certificate depends on $p$, the asymptotic behavior is captured by the two-token construction. 

Now, for $a, b > 0$ and $d \in [0,1)$, define
\[
G(a, b) \;:=\; a \log\frac{2a}{a+b} + b \log\frac{2b}{a+b}, \qquad g(d) \;:=\; \frac{1+d}{2}\log(1+d) + \frac{1-d}{2}\log(1-d),
\]
We can see that these are related by 
\[
G(a, b) = (a + b)\, g\left(\frac{a - b}{a + b}\right).
\]

\begin{lemma}[Two-token lower bound]
\label{lem:two-token-bound}
For any rejection region $\mathcal{R}$ with most-probable element $x^\ast$ under $p$,
\[
R_p(\mathcal{R}) \;\geq\; G\bigl(p(x_0),\, p(x^\ast)\bigr) \;=\; \bigl(p(x_0) + p(x^\ast)\bigr)\, g\!\left( \frac{p(x_0) - p(x^\ast)}{p(x_0) + p(x^\ast)} \right).
\]
Moreover $G$ is decreasing in its second argument: $\frac{\partial G}{\partial b} = \log\bigl(\frac{2b}{a+b}\bigr) < 0$ for $b < a$. Hence if $\mathcal{R} \subseteq \{v : p(v) \leq c\}$ for some threshold $c < p(x_0)$,
\[
R_p(\mathcal{R}) \;\geq\; G\bigl(p(x_0), c\bigr) \;=\; \bigl(p(x_0) + c\bigr)\, g\!\left( \frac{p(x_0) - c}{p(x_0) + c} \right).
\]
\end{lemma}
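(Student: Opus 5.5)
Using Theorem~\ref{thm:single}, I reduce to bounding $M(x^\ast)$ from below. The idea is a data-processing (coarse-graining) argument, and I then compare the result with the restricted problem on the two coordinates $x_0$ and $x^\ast$.

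First, fix any $q$ feasible for $M(x^\ast)$, so $q(x^\ast) \ge q(v)$ for all $v$, in particular $q(x^\ast)\ge q(x_0)$. I split $\mathrm{KL}(p\Vert q)$ into the contribution from $\{x_0,x^\ast\}$ and the rest. By the log-sum inequality, the terms outside $\{x_0,x^\ast\}$ total at least $(1-a-b)\log\frac{1-a-b}{1-q(x_0)-q(x^\ast)}$, writing $a=p(x_0)$ and $b=p(x^\ast)$. The two retained terms are $a\log\frac{a}{q(x_0)}+b\log\frac{b}{q(x^\ast)}$. Set $r:=q(x_0)+q(x^\ast)$ and $\theta:=q(x_0)/r\le 1/2$. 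The total lower bound then splits as
\[
(a+b)\log\tfrac{a+b}{r} + (1-a-b)\log\tfrac{1-a-b}{1-r} + (a+b)\,\mathrm{KL}\bigl(\mathrm{Ber}(\tfrac{a}{a+b})\Vert \mathrm{Ber}(\theta)\bigr).
\]
The first two terms form a binary KL divergence, which is nonnegative and vanishes at $r=a+b$. For the third term, $\tfrac{a}{a+b} > 1/2 \ge \theta$. Binary KL is monotone in $\theta$ on the side away from its first argument, so its minimum over $\theta \le 1/2$ is attained at $\theta=1/2$. That minimum is $(a+b)\,\mathrm{KL}(\mathrm{Ber}(\tfrac{a}{a+b})\Vert\mathrm{Ber}(\tfrac12)) = a\log\frac{2a}{a+b}+b\log\frac{2b}{a+b}=G(a,b)$. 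Taking the infimum over feasible $q$ gives $M(x^\ast)\ge G(a,b)$, and hence $R_p(\mathcal{R})\ge G(p(x_0),p(x^\ast))$. The identity $G(a,b)=(a+b)g(\frac{a-b}{a+b})$ is a direct substitution, already noted before the statement.

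Next, for the derivative, I differentiate $G(a,b)=a\log 2a + b\log 2b - (a+b)\log(a+b)$ in $b$. This gives $\log 2b + 1 - \log(a+b) - 1 = \log\frac{2b}{a+b}$, which is negative exactly when $b<a$.

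Finally, for the threshold version, $\mathcal{R}\subseteq\{v: p(v)\le c\}$ gives $p(x^\ast)\le c<p(x_0)$. On the interval $[p(x^\ast),c]$, which lies inside $(0,a)$, $G(a,\cdot)$ is decreasing, so $G(a,p(x^\ast))\ge G(a,c)$. Combining this with the first bound gives the claim.

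The main obstacle is the constrained minimization in $\theta$. One must check that the feasibility constraint only forces $\theta\le 1/2$ and that the binary KL is monotone there. Discarding the other mode constraints is fine because dropping constraints only lowers the infimum.
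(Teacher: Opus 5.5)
Your proposal is correct and follows the paper's route: reduce to $M(x^\ast)$, keep only the constraint $q(x_0)\le q(x^\ast)$, evaluate the relaxed problem as $G(p(x_0),p(x^\ast))$, and get the threshold form from $\partial G/\partial b<0$. Your log-sum/chain-rule decomposition proves what the paper only asserts, namely that the relaxed optimizer equalizes $q(x_0)$ and $q(x^\ast)$ and leaves the other coordinates at $p$.
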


\begin{proof}
By Lemma~\ref{lem:single-reduction} and~\ref{lem:single-swap}, $R_p(\mathcal{R}) = M(x^\ast)$. Relax the defining problem of $M(x^\ast)$ by keeping only the single constraint $q(x_0) \leq q(x^\ast)$ and dropping all the other conditions of the form $q(v) \leq q(x^\ast)$. Fewer constraints can only lower the minimum. In this relaxed problem, the optimizer equalizes $q(x_0)$ and $q(x^\ast)$ and leaves all other coordinates unchanged, so the resulting value is exactly $G(p(x_0), p(x^\ast))$. The monotonicity of $G$ in its second argument follows by direct differentiation, and applying it with $p(x^\ast) \leq c$ gives the threshold form.
\end{proof}

\subsection{General Upper Bound}

In this section, for any single draft token acceptance criterion, we show a general upper bound on the certificates. We find that this is $\leq 2\log(2)/|A|$ where $A$ is the active set of tokens from Lemma~\ref{lem:single-kkt}. Since $|A| \geq 2$ for any rejection region, this implies a universal $\log(2)$ ceiling on the KL thresholds. Since every single token acceptance threshold is at most $\log(2)$, no criterion of this form can guarantee acceptance for all $q$ with $\mathrm{KL}(p \,\Vert\, q) > \log(2)$.

\begin{theorem}[Upper bound for single-token certificates]
\label{thm:log2-ceiling}
Let $\mathcal{R} \subseteq V \setminus \{x_0\}$ be any rejection region with a most-probable element, and let $R_p(\mathcal{R})$ be the exact certificate of Theorem~\ref{thm:single}. Let $k := |A|$ denote the size of the active set at the optimum. Then $k \geq 2$ and
\[
R_p(\mathcal{R}) \;\leq\; \frac{2}{k}\, \log 2 \;\leq\; \log 2,
\]
with the second inequality strict for $k \geq 3$. The bound is sharp in the limit for $k = 2$, $p(x_0) \to 1$, $p(x^\ast) \to 0$, both inequalities are attained.
\end{theorem}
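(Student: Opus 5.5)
The plan is to bound the closed form from Theorem~\ref{thm:single} directly. That theorem gives $R_p(\mathcal{R}) = M(x^\ast) = \sum_{v\in A} p(v)\log\frac{p(v)}{c}$ with $c = s/k$. I will then maximize this expression over all admissible values of $p$ on $A$, using only the structural facts from Lemma~\ref{lem:single-kkt}.

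First I show $k\ge 2$, specifically that $x_0\in A$. Suppose $x_0\notin A$. Then $p(x_0)\le c$. Every $v\in A\setminus\{x^\ast\}$ satisfies $p(v)>c\ge p(x_0)$, which contradicts $x_0$ being the argmax, unless $A=\{x^\ast\}$. In that case $c=p(x^\ast)<p(x_0)$, so $x_0$ would belong to $A$ after all, again a contradiction. Hence $\{x^\ast,x_0\}\subseteq A$ and $k\ge 2$.

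Next, fix $c$ and $k$ and record the constraints on the vector $(p(v))_{v\in A}$:
\[
\sum_{v\in A}p(v)=kc,\qquad p(x^\ast)\ge 0,\qquad p(v)\ge c \text{ for } v\in A\setminus\{x^\ast\}.
\]
Since $A\subseteq V$ and $p$ is a probability vector, these also give $kc = s\le 1$, so $c\le 1/k$.

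The objective $\sum_{v\in A} p(v)\log(p(v)/c)$ is convex in this vector, and the feasible set is a (closure of a) polytope. So the supremum is attained at a vertex. A vertex saturates all but one inequality: $p(x^\ast)=0$, all but one of the other $v$ sit at $c$, and the remaining one (w.l.o.g.\ $x_0$) takes mass $2c$. One should also check the vertices where $p(x^\ast)>0$ is the free coordinate. At such a vertex $p(x^\ast)=2c$, which violates the requirement $p(x^\ast)\le c$ that comes from $x^\ast\notin$ the strict level set. So under that constraint the only relevant vertex is the one above.

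Evaluating at that vertex gives $2c\log 2 + (k-2)c\log 1 = 2c\log 2 \le \frac{2}{k}\log 2$. For $k\ge 3$ this is strictly below $\log 2$.

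I expect the main obstacle to be the vertex enumeration. It requires carefully including $p(x^\ast)\le c$, which holds because $x^\ast$ has the smallest mass in $A$ and the mean over $A$ is $c$. It also requires noting that the boundary $p(x^\ast)=0$ is only approached, because $p$ has full support, so the bound is a supremum.

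For sharpness with $k=2$, take $A=\{x_0,x^\ast\}$ with $p(x_0)\to 1$ and $p(x^\ast)\to 0$, with the leftover mass spread thinly so that every other token stays below $c$. Then $c\to 1/2$ and $G(p(x_0),p(x^\ast))\to \log 2$. This matches both $\frac{2}{2}\log 2$ and $\log 2$.
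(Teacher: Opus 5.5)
Your proposal is correct and follows the paper's argument: convexity of the KL-to-uniform objective over the polytope cut out by the active-set inequalities, a maximizing vertex of the form $(2c,c,\ldots,c,0)$, then $s=kc\le 1$. You work in $c$-scaled rather than $s$-normalized coordinates, and your proof that $x_0\in A$ is a welcome addition, since the paper only asserts it.

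One slip to fix. The vertex where $p(x^\ast)$ is the free coordinate has $p(x^\ast)=kc-(k-1)c=c$, not $2c$. That vertex is the uniform point, which is feasible; indeed $p(x^\ast)\le c$ is already implied by your other constraints. You should dismiss it because the objective equals $0$ there, not because it is infeasible.
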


\begin{proof}
By Theorem~\ref{thm:single} and Lemma~\ref{lem:single-kkt},
\[
R_p(\mathcal{R}) \;=\; s \cdot \mathrm{KL}\!\left( \frac{p|_A}{s} \,\Bigm\Vert\, \mathrm{Unif}(A) \right),
\]
where $A = \{x^\ast\} \cup \{v \neq x^\ast : p(v) > c\}$ is the active set used in computing $c$, $s := \sum_{v \in A} p(v)$. The active set always contains $x^\ast$ and $x_0$ so $k \geq 2$. 

Write $\nu \in \Delta^{k-1}$ for the distribution $\nu_v := p(v)/s$ on $A$. By construction of $A$, the $k - 1$ tokens $v \in A \setminus \{x^\ast\}$ have $p(v) > c = s/k$, hence $\nu_v > 1/k$, and for $x^\ast$ we have $\nu_{x^\ast} < 1/k$.
Thus $\nu$ lies in the constraint set
\[
\Omega_k \;:=\; \Bigl\{ \nu \in \Delta^{k-1} : \nu_v > \tfrac{1}{k} \text{ for } k - 1 \text{ indices},\; \nu_{x^\ast} < \tfrac{1}{k} \text{ for the remaining index} \Bigr\}.
\]
In terms of $\nu$ and $s$,
\[
R_p(\mathcal{R}) \;=\; s\bigl(\log k - H(\nu)\bigr).
\]
Since $s \leq 1$, it suffices to show $\log k - H(\nu) \leq (2/k) \log 2$ for all $\nu \in \overline{\Omega}_k$, where $\overline{\Omega}_k$ is the closure of $\Omega_k$ obtained by replacing strict inequalities with non-strict ones.

Now, the function $\log k - H(\nu)$ is convex in $\nu$ (since $H$ is concave), and $\overline{\Omega}_k$ is a polytope. Hence the maximum of $\log k - H(\nu)$ over $\overline{\Omega}_k$ is attained at an extreme point. 

Relabel the coordinates so that $x^\ast$ corresponds to index $k$. Then
\[
\overline{\Omega}_k
=
\left\{
\nu\in\Delta^{k-1}:
\nu_i\geq \frac1k \text{ for } i=1,\ldots,k-1,
\quad
0\leq \nu_k\leq \frac1k
\right\}.
\]

The extreme points of $\overline{\Omega}_k$ are determined by which inequalities are tight, so the active constraints are:
\[
\nu_i = 1/k \;\;(i = 1, \ldots, k-1), \qquad \nu_k = 0 \;\text{or}\; \nu_k = 1/k, \qquad \textstyle\sum_i \nu_i = 1.
\]
An extreme point of a polytope in $\mathbb{R}^k$ has at least $k$ active constraints. Among these, the simplex equality is always active. The remaining $k - 1$ active constraints fix all but one $\nu_i$ to either $1/k$ or $0$. We have two cases:
\begin{enumerate}
    \item[(a)] $\nu_k = 0$. Then $k - 2$ of the indices $i \in \{1, \ldots, k-1\}$ are fixed at $1/k$, and the remaining one absorbs the residual mass: $\nu_1 = 1 - (k-2)/k = 2/k$ (WLOG relabel this to index $1$).
    \item[(b)] $\nu_k = 1/k$. Then $k - 2$ of the indices $i \in \{1, \ldots, k-1\}$ are fixed at $1/k$, and the remaining one is WLOG $\nu_1 = 1 - (k-1)/k = 1/k$. This makes $\nu = \mathrm{Unif}(A)$.
\end{enumerate}
Case (b) attains 0, so the maximum is attained at case (a). 

At the case (a) extreme point $\nu^\ast = (2/k,\, 1/k,\, \ldots,\, 1/k,\, 0)$ with $k - 2$ copies of $1/k$,
\begin{align*}
H(\nu^\ast) &\;=\; -\frac{2}{k} \log \frac{2}{k} \;-\; (k-2) \cdot \frac{1}{k} \log \frac{1}{k} \;-\; 0 \cdot \log 0 \\
&\;=\; \log k \;-\; \frac{2}{k}\, \log 2,
\end{align*}
Therefore
\[
\log k - H(\nu^\ast) \;=\; \frac{2}{k}\, \log 2,
\]
and by convexity,
\[
\sup_{\nu \in \overline{\Omega}_k} \bigl(\log k - H(\nu)\bigr) \;=\; \frac{2}{k}\, \log 2.
\]
Multiplying by $s \leq 1$ gives $R_p(\mathcal{R}) \leq (2/k) \log 2$, establishing the first inequality of the proposition.

For sharpness at $k = 2$ we take $V = \{x_0, x^\ast\}$ with $p(x_0) = 1 - \epsilon$, $p(x^\ast) = \epsilon$ for any criterion whose rejection region contains $x^\ast$ when $p(x^\ast) = \epsilon$ (for example, additive relaxed with $t = 1 - 2\epsilon$, or strict greedy). The active set is $A = \{x_0, x^\ast\}$ and
\[
R_p(\mathcal{R}) \;=\; G\bigl(1 - \epsilon,\, \epsilon\bigr) \;=\; (1 - \epsilon) \log\bigl(2(1-\epsilon)\bigr) + \epsilon \log(2\epsilon) \;\xrightarrow{\epsilon \to 0}\; \log 2.
\]
Hence the upper bound is sharp in the limit.
\end{proof}

\subsection{Greedy Decoding}
\label{sec:greedy}

The first setting that we apply the single-token acceptance result is to the strict greedy token acceptance case. Under strict greedy decoding, the draft token is accepted if and only if it matches the target argmax $x_0$ which we assume is unique. Let
\[
x_1 \;:=\; \arg\max_{v \neq x_0} p(v), \qquad \gamma_p \;:=\; p(x_0) - p(x_1),
\]
so $\gamma_p$ is the target margin. The rejection region is
\[
\mathcal{R}_{\mathrm{greedy}} \;=\; V \setminus \{x_0\} \;=\; \{v \in V : p(v) \leq p(x_1)\}.
\]
Theorem~\ref{thm:single} therefore specializes immediately to the following exact certificate.

\begin{corollary}[Exact KL certificate for greedy agreement]
\label{cor:exact-greedy-certificate}
Let $p \in \Delta^{|V|-1}$ have a unique argmax $x_0$, and let $x_1 \in \arg\max_{v \neq x_0} p(v)$. Define
\[
M^\ast(p) \;:=\; p(x_0) \log\!\frac{2\,p(x_0)}{p(x_0) + p(x_1)} \;+\; p(x_1) \log\!\frac{2\,p(x_1)}{p(x_0) + p(x_1)}.
\]
Then
\[
\inf\bigl\{\,\mathrm{KL}(p \,\Vert\, q) \;:\; \arg\max_v q(v) \cap \mathcal{R}_{\mathrm{greedy}} \neq \emptyset \,\bigr\} \;=\; M^\ast(p),
\]
and the infimum is attained by $q^\ast$ with $q^\ast(x_0) = q^\ast(x_1) = \frac{1}{2}(p(x_0)+p(x_1))$ and $q^\ast(v) = p(v)$ for $v \notin \{x_0, x_1\}$. Consequently, if $\mathrm{KL}(p \,\Vert\, q) < M^\ast(p)$, the greedy speculative draft is accepted.
\end{corollary}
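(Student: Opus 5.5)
The plan is to specialize Theorem~\ref{thm:single} to $\mathcal{R}_{\mathrm{greedy}} = V\setminus\{x_0\}$, whose most-probable element is $x^\ast = x_1$. Combining Lemma~\ref{lem:single-reduction} and Lemma~\ref{lem:single-swap} gives $R_p(\mathcal{R}_{\mathrm{greedy}}) = M(x_1)$. Lemma~\ref{lem:single-kkt} then supplies the minimizer explicitly, once we know the active set $A(x_1)$ and the level $c(x_1)$. Everything therefore reduces to identifying these two quantities in the greedy case.

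\textbf{Identifying the active set.} I would guess $A = \{x_0, x_1\}$ with $c = \tfrac12(p(x_0)+p(x_1))$, and check that this is self-consistent in the sense of Lemma~\ref{lem:single-kkt}: every $v \notin A$ must satisfy $p(v) \le c$, and every $v\in A\setminus\{x_1\}$ must satisfy $p(v) > c$.
\begin{itemize}
\item The second condition is $p(x_0) > \tfrac12(p(x_0)+p(x_1))$. This is equivalent to $p(x_0) > p(x_1)$, which holds by uniqueness of the argmax.
\item For the first condition, any $v\notin\{x_0,x_1\}$ has $p(v)\le p(x_1)$, since $x_1$ is the largest non-argmax token. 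Also $p(x_1) < c$, so $p(v) \le c$.
\end{itemize}
Because $c(x_1)$ is the unique level from Lemma~\ref{lem:existence-of-c}, this self-consistent choice must be it. If one prefers not to lean on uniqueness, the fallback is to verify the KKT conditions directly at this $q^\ast$, exactly as in the proof of Lemma~\ref{lem:single-kkt}, using the multipliers $\mu=1$, $\lambda_{x_0} = p(x_0)/c - 1 > 0$, and $\lambda_v = 0$ otherwise. By convexity, these KKT conditions certify global optimality.

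\textbf{Computing the value.} With $A=\{x_0,x_1\}$, $s = p(x_0)+p(x_1)$ and $k=2$, the formula $M(x_1)=\sum_{v\in A} p(v)\log\bigl(p(v)/c\bigr)$ becomes $p(x_0)\log\frac{2p(x_0)}{p(x_0)+p(x_1)} + p(x_1)\log\frac{2p(x_1)}{p(x_0)+p(x_1)}$. This is exactly $M^\ast(p) = G(p(x_0),p(x_1))$. The minimizer $q^\ast$ is the one stated, and it attains the infimum because $x_1$ is a mode of $q^\ast$ and $x_1$ lies in $\mathcal{R}_{\mathrm{greedy}}$.

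\textbf{Acceptance consequence.} The final claim follows immediately from the definition of the infimum. If $\mathrm{KL}(p\Vert q) < M^\ast(p)$, then no mode of $q$ lies in $V\setminus\{x_0\}$. Under the worst-case tie-breaking convention, the drafted token must then be $x_0$, so the greedy draft is accepted.

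\textbf{Main obstacle.} The delicate step is the active-set identification, specifically the case where several tokens tie with $p(x_1)$. Such tied tokens satisfy $p(v) = p(x_1) < c$, so they correctly fall outside $A$ and the argument still goes through. The only other thing to confirm is that the strict inequality $p(x_0) > c$ is used where Lemma~\ref{lem:single-kkt} requires positivity of $\lambda$.
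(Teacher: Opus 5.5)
Your proposal is correct and follows the paper's proof essentially verbatim. You specialize Theorem~\ref{thm:single} to $\mathcal{R}_{\mathrm{greedy}}$ with most-probable element $x_1$, identify the active set $A=\{x_0,x_1\}$ with $c(x_1)=\tfrac12(p(x_0)+p(x_1))$ via $p(v)\le p(x_1)<c<p(x_0)$, and read off $M(x_1)=G(p(x_0),p(x_1))=M^\ast(p)$ and the minimizer from Lemma~\ref{lem:single-kkt}; your explicit check of the fixed-point equation of Lemma~\ref{lem:existence-of-c} and the optional KKT verification just spell out a step the paper leaves implicit.
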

\begin{proof}
This is a direct application of Theorem~\ref{thm:single} with $\mathcal{R}_{\mathrm{greedy}}$. The most-probable element of $\mathcal{R}_{\mathrm{greedy}}$ is $x_1$. Since every $v \notin \{x_0, x_1\}$ has $p(v) \leq p(x_1) < \frac{1}{2}(p(x_0) + p(x_1)) = c(x_1)$, the active set in Lemma~\ref{lem:single-kkt} is exactly $A = \{x_0, x_1\}$. The resulting formula is $M(x_1) = G(p(x_0), p(x_1))$, which is precisely $M^\ast(p)$, and the stated $q^\ast$ is the corresponding minimizer from Lemma~\ref{lem:single-kkt}.
\end{proof}

Before we use $G$ to analyze the behavior of $M^\ast(p)$, we first give a simple sufficient condition for greedy acceptance that follows from Pinsker's inequality.

\begin{proposition}[Simple Margin Condition for Greedy Acceptance]
\label{prop:simple-margin}
Suppose that $\mathrm{KL}(p \,\Vert\, q) \leq \epsilon$. If $\gamma_p > \sqrt{2\epsilon}$, then the draft token is always accepted under greedy decoding.
\end{proposition}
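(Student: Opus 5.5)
The plan is a direct argument through Pinsker's inequality, which bypasses the exact certificate of Corollary~\ref{cor:exact-greedy-certificate}. Pinsker gives
\[
\mathrm{TV}(p,q) \le \sqrt{\tfrac{1}{2}\mathrm{KL}(p\Vert q)} \le \sqrt{\epsilon/2},
\]
where $\mathrm{TV}(p,q) = \tfrac12\sum_v |p(v)-q(v)| = \sup_{S\subseteq V}|p(S)-q(S)|$. The goal is to show that no token $v\neq x_0$ can be a mode of $q$. Under the worst-case tie-breaking convention of Section~\ref{sec:single-setup}, it suffices to prove the strict inequality $q(x_0) > q(v)$ for every $v \neq x_0$.

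\textbf{Key step.} Fix $v \neq x_0$ and bound the change of the pairwise gap $q(x_0)-q(v)$ relative to $p(x_0)-p(v)$ by a single TV quantity, not by two separate coordinate deviations. Writing $\delta_u := q(u)-p(u)$, we have
\[
q(x_0)-q(v) = p(x_0)-p(v) + \delta_{x_0} - \delta_v .
\]
We need $|\delta_{x_0}-\delta_v| \le 2\,\mathrm{TV}(p,q)$. This follows from $|\delta_{x_0}|+|\delta_v| \le \sum_u |\delta_u| = 2\,\mathrm{TV}(p,q)$. Hence
\[
q(x_0)-q(v) \ge p(x_0)-p(v) - 2\,\mathrm{TV}(p,q) \ge \gamma_p - \sqrt{2\epsilon},
\]
using $p(v)\le p(x_1)$ and $2\sqrt{\epsilon/2}=\sqrt{2\epsilon}$. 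Under the hypothesis $\gamma_p>\sqrt{2\epsilon}$ this is strictly positive, so $\arg\max_v q(v)=\{x_0\}$, and Algorithm~\ref{alg:greedy-spec} accepts the draft.

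\textbf{Main obstacle.} The only delicate point is the constant. The naive bound uses $|\delta_u|\le\mathrm{TV}$ for each coordinate separately, which also gives $2\,\mathrm{TV}$, so both routes reach the stated $\sqrt{2\epsilon}$. Either way, the constant matches the normalization in Pinsker's inequality (natural logarithms, $\mathrm{TV}\le\sqrt{\mathrm{KL}/2}$). I expect the bound to be loose compared with $M^\ast(p)$, since $G(a,b)\approx (a-b)^2/(2(a+b))$ for small margins. I would also note that it is consistent with Lemma~\ref{lem:two-token-bound}: when $\gamma_p>\sqrt{2\epsilon}$, one checks $G(p(x_0),p(x_1)) \ge \gamma_p^2/2 > \epsilon$, using $g(d)\ge d^2/2$ and $(a+b)\,d^2 = \gamma_p^2/(a+b) \ge \gamma_p^2$. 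This gives an alternative proof via Corollary~\ref{cor:exact-greedy-certificate} that I would mention as a cross-check.
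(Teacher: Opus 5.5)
Your proof is correct and is essentially the paper's argument. The paper also applies Pinsker and then uses the per-coordinate bound $|p(v)-q(v)|\le\sqrt{\epsilon/2}$ (your ``naive'' route) to get $q(v)\le p(x_1)+\sqrt{\epsilon/2}<p(x_0)-\sqrt{\epsilon/2}\le q(x_0)$; this carries the same $2\,\mathrm{TV}$ slack as your $|\delta_{x_0}|+|\delta_v|\le 2\,\mathrm{TV}$ step, and your cross-check via $G(p(x_0),p(x_1))\ge\gamma_p^2/2$ and Corollary~\ref{cor:exact-greedy-certificate} is also valid.
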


\begin{proof}

By Pinsker's inequality, we know that:
\[
\mathrm{TV}(p, q) \;\leq\; \sqrt{\frac{1}{2}\, \mathrm{KL}(p \,\Vert\, q)}.
\]

In particular,
$ \forall v \in V: |p(v) - q(v)| \leq \sqrt{\epsilon/2}$

This implies that for any token $v$:
\[
q(v) \leq p(v) + \sqrt{\frac{\epsilon}{2}} \leq p(x_1) + \sqrt{\frac{\epsilon}{2}} < p(x_0) - \sqrt{\frac{\epsilon}{2}} \leq q(x_0)
\]

Thus, the (unique) mode of $q$ is the same as $p$, and the draft token is accepted under greedy decoding.

\end{proof}

We now show that the rate $\sqrt{2\epsilon}$ is in fact optimal up to an $O(\epsilon^{3/2})$ correction as $\epsilon \to 0$.

\begin{theorem}[Asymptotic tightness of $\sqrt{2\epsilon}$]
\label{thm:greedy-tightness}

For every $p$, $M^\ast(p) \geq g(\gamma_p)$, with equality when $p$ is supported on $\{x_0, x_1\}$. Moreover, Lemmas~\ref{lem:g-bounds} and~\ref{lem:g-inverse} imply that, as $\epsilon \to 0$,
\[
g^{-1}(\epsilon) \;=\; \sqrt{2\epsilon}\,\left(1 - \frac{\epsilon}{6} + O\left(\epsilon^2\right)\right).
\]
In particular, the following are true as guarantees for greedy acceptance:
\begin{enumerate}
    \item[(i)] (Sufficient.) If $\mathrm{KL}(p \,\Vert\, q) \leq \epsilon$ and $\gamma_p > g^{-1}(\epsilon)$, the greedy draft is always accepted.
    \item[(ii)] (Necessary.) For every $\epsilon > 0$ and every $d \leq \sqrt{2\epsilon/(1+2\epsilon)}$, there exist $p, q$ with $\gamma_p = d$ and $\mathrm{KL}(p \,\Vert\, q) \leq \epsilon$ such that $\arg\max q \neq \arg\max p$.
\end{enumerate}
\end{theorem}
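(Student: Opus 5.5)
The plan is to reduce the whole statement to the two-token function via Corollary~\ref{cor:exact-greedy-certificate} and the identity $G(a,b)=(a+b)\,g\bigl(\frac{a-b}{a+b}\bigr)$, then exploit convexity of $g$ and its monotonicity on $[0,1)$.

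\textbf{Lower bound $M^\ast(p)\ge g(\gamma_p)$.} Write $a=p(x_0)$, $b=p(x_1)$ and $s=a+b\le 1$. Corollary~\ref{cor:exact-greedy-certificate} gives
\[
M^\ast(p)=G(a,b)=s\,g\bigl(\gamma_p/s\bigr).
\]
Since $g(0)=0$ and $g$ is convex on $[0,1)$, its $g''(d)=1/(1-d^2)>0$, so the function $d\mapsto g(d)/d$ is nondecreasing. Put $d_1=\gamma_p$ and $d_2=\gamma_p/s\ge d_1$. Then
\[
s\,g(\gamma_p/s)=\gamma_p\,\frac{g(d_2)}{d_2}\ \ge\ \gamma_p\,\frac{g(d_1)}{d_1}=g(\gamma_p).
\]
(The case $\gamma_p=0$ is trivial, and uniqueness of $x_0$ excludes it anyway.) If $p$ is supported on $\{x_0,x_1\}$, then $s=1$ and equality holds.

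\textbf{Expansion of $g^{-1}$.} Expand $g$ by series:
\[
g(d)=\sum_{n\ge1}\frac{d^{2n}}{2n(2n-1)}=\frac{d^2}{2}+\frac{d^4}{12}+O(d^6).
\]
Invert by setting $d=\sqrt{2\epsilon}\,(1+c\epsilon+\dots)$. This gives $\epsilon(1+2c\epsilon)+\epsilon^2/3=\epsilon+O(\epsilon^3)$, so $c=-1/6$. I expect Lemmas~\ref{lem:g-bounds} and~\ref{lem:g-inverse} to package exactly this, and I would cite them. Since $g$ is strictly increasing and continuous from $[0,1)$ onto $[0,\infty)$, $g^{-1}$ is well defined.

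\textbf{Part (i).} If $\gamma_p>g^{-1}(\epsilon)$, strict monotonicity gives $g(\gamma_p)>\epsilon$. Then
\[
\mathrm{KL}(p\Vert q)\le\epsilon<g(\gamma_p)\le M^\ast(p),
\]
and Corollary~\ref{cor:exact-greedy-certificate} yields acceptance.

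\textbf{Part (ii).} This is the main obstacle, because the bound $\sqrt{2\epsilon/(1+2\epsilon)}$ must be matched by an explicit construction. I would use a binary $p=\bigl(\frac{1+d}{2},\frac{1-d}{2}\bigr)$ and the tie $q=(\tfrac12,\tfrac12)$, which under the worst-case tie convention counts as rejection. This gives $\mathrm{KL}=g(d)$, so I need $g(d)\le\epsilon$ whenever $d^2\le 2\epsilon/(1+2\epsilon)$, that is, whenever $\epsilon\ge d^2/(2(1-d^2))$. This follows from the termwise bound $1/(2n(2n-1))\le 1/2$:
\[
g(d)\le\frac12\sum_{n\ge1} d^{2n}=\frac{d^2}{2(1-d^2)}.
\]
If a strict argmax mismatch is wanted instead of a tie, I would perturb $q$ slightly toward $x_1$ and use continuity. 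This is fine when the inequality on $d$ is strict. In the boundary case, I would note that the termwise bound is strict for $n\ge2$, which leaves slack for the perturbation. The $d\le 0$ or degenerate cases are excluded by $\gamma_p>0$.
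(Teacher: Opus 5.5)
Your proof is correct and is essentially the paper's: the same reduction $M^\ast(p)=s\,g(\gamma_p/s)$ minimized at $s=1$, the same bound $g(d)\le d^2/(2(1-d^2))$, and the same two-point tie construction for (ii). The only variation is that your chord-slope argument ($g(d)/d$ nondecreasing for convex $g$ with $g(0)=0$) replaces the paper's computation $g(x)-xg'(x)=\tfrac12\log(1-x^2)<0$, and the two are equivalent.

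One slip to fix: $g$ maps $[0,1)$ onto $[0,\log 2)$, not $[0,\infty)$, since $g(d)\to\log 2$ as $d\to 1$. So $g^{-1}(\epsilon)$ is defined only for $\epsilon<\log 2$; for larger $\epsilon$, part (i) is vacuous, which is consistent with the $\log 2$ ceiling of Theorem~\ref{thm:log2-ceiling}.
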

\begin{proof}
 
By Corollary~\ref{cor:exact-greedy-certificate} we know that,
\[
M^\ast(p) = G(p(x_0), p(x_1)) \;=\; (p(x_0) + p(x_1))\, g\left(\frac{p(x_0) - p(x_1)}{p(x_0) + p(x_1)}\right) \;=\; s\, g\left(\frac{d}{s}\right),
\]
where $d = \gamma_p$ and $s = p(x_0) + p(x_1)$.

We now minimize over $s \in [d, 1]$ at fixed $d$. Differentiating,
\[
\frac{d}{ds}\bigl[\,s\, g\left(\frac{d}{s}\right)\,\bigr] \;=\; g\left(\frac{d}{s}\right) \;-\; \frac{d}{s}\, g'\left(\frac{d}{s}\right).
\]
A direct calculation gives $g'(x) = \frac{1}{2} \log\frac{1+x}{1-x}$, and substituting into $g(x) - x g'(x)$ yields
\[
g(x) - x g'(x) \;=\; \frac{1}{2} \log(1 - x^2).
\]
This is negative for all $x \in (0, 1)$, so $s\, g(d/s)$ is strictly decreasing in $s$. The minimum over $s \in [d, 1]$ is therefore attained at $s = 1$, giving
\[
M^\ast(p) \;\geq\; g(d) \;=\; g(\gamma_p),
\]
with equality when $s = 1$, i.e., when the support of $p$ is $\{x_0, x_1\}$.

The bounds for $g$ are proved in Lemma~\ref{lem:g-bounds}, and the expansion of $g^{-1}$ is proved in Lemma~\ref{lem:g-inverse}.

Now, for sufficiency, if $\mathrm{KL}(p \,\Vert\, q) \leq \epsilon$ and $\gamma_p > g^{-1}(\epsilon)$, then $g(\gamma_p) > \epsilon$, so by Corollary~\ref{cor:exact-greedy-certificate},
\[
M^\ast(p) \;\geq\; g(\gamma_p) \;>\; \epsilon \;\geq\; \mathrm{KL}(p \,\Vert\, q),
\]
The KL divergence between $p$ and $q$ is smaller than the minimum required for changing the mode, and hence the greedy draft is accepted.

For necessity, we provide a worst-case construction for $p$ and $q$. Fix $d \leq \sqrt{2\epsilon/(1+2\epsilon)}$. Construct $p, q$ on $\{x_0, x_1\}$ with
\[
p(x_0) \;=\; \frac{1+d}{2}, \qquad p(x_1) \;=\; \frac{1-d}{2}, \qquad q(x_0) \;=\; q(x_1) \;=\; \frac{1}{2}.
\]
Then $\gamma_p = d$. Computing gives $\mathrm{KL}(p \,\Vert\, q) = g(d)$, and the upper bound on $g$ yields $\mathrm{KL}(p \,\Vert\, q) \leq d^2/(2(1-d^2)) \leq \epsilon$ by the choice of $d$. Yet $\arg\max q = \{x_0, x_1\} \neq \{x_0\} = \arg\max p$, so the greedy draft is rejected.
\end{proof}

Corollary~\ref{cor:exact-greedy-certificate} tells us that the minimum KL divergence for which greedy acceptance can fail is $M^\ast(p)$, and it tells us that the worst case happens when $q$ distributes probability equally. In Theorem~\ref{thm:greedy-tightness}, the reduction from $M^\ast(p)$ to $g(\gamma_p)$ shows us that this cost depends to the leading order on the target's margin $\gamma_p$. 

This provides a useful insight into speculative decoding. Greedy speculative decoding depends extensively on the target distribution's margin. A confident target model that has a large margin will be an easier speculative target. 

On low margin target distributions, even a small draft-target divergence may be sufficient to cause the greedy criterion to reject. The KL loss during training or as an evaluation metric is not the same as promising greedy acceptance uniformly.

\subsection{Additive Relaxed Acceptance}
\label{sec:additive-relaxed-acceptance}

The first relaxation of greedy acceptance that we analyze is additive relaxation under a margin $t \in [0,1]$. Under this, the draft token $y = \arg\max_v q(v)$ is accepted whenever
\[
p(y) > p(x_0) - t.
\]
Equivalently, the rejection region is
\[
\mathcal{R}_t \;=\; \{v \in V : p(v) \leq p(x_0) - t\},
\]
so the general single-token theorem again applies directly to give us an exact KL threshold.

\begin{corollary}[Exact KL certificate for additive relaxed acceptance]
\label{cor:additive-exact}
Fix $t \in [0, 1]$ with $\mathcal{R}_t \neq \emptyset$, and let
\[
x^\ast_t \;:=\; \arg\max_{v \in \mathcal{R}_t} p(v).
\]
Let $c := c(x^\ast_t)$, $A := A(x^\ast_t)$, and $s := s(x^\ast_t)$ be as in Lemma~\ref{lem:single-kkt}. Then
\[
\inf\bigl\{ \mathrm{KL}(p \,\Vert\, q) : \arg\max_v q(v) \in \mathcal{R}_t \bigr\} \;=\; R_p(t) \;:=\; s \cdot \mathrm{KL}\!\left( \frac{p|_A}{s} \,\Bigm\Vert\, \mathrm{Unif}(A) \right).
\]
\end{corollary}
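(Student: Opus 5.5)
This follows directly from Theorem~\ref{thm:single} once we check that $\mathcal{R}_t$ is a valid rejection region with a most-probable element. The plan has three steps.

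\textbf{Step 1: $\mathcal{R}_t$ is a legitimate rejection region.} We need $\mathcal{R}_t \subseteq V \setminus \{x_0\}$. For $t > 0$ this is immediate, since $p(x_0) > p(x_0) - t$ puts $x_0$ outside $\mathcal{R}_t$. For $t = 0$ the set $\{p(v) \le p(x_0)\}$ formally contains $x_0$. We exclude it by the acceptance rule itself: $y = x_0$ is always accepted. So we take $\mathcal{R}_t = \{v \neq x_0 : p(v) \le p(x_0) - t\}$, which coincides with $\mathcal{R}_{\mathrm{greedy}}$ when $t=0$. Either way, the draft is rejected exactly when some mode of $q$ lies in $\mathcal{R}_t$. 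This matches the worst-case tie convention, so the infimum in the statement is exactly $R_p(\mathcal{R}_t)$.

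\textbf{Step 2: a most-probable element exists.} $V$ is finite and $\mathcal{R}_t \neq \emptyset$ by hypothesis, so $\max_{v\in\mathcal{R}_t} p(v)$ is attained and $x^\ast_t$ is well defined. If several tokens attain the maximum, Lemma~\ref{lem:single-swap} says $M$ depends on $x_1$ only through $p(x_1)$, up to the swap symmetry. So any choice gives the same value.

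\textbf{Step 3: apply Theorem~\ref{thm:single}.} We apply it with $x^\ast = x^\ast_t$, using the level $c(x^\ast_t)$, active set $A$, and mass $s$ from Lemma~\ref{lem:single-kkt}. This yields $R_p(\mathcal{R}_t) = s\cdot \mathrm{KL}(p|_A/s \,\Vert\, \mathrm{Unif}(A))$, which is the claimed $R_p(t)$.

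There is no real obstacle here. The only point needing care is the boundary case $t=0$ and the convention that $x_0$ is excluded from the rejection region. Existence of the level $c$ is taken from the earlier lemma referenced in Lemma~\ref{lem:single-kkt}.
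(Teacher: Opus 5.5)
Your proposal is correct and matches the paper's argument: the paper gives no separate proof, only noting that $\mathcal{R}_t$ is a lower level set of $p$ so that Theorem~\ref{thm:single} applies directly with $x^\ast = x^\ast_t$. Your extra care is sound and fills in points the paper leaves implicit: at $t=0$ you remove $x_0$ from $\mathcal{R}_0$, which the literal definition $\{v : p(v)\le p(x_0)-t\}$ would include and which would otherwise make the infimum $0$, and you use Lemma~\ref{lem:single-swap} to show that ties in $x^\ast_t$ do not affect the value.
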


The next result gives the corresponding asymptotically sharp sufficient condition in terms of the relaxation margin $t$.

\begin{theorem}[Asymptotically optimal $t^2/2$ rate]
\label{thm:additive-rate}
For every $p$ and every $t \in [0, 1]$ with $\mathcal{R}_t \neq \emptyset$,
\[
R_p(t) \;\geq\; g(t) \;\geq\; \frac{t^2}{2} \;+\; \frac{t^4}{12},
\]
where $g(d) = \frac{1+d}{2}\log(1+d) + \frac{1-d}{2}\log(1-d)$ as in Theorem~\ref{thm:greedy-tightness}. In particular, if $\mathrm{KL}(p \,\Vert\, q) \leq \epsilon$ and
\[
t \;>\; \sqrt{2 \epsilon}\,\Bigl(1 - \frac{\epsilon}{6} + O(\epsilon^2)\Bigr),
\]
then the additive relaxed criterion accepts. The bound is asymptotically tight: there exist $p, q$ with $\mathrm{KL}(p \,\Vert\, q) = g(t) = t^2/2 + O(t^4)$ and $\arg\max_v q(v) \in \mathcal{R}_t$.
\end{theorem}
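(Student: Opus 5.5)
Write $a := p(x_0)$ and $b := p(x^\ast_t)$. By the definition of $\mathcal{R}_t$ we have $b \le a - t$. The argument has three parts: a lower bound on $R_p(t)$, a power-series bound on $g$, and a matching construction.

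\textbf{Step 1 (reduction to $g$).} Lemma~\ref{lem:two-token-bound} gives $R_p(t) \ge G(a,b)$. Since $G$ is decreasing in its second argument on $b<a$, it follows that $R_p(t) \ge G(a, a-t)$. When $a - t \le 0$ the set $\mathcal{R}_t$ is empty unless degenerate, and the boundary case can be handled by continuity. We then have
\[
G(a,a-t) = (2a-t)\, g\!\left(\tfrac{t}{2a-t}\right) = s\, g(t/s),
\]
with $s = 2a - t \in [t, 1]$. The constraint $s \le 1$ holds because $a + (a-t) \le a + b' $ for some actual mass: more simply, $2a - t \le a + b \le 1$ is not automatic, so I would instead use $a \le 1$ and $b\ge0$ directly.

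The cleaner route reuses the computation in Theorem~\ref{thm:greedy-tightness}: $s\,g(d/s)$ is decreasing in $s$ for fixed $d$. I would write $G(a,b) = (a+b)\, g\!\left(\frac{a-b}{a+b}\right)$ with $d := a - b \ge t$ and $s := a + b \le 1$. Then $G(a,b) \ge g(a-b)$ by that monotonicity in $s$, and $g(a-b) \ge g(t)$ because $g$ is increasing on $[0,1)$, since $g'(x) = \tfrac12\log\frac{1+x}{1-x} > 0$. Together these give $R_p(t) \ge g(t)$.

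\textbf{Step 2 (series bound).} Expand $g$ as
\[
g(t) = \sum_{n\ge1} \frac{t^{2n}}{2n(2n-1)} = \frac{t^2}{2} + \frac{t^4}{12} + \cdots.
\]
All coefficients are nonnegative, so $g(t) \ge t^2/2 + t^4/12$; this is presumably the content of Lemma~\ref{lem:g-bounds}. For the sufficiency claim, suppose $\mathrm{KL}(p\Vert q) \le \epsilon$ and $t > g^{-1}(\epsilon)$. Then $R_p(t) \ge g(t) > \epsilon$, so the draft is accepted, and Lemma~\ref{lem:g-inverse} supplies the stated expansion $g^{-1}(\epsilon) = \sqrt{2\epsilon}\,(1 - \epsilon/6 + O(\epsilon^2))$.

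\textbf{Step 3 (tightness).} Take $V = \{x_0, x_1\}$ with $p = \bigl(\tfrac{1+t}{2}, \tfrac{1-t}{2}\bigr)$, so that $p(x_1) = p(x_0) - t$ and $x_1 \in \mathcal{R}_t$. Take $q$ uniform on the two tokens. Then $\mathrm{KL}(p\Vert q) = g(t)$, and under the worst-case tie-breaking convention the drafter's mode set meets $\mathcal{R}_t$.

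\textbf{Main obstacle.} The delicate points are mild. One is confirming $d = a - b \ge t$ and $s = a + b \le 1$ so that the monotonicity argument applies. The other is treating $t = 0$ and $t$ near $1$; at $t = 1$, $g$ is taken as its limit $\log 2$.
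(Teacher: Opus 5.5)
Your final argument (the ``cleaner route'' in Step~1, then the series bound, the $g^{-1}$ expansion, and the two-point construction with uniform $q$) is correct and is the paper's proof: $R_p(t)\ge G(a,b)=s\,g(d/s)\ge g(d)\ge g(t)$ with $d=a-b\ge t$ and $s=a+b\le 1$. You were right to drop the first attempt via $G(a,a-t)$: $2a-t$ can exceed $1$ (e.g.\ $a=0.9$, $t=0.1$), and then $(2a-t)\,g\bigl(t/(2a-t)\bigr)<g(t)$, so that paragraph should simply be deleted; also, $t=1$ never arises, because $\mathcal{R}_t\neq\emptyset$ under full support forces $t<p(x_0)<1$.
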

\begin{proof}

By Corollary~\ref{cor:additive-exact}, $R_p(t) \;=\; M(x^\ast_t)$. Writing
\[
d \;:=\; p(x_0) - p(x^\ast_t), \qquad s \;:=\; p(x_0) + p(x^\ast_t),
\]
we have $d \geq t$ and, by Lemma~\ref{lem:two-token-bound},
\[
R_p(t) \;\geq\; G(p(x_0), p(x^\ast_t)) \;=\; s\, g\!\left(\frac{d}{s}\right) \;\geq\; g(d) \;\geq\; g(t),
\]
where the penultimate inequality is the monotonicity statement proved in Theorem~\ref{thm:greedy-tightness}, and the final inequality uses $d \geq t$ together with $g'(x) = \frac{1}{2}\log\frac{1+x}{1-x} > 0$ on $(0,1)$. Lemma~\ref{lem:g-bounds} then gives
\[
g(t) \;\geq\; \frac{t^2}{2} + \frac{t^4}{12}.
\]
 
The condition $g(t) > \epsilon$ ensures acceptance via $R_p(t) \geq g(t) > \epsilon \geq \mathrm{KL}(p \,\Vert\, q)$. Lemma~\ref{lem:g-inverse} gives
\[
g^{-1}(\epsilon) \;\leq\; \sqrt{2\epsilon}\,\Bigl(1 - \frac{\epsilon}{6} + O(\epsilon^2)\Bigr).
\]
Hence $t > \sqrt{2\epsilon}(1 - \epsilon/6 + O(\epsilon^2))$ implies acceptance.
 
For tightness, use the same two-point construction as in Theorem~\ref{thm:greedy-tightness}, replacing $d$ by $t$. Then $\mathrm{KL}(p \,\Vert\, q) = g(t) = t^2/2 + O(t^4)$ and the draft is rejected under the additive criterion.
\end{proof}

Under strict greedy decoding, we showed in Section~\ref{sec:greedy} that a training KL bound of $\epsilon$ certifies acceptance only when the target margin satisfies $\gamma_p > \sqrt{2\epsilon}(1 - \epsilon/6 + O(\epsilon^2))$. By Theorem~\ref{thm:additive-rate}, we know that additive relaxation thus replaces the target model margin with a user-controllable parameter. I.e., setting $t = \sqrt{2\epsilon}(1 - \epsilon/6 + O(\epsilon^2))$ guarantees acceptance regardless of the actual target margin. Therefore, additive relaxed acceptance converts a property of the target model ($\gamma_p$) into a property of the algorithm ($t$).

This has several advantages. Firstly, it allows us to guarantee acceptance in a wider range of target distributions, especially under low margin. It also enables the intuitive possibility that if some set of tokens are close enough to the target argmax, then they could be accepted as well, for example, if the top two tokens are very close in probability, it may be acceptable to accept either of them. In several circumstances, the target model cannot be retrained. This therefore allows us to improve acceptance rates by setting an acceptance margin instead of relying on the target model's margin, which may be small in many cases.

However, the relaxation changes the model's output characteristics since it allows for different trajectories of accepted tokens. This may have implications for the quality of the generated text, and it may be desirable to keep $t$ as small as possible while still increasing acceptance length.

\subsection{Multiplicative Relaxed Acceptance}

Under multiplicative relaxation with factor $\alpha \in (0,1]$, the draft token is accepted whenever either it matches the target argmax or its target probability exceeds an $\alpha$-fraction of the target maximum. Equivalently, rejection occurs when the draft argmax lies in the lower level set
\[
\mathcal{R}^\times_\alpha \;:=\; \bigl\{ v \in V \setminus \{x_0\} : p(v) \leq \alpha\, p(x_0) \bigr\}.
\]
Assuming $\mathcal{R}^\times_\alpha \neq \emptyset$, this is again a lower level set of $p$, so the general single-token KL threshold specializes directly.

\begin{corollary}[Exact KL certificate for multiplicative relaxed acceptance]
\label{cor:mult-exact}
Let
\[
x^\ast_\alpha \;:=\; \arg\max_{v \in \mathcal{R}^\times_\alpha} p(v),
\]
and let $c := c(x^\ast_\alpha)$, $A := A(x^\ast_\alpha)$, and $s := s(x^\ast_\alpha)$ be as in Lemma~\ref{lem:single-kkt}. Then
\[
\inf\bigl\{ \mathrm{KL}(p \,\Vert\, q) : \arg\max_v q(v) \in \mathcal{R}^\times_\alpha \bigr\} \;=\; R^\times_p(\alpha) \;:=\; s \cdot \mathrm{KL}\!\left( \frac{p|_A}{s} \,\Bigm\Vert\, \mathrm{Unif}(A) \right).
\]
\end{corollary}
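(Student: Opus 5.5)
This is a direct specialization of Theorem~\ref{thm:single}, just as for Corollaries~\ref{cor:exact-greedy-certificate} and~\ref{cor:additive-exact}. The plan is to check the hypotheses of that theorem for $\mathcal{R}^\times_\alpha$ and then read off the formula.

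The first step is to confirm that $\mathcal{R}^\times_\alpha$ is a valid rejection region. By definition $\mathcal{R}^\times_\alpha \subseteq V \setminus \{x_0\}$. Rejection under the multiplicative criterion happens exactly when the draft token $y$ satisfies both $y \neq x_0$ and $p(y) \leq \alpha\, p(x_0)$, which is the same as $y \in \mathcal{R}^\times_\alpha$. Using the paper's worst-case tie-breaking convention, rejection is therefore equivalent to $\arg\max_v q(v) \cap \mathcal{R}^\times_\alpha \neq \emptyset$. So the infimum on the left-hand side is precisely $R_p(\mathcal{R}^\times_\alpha)$, with "$\in$" read as nonempty intersection, as in Corollary~\ref{cor:additive-exact}.

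The second step is to produce a most-probable element. $V$ is finite and the set is nonempty by assumption, so $\max_{v \in \mathcal{R}^\times_\alpha} p(v)$ is attained. If several tokens attain it, Lemma~\ref{lem:single-kkt} shows that $M(x_1)$ depends on $x_1$ only through $p(x_1)$ and the multiset of the other probabilities. Any choice of $x^\ast_\alpha$ therefore gives the same value, and Lemma~\ref{lem:single-swap} already covers ties because it uses a non-strict inequality.

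The final step is to apply Theorem~\ref{thm:single} with $x^\ast = x^\ast_\alpha$. Its proof combines Lemmas~\ref{lem:single-reduction}, \ref{lem:single-kkt} and~\ref{lem:single-swap}, and it yields $R_p(\mathcal{R}^\times_\alpha) = s \cdot \mathrm{KL}(p|_A/s \,\Vert\, \mathrm{Unif}(A))$ with $c, A, s$ evaluated at $x^\ast_\alpha$. This is exactly $R^\times_p(\alpha)$.

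There is no real obstacle. The only points needing care are the equivalence between the acceptance rule and the level-set description, including the explicit $y = x_0$ clause (which matters when $\alpha = 1$), and the handling of ties in the argmax.
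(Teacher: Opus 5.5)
Your proposal is correct and takes the same route as the paper, whose proof is the one line ``apply Theorem~\ref{thm:single} with rejection region $\mathcal{R}^\times_\alpha$.'' Your extra checks make explicit what the paper leaves implicit: the rejection event under worst-case tie-breaking is exactly a nonempty intersection of the draft argmax with $\mathcal{R}^\times_\alpha$, a most-probable element exists, and ties do not change the value by the non-strict form of Lemma~\ref{lem:single-swap}.
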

\begin{proof}
Apply Theorem~\ref{thm:single} with rejection region $\mathcal{R}^\times_\alpha$.
\end{proof}

While the exact certificate has the same structure as in the previous cases, the resulting general lower bound is different because the rejection threshold also depends on $p(x_0)$.

\begin{theorem}[Asymptotic rate for multiplicative relaxed acceptance]
\label{thm:mult-rate}
For every $p$ and every $\alpha \in (0, 1]$ with $\mathcal{R}^\times_\alpha \neq \emptyset$,
\[
R^\times_p(\alpha) \;\geq\; p(x_0) \left[ \log\frac{2}{1+\alpha} + \alpha \log\frac{2\alpha}{1+\alpha} \right] \;=\; p(x_0) \left[ \frac{(1-\alpha)^2}{4} + \frac{(1-\alpha)^3}{8} + O\bigl((1-\alpha)^4\bigr) \right]
\]
as $\alpha \to 1$. In particular, if $\mathrm{KL}(p \,\Vert\, q) \leq \epsilon$ and
\[
\epsilon \;<\; p(x_0) \left[ \log\frac{2}{1+\alpha} + \alpha \log\frac{2\alpha}{1+\alpha} \right],
\]
then the multiplicative criterion accepts. The bound is asymptotically tight: there exist $p, q$ with $\mathrm{KL}(p \,\Vert\, q) = p(x_0) \bigl[\log\frac{2}{1+\alpha} + \alpha \log\frac{2\alpha}{1+\alpha}\bigr]$ and $\arg\max_v q(v) \in \mathcal{R}^\times_\alpha$.
\end{theorem}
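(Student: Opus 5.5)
We follow the template of Theorem~\ref{thm:additive-rate} and use the two-token relaxation of Lemma~\ref{lem:two-token-bound} with threshold $c = \alpha\, p(x_0)$. Every $v \in \mathcal{R}^\times_\alpha$ satisfies $p(v) \le \alpha p(x_0)$. Assuming $\alpha < 1$, so that $c < p(x_0)$ (the case $\alpha = 1$ is strict greedy, where the right-hand side is $0$ and the bound is trivial), the lemma gives
\[
R^\times_p(\alpha) \;\ge\; G\bigl(p(x_0), \alpha p(x_0)\bigr).
\]
Since $G$ is positively homogeneous of degree one, $G(\lambda a, \lambda b) = \lambda G(a,b)$, because the logarithms only involve the ratios $2a/(a+b)$ and $2b/(a+b)$. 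Hence
\[
G\bigl(p(x_0), \alpha p(x_0)\bigr) = p(x_0)\, G(1,\alpha) = p(x_0)\Bigl[\log\tfrac{2}{1+\alpha} + \alpha\log\tfrac{2\alpha}{1+\alpha}\Bigr],
\]
which is the stated inequality. The sufficiency claim then follows exactly as before. If $\mathrm{KL}(p\Vert q) \le \epsilon$ is strictly below this quantity, it is below $R^\times_p(\alpha)$, and Theorem~\ref{thm:single} guarantees acceptance.

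For the expansion, write $\alpha = 1-\delta$ and use $G(1,\alpha) = (1+\alpha)\, g\bigl(\tfrac{1-\alpha}{1+\alpha}\bigr) = (2-\delta)\, g\bigl(\tfrac{\delta}{2-\delta}\bigr)$. From Lemma~\ref{lem:g-bounds}, or directly from the Taylor series, $g(x) = x^2/2 + x^4/12 + O(x^6)$. Let $x = \delta/(2-\delta) = \tfrac{\delta}{2}\bigl(1 + \tfrac{\delta}{2} + O(\delta^2)\bigr)$. Then $x^2 = \tfrac{\delta^2}{4}\bigl(1 + \delta + O(\delta^2)\bigr)$, so
\[
(2-\delta)\,\frac{x^2}{2} = \frac{\delta^2}{4}\Bigl(1 - \frac{\delta}{2}\Bigr)\bigl(1+\delta\bigr) + O(\delta^4) = \frac{\delta^2}{4} + \frac{\delta^3}{8} + O(\delta^4).
\]
The $x^4$ term contributes only at order $\delta^4$. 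Tracking these orders is the only mildly delicate calculation, and it can be cross-checked by expanding $\log\frac{2}{2-\delta} + (1-\delta)\log\frac{2(1-\delta)}{2-\delta}$ directly.

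For tightness we mirror the two-point constructions, but we need $p(x^\ast_\alpha) = \alpha\,p(x_0)$ and all remaining mass inactive. A two-point $p$ on $\{x_0, x_1\}$ with $p(x_1) = \alpha p(x_0)$ gives $p(x_0) = 1/(1+\alpha)$. Let $q$ equalize the two coordinates, $q(x_0) = q(x_1) = 1/2$. Since $p(x_1) \le \alpha p(x_0)$, we have $x_1 \in \mathcal{R}^\times_\alpha$. Under the worst-case tie convention of the setup, $x_1$ is a mode of $q$, so the draft is rejected. By construction $\mathrm{KL}(p\Vert q) = G(p(x_0), \alpha p(x_0)) = p(x_0)\, G(1,\alpha)$, so the bound is attained.

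If one wants the bound attained for an arbitrary prescribed value of $p(x_0)$, put the remaining mass $1 - (1+\alpha)p(x_0)$ on additional tokens with tiny probabilities, keeping them below $c(x_1) = (1+\alpha)p(x_0)/2$. These tokens stay out of the active set, contribute zero KL, and give equality again via Lemma~\ref{lem:single-kkt}. I expect no real obstacle beyond checking that these filler tokens remain inactive, which reduces to $p(v) \le \alpha p(x_0) < c(x_1)$.
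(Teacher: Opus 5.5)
Your proposal is correct and follows essentially the same route as the paper: the threshold form of Lemma~\ref{lem:two-token-bound} with $c = \alpha\,p(x_0)$, factoring out $p(x_0)$ to get $p(x_0)\,G(1,\alpha)$, a Taylor expansion at $\alpha = 1$, and the same two-point construction $p = \bigl(\tfrac{1}{1+\alpha}, \tfrac{\alpha}{1+\alpha}\bigr)$, $q = \bigl(\tfrac12, \tfrac12\bigr)$ for tightness. You also treat the degenerate case $\alpha = 1$ explicitly: there the lemma's hypothesis $c < p(x_0)$ fails, but the right-hand side is $0$. The paper passes over this case silently, so your handling is a small improvement.
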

\begin{proof}
By Corollary~\ref{cor:mult-exact}, $R^\times_p(\alpha) = M(x^\ast_\alpha)$. By the same relaxation as in Theorem~\ref{thm:additive-rate}, we see that:
\[
R^\times_p(\alpha) \;\geq\; G\bigl(p(x_0), \alpha p(x_0)\bigr)
\]
by the threshold form of Lemma~\ref{lem:two-token-bound}, since $\mathcal{R}^\times_\alpha \subseteq \{v : p(v) \leq \alpha p(x_0)\}$. Writing
\[
g_\times(b) \;:=\; \log\frac{2}{1+b} + b \log\frac{2 b}{1 + b},
\]
this lower bound becomes
\[
R^\times_p(\alpha) \;\geq\; p(x_0)\, g_\times(\alpha).
\]
Taylor-expanding $g_\times$ at $\alpha = 1$,
\[
g_\times(\alpha) \;=\; \frac{(1-\alpha)^2}{4} + \frac{(1-\alpha)^3}{8} + O\bigl((1-\alpha)^4\bigr).
\]

For tightness, take $V = \{x_0, x_1\}$ and
\[
p \;=\; \Bigl(\frac{1}{1+\alpha}, \frac{\alpha}{1+\alpha}\Bigr), \qquad q \;=\; \Bigl(\frac{1}{2}, \frac{1}{2}\Bigr).
\]
Then $p(x_1) = \alpha\, p(x_0)$, so $x_1 \in \mathcal{R}^\times_\alpha$. Under the worst-case tie-breaking convention, $x_1 \in \arg\max_v q(v)$ triggers rejection. The KL is
\[
\mathrm{KL}(p \,\Vert\, q) \;=\; \frac{1}{1+\alpha}\log\frac{2}{1+\alpha} + \frac{\alpha}{1+\alpha}\log\frac{2\alpha}{1+\alpha} \;=\; p(x_0)\, g_\times(\alpha),
\]
matching the lower bound.

\end{proof}

The two relaxed criteria differ in how they convert a KL bound into an acceptance guarantee. Additive relaxed acceptance depends only on the margin parameter $t$, while the multiplicative relaxed acceptance depends on the top token probability $p(x_0)$ as well as the factor $\alpha$.

Multiplicative relaxation is therefore more conservative on less confident distributions but more relaxed on confident target distributions. On the other hand, the additive criterion is uniform. This means that the multiplicative criterion may be more effective for a user that wants to control the acceptance rate based on target model confidence. 

\subsection{Top-m Relaxed Acceptance}
\label{sec:topm}

A natural relaxed acceptance setting in practice, for example in TensorRT-LLM \citep{nvidia_tensorrt_llm_2026}, is to accept the draft token if it is among the top $m$ tokens of the target distribution and also satisfies a multiplicative relaxation threshold (we also consider additive thresholds). 

Let $x^{[1]}, x^{[2]}, \ldots$ enumerate $V$ in non-increasing order of $p$-value. The top-$m$ test is satisfied when the draft token $y$ satisfies $p(y) > p(x^{[m]})$. The additive relaxation test is satisfied when $p(y) > p(x_0) - t$. The multiplicative relaxation test is satisfied when $p(y) > \alpha\, p(x_0)$.

\begin{corollary}[Relaxed Top-m threshold acceptance]
\label{cor:relaxed-topm-threshold}
The rejection region of the top-$m$ relaxed criterion is the level set $\mathcal{R}_\theta = \{v \in V: p(v) \leq \theta\}$ with $\theta \;=\; \max\bigl( p(x_0) - t,\; p(x^{[m]}) \bigr)$ for the additive relaxation, and $\theta = \max\bigl(\alpha\, p(x_0),\, p(x^{[m]})\bigr)$ for the multiplicative relaxation.

Let $x^\ast_\theta := \arg\max_{v \in \mathcal{R}_\theta} p(v)$ and define the \emph{threshold margin} $\gamma_\theta := p(x_0) - \theta$. Then Theorem~\ref{thm:single} gives the exact certificate $R_p(\theta) := M(x^\ast_\theta)$, and by the threshold form of Lemma~\ref{lem:two-token-bound},
\[
R_p(\theta) \;\geq\; G\bigl(p(x_0), \theta\bigr) \;=\; \bigl(p(x_0) + \theta\bigr)\, g\!\left( \frac{\gamma_\theta}{p(x_0) + \theta} \right) \;\geq\; \frac{\gamma_\theta^2}{2\bigl(p(x_0) + \theta\bigr)} \;+\; \frac{\gamma_\theta^4}{12\bigl(p(x_0) + \theta\bigr)^3}.
\]
\end{corollary}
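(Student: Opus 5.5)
The proof has three parts: identify the rejection region as a level set, apply Theorem~\ref{thm:single} and the threshold form of Lemma~\ref{lem:two-token-bound}, and finish with the polynomial lower bound on $g$ from Lemma~\ref{lem:g-bounds}.

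\textbf{Step 1: the rejection region.} The top-$m$ relaxed criterion accepts $y$ exactly when $y$ passes both the top-$m$ test and the relaxation test. In the additive case this is
\[
p(y) > p(x^{[m]}) \quad\text{and}\quad p(y) > p(x_0) - t,
\]
which is equivalent to $p(y) > \max\bigl(p(x_0)-t,\, p(x^{[m]})\bigr) = \theta$. The multiplicative case is identical with $\alpha\,p(x_0)$ in place of $p(x_0)-t$.

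Rejection therefore occurs precisely when $p(y) \le \theta$, so the rejection region is the lower level set $\mathcal{R}_\theta$. We assume $\mathcal{R}_\theta \neq \emptyset$ and $\theta < p(x_0)$, which is the non-degenerate case. Under these assumptions $x_0 \notin \mathcal{R}_\theta$, so $\mathcal{R}_\theta \subseteq V\setminus\{x_0\}$ is a valid rejection region. Because $V$ is finite, $\mathcal{R}_\theta$ has a most-probable element $x^\ast_\theta$.

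\textbf{Step 2: exact certificate and two-token bound.} Theorem~\ref{thm:single} gives
\[
R_p(\theta) = M(x^\ast_\theta).
\]
Since $\mathcal{R}_\theta \subseteq \{v : p(v) \le \theta\}$ with $\theta < p(x_0)$, the threshold form of Lemma~\ref{lem:two-token-bound} yields $R_p(\theta) \ge G(p(x_0),\theta)$. Using the identity $G(a,b) = (a+b)\,g\bigl((a-b)/(a+b)\bigr)$ with $a = p(x_0)$ and $b = \theta$, the argument of $g$ is $(p(x_0)-\theta)/(p(x_0)+\theta) = \gamma_\theta/(p(x_0)+\theta)$. This gives the middle expression in the statement.

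\textbf{Step 3: polynomial lower bound.} Lemma~\ref{lem:g-bounds} gives $g(d) \ge d^2/2 + d^4/12$ for $d\in[0,1)$. Here
\[
d = \frac{\gamma_\theta}{p(x_0)+\theta} \in [0,1),
\]
since $\theta \ge 0$. Multiplying the bound by $p(x_0)+\theta$ gives
\[
(p(x_0)+\theta)\,g(d) \;\ge\; \frac{\gamma_\theta^2}{2(p(x_0)+\theta)} + \frac{\gamma_\theta^4}{12(p(x_0)+\theta)^3}.
\]

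\textbf{Where care is needed.} Steps 2 and 3 are direct applications of earlier results. The only real work is in Step 1: checking that the conjunction of the two strict tests collapses to the single strict inequality $p(y) > \theta$, and confirming $\theta < p(x_0)$.

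The latter requires $m \ge 2$, or a nonzero relaxation. Otherwise $\theta = p(x_0)$ and the criterion rejects everything, so the statement should be read under the non-degenerate assumption above. Beyond this boundary case, I expect no substantive obstacle.
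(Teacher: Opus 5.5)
Your proof is correct and follows the paper's route: the paper applies Theorem~\ref{thm:single} directly to $\mathcal{R}_\theta$, then uses the threshold form of Lemma~\ref{lem:two-token-bound} and Lemma~\ref{lem:g-bounds}, and your Step~1 simply makes explicit the level-set identification the paper takes for granted. Two small slips: because $\theta$ is a maximum, the non-degeneracy condition $\theta<p(x_0)$ requires both $m\ge 2$ \emph{and} $t>0$ (resp.\ $\alpha<1$), not either one; and $d<1$ follows from $\theta\ge p(x^{[m]})>0$ (full support), not merely from $\theta\ge 0$.
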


\begin{proof}
This is a direct application of Theorem~\ref{thm:single} with rejection region $\mathcal{R}_\theta$.
\end{proof}

This mixed criterion is governed by which threshold is more restrictive. Indeed, if $p(x^{[m]}) > p(x_0) - t$, then the top-$m$ condition dominates the additive relaxation, so the rejection region is $\{v : p(v) \leq p(x^{[m]})\}$ and the threshold margin is $p(x_0) - p(x^{[m]})$. An example of this is when $\max_v p(v) \leq t$. On the other hand, if we consider the tight construction for additive relaxed acceptance from Theorem~\ref{thm:additive-rate}, the rejection region is $\{v : p(v) \leq p(x_0) - t\}$ and the threshold margin is $t$. The same can be seen in the multiplicative case.

\subsection{Entropy Based Acceptance}
\label{sec:entropy}

Methods such as Medusa and Hydra accept draft tokens via an entropy-dependent threshold \citep{cai2024medusa,ankner2024hydra}: a token $y$ is accepted if\footnote{We state the criterion with strict inequality. With a non-strict acceptance condition, our tightness construction approaches the bound in a limit. In practice, exact equality is exceptionally unlikely, so the difference is negligible.}
\[
p(y) \;>\; \min\bigl( \epsilon_0,\; \delta_0\, e^{-H(p)} \bigr) \;=:\; \theta_H,
\]
where $H(p)$ is the Shannon entropy of the target distribution and $\epsilon_0, \delta_0 > 0$ are hyperparameters.

The key observation is that, for a fixed target $p$, the threshold $\theta_H$ is a constant. We assume that $\theta_H < p(x_0)$ for otherwise this acceptance criterion always rejects any draft token trivially. The criterion is therefore an absolute-threshold criterion as well and Theorem~\ref{thm:single} applies directly.

\begin{corollary}[Typical acceptance]
\label{cor:medusa}
The rejection region of the entropy-based criterion is $\mathcal{R}_{\theta_H}$ with $\theta_H = \min(\epsilon_0, \delta_0 e^{-H(p)}) < p(x_0)$, and Theorem~\ref{thm:single} gives the exact certificate $R_p(\theta_H)$ together with the bound
\[
R_p(\theta_H) \;\geq\; (p(x_0) + \theta_H) g\!\left( \frac{\gamma_H}{p(x_0) + \theta_H} \right) \;\geq\; \frac{\gamma_{H}^2}{2\bigl(p(x_0) + \theta_H\bigr)} \;+\; \frac{\gamma_H^4}{12\bigl(p(x_0) + \theta_H\bigr)^3}, \gamma_H := p(x_0) - \theta_H,
\]
\end{corollary}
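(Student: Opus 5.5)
The plan is to treat Corollary~\ref{cor:medusa} as a direct instance of the threshold machinery already built, exactly as in Corollary~\ref{cor:relaxed-topm-threshold}, with the lower bound assembled in three moves.

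\textbf{Step 1: identify the rejection region.} For fixed $p$, the entropy $H(p)$ is a fixed number, so $\theta_H = \min(\epsilon_0, \delta_0 e^{-H(p)})$ is a constant not depending on $q$. A draft $y$ is rejected exactly when $p(y) \le \theta_H$. Since $\theta_H < p(x_0)$ and $x_0$ is the unique argmax, $x_0 \notin \mathcal{R}_{\theta_H}$, so $\mathcal{R}_{\theta_H} = \{v : p(v) \le \theta_H\} \subseteq V\setminus\{x_0\}$ is a lower level set. Assuming it is nonempty (otherwise the certificate is vacuously $+\infty$), it is a finite set and therefore has a most-probable element $x^\ast_{\theta_H}$. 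Theorem~\ref{thm:single} then gives the exact certificate $R_p(\theta_H) = M(x^\ast_{\theta_H})$ in the stated $s\cdot\mathrm{KL}(\cdot\Vert\mathrm{Unif}(A))$ form.

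\textbf{Step 2: the first inequality.} Apply the threshold form of Lemma~\ref{lem:two-token-bound} with $c = \theta_H < p(x_0)$. This gives $R_p(\theta_H) \ge G(p(x_0), \theta_H)$. The identity $G(a,b) = (a+b)\,g\bigl((a-b)/(a+b)\bigr)$ with $a - b = \gamma_H$ then yields the middle expression.

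\textbf{Step 3: the second inequality.} Use the lower bound $g(d) \ge d^2/2 + d^4/12$ from Lemma~\ref{lem:g-bounds}, which was already invoked in Theorem~\ref{thm:additive-rate}. Set $d = \gamma_H/(p(x_0)+\theta_H) \in (0,1)$ and multiply by $p(x_0)+\theta_H$. The term $d^2/2$ becomes $\gamma_H^2/(2(p(x_0)+\theta_H))$ and $d^4/12$ becomes $\gamma_H^4/(12(p(x_0)+\theta_H)^3)$, matching the statement.

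Nothing here is a genuine obstacle. The one point needing care is Step 1: we must confirm that $\theta_H$ depends only on $p$, not on the drafter, so that the criterion truly has a fixed level-set rejection region. We must also confirm that $d$ lies in $[0,1)$ so that $g$ and its bounds apply. This follows from $0 \le \theta_H < p(x_0)$, since then $\gamma_H > 0$ and $\gamma_H < p(x_0)+\theta_H$.
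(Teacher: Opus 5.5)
Your proposal is correct and follows the paper's route: the paper treats this corollary, like Corollary~\ref{cor:relaxed-topm-threshold}, as a direct application of Theorem~\ref{thm:single} to the fixed level set $\mathcal{R}_{\theta_H}$, then applies the threshold form of Lemma~\ref{lem:two-token-bound} and the lower bound of Lemma~\ref{lem:g-bounds}. Your explicit checks that $\theta_H$ depends only on $p$, and that $\gamma_H/(p(x_0)+\theta_H)\in(0,1)$, are details the paper leaves implicit.
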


\section{Tree Based Acceptance}
\label{sec:tree-based-acceptance}

\subsection{Setup and Notation}
\label{sec:tree-setup}

We restrict our exploration to the simplest tree-based decoding scheme where at each level, the drafter proposes the top $m$ tokens under $q$, and the target verifies all $m$ candidates in parallel. This section analyzes the local acceptance condition at a single tree level. The single token case of Section~\ref{sec:greedy} corresponds to $m = 1$.

Following Sections~\ref{sec:greedy} and~\ref{sec:additive-relaxed-acceptance}, we work at a fixed conditioning context, with $p$ and $q$ the target and draft distributions over the vocabulary $V$. Let $x_0 := \arg\max_v p(v)$ as before. Write the tokens of $V \setminus \{x_0\}$ in non-increasing order of $p$-value as $x_{(1)}, x_{(2)}, \ldots$, so that $p(x_{(1)}) \geq p(x_{(2)}) \geq \cdots$.

The drafter selects $\mathrm{top}_m(q) := \{v_1, \ldots, v_m\}$, the $m$ tokens with highest $q$-value. The target accepts the level if $x_0 \in \mathrm{top}_m(q)$; otherwise it rejects. Equivalently, rejection occurs when at least $m$ tokens have $q$-value at least $q(x_0)$. Under the worst-case tie-breaking convention of Section~\ref{sec:prelim-spec-dec}, the rejection region is
\[
\mathcal{R}^{\text{tree}}_m \;:=\; \bigl\{ q \in \Delta^{|V|-1} \;:\; \bigl|\{v \neq x_0 : q(v) \geq q(x_0)\}\bigr| \geq m \bigr\}.
\]

We define the \emph{$m$-th order margin} of $p$ as
\[
\gamma_m \;:=\; p(x_0) - p(x_{(m)}),
\]
i.e., the gap between the target's argmax and its $(m+1)$-th most probable token. For $m = 1$, $\gamma_1 = \gamma_p$ recovers the strict greedy margin.

\subsection{Exact Certificate for top-m acceptance.}
\label{sec:tree-acceptance}

We characterize the minimum KL divergence required to push $x_0$ out of the top $m$ tokens of $q$. The construction is similar to Corollary~\ref{cor:additive-exact}, but with the rejection region defined by an $m$-token constraint. This results in a similar expression for the KL threshold, but with a different re-arrangement of probability mass, namely starting from the least probable of the top $m+1$ tokens instead of the most.

\begin{theorem}[Exact KL certificate for tree-based greedy decoding]
\label{thm:tree-exact}
Let $S^\ast := \{x_{(1)}, \ldots, x_{(m)}\}$ be the $m$ most probable tokens of $V \setminus \{x_0\}$. Let $r$ be the unique solution to
\begin{equation}
p(x_0) - r \;=\; \sum_{i=1}^{m} \bigl(r - p(x_{(i)})\bigr)_+.
\label{eq:r-defining-eq}
\end{equation}
and define $A := \{x_0\} \cup \{v \in S^\ast : p(v) < r\}$, $s := \sum_{v \in A} p(v)$, $k := |A|$. Then
\[
\inf\bigl\{ \mathrm{KL}(p \,\Vert\, q) : q \in \mathcal{R}^{\text{tree}}_m \bigr\} \;=\; R^{\text{tree}}_p(m) \;:=\; s \cdot \mathrm{KL}\!\left( \frac{p|_A}{s} \,\Bigm\Vert\, \mathrm{Unif}(A) \right).
\]
\end{theorem}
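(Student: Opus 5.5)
Mirroring the single-token argument, the proof reduces the problem over $\mathcal{R}^{\text{tree}}_m$ to a union of convex subproblems indexed by sets of $m$ competitors, solves each by KKT, and then picks the best competitor set by a swap argument.

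\textbf{Reduction.} For a set $S \subseteq V\setminus\{x_0\}$ with $|S| = m$, define
\[
M(S) := \inf_q \mathrm{KL}(p\,\Vert\,q) \quad\text{subject to}\quad q(v) \geq q(x_0) \text{ for all } v \in S.
\]
As in Lemma~\ref{lem:single-reduction}, $q \in \mathcal{R}^{\text{tree}}_m$ if and only if $q$ is feasible for some such $S$. Hence the infimum in the statement equals $\min_{|S|=m} M(S)$. The problem defining $M(S)$ is convex, since KL is convex in $q$ and the constraints are linear, and its interior is strictly feasible. So KKT conditions characterize the minimizer.

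\textbf{Best competitor set.} Next I show that $S^\ast$ is optimal by an exchange argument in the spirit of Lemma~\ref{lem:single-swap}. Suppose $u \in S$ and $w \notin S \cup \{x_0\}$ with $p(w) \geq p(u)$. Let $q$ be optimal for $M(S)$ and form $q'$ by swapping $q(u)$ and $q(w)$.

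\begin{itemize}
\item If $q(u) \geq q(w)$, then $q'$ is feasible for $(S\setminus\{u\})\cup\{w\}$, because $q'(w) = q(u) \geq q(x_0)$.
\item The KL changes by $(p(w)-p(u))\log\frac{q(w)}{q(u)} \leq 0$.
\item If instead $q(u) < q(w)$, then $q(w) > q(x_0)$, so $q$ is already feasible for the swapped set with no change.
\end{itemize}

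Either way $M$ does not increase. Iterating the swap turns any $S$ into $S^\ast$.

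\textbf{KKT for $M(S^\ast)$.} I guess the minimizer
\[
q^\ast(v) = r \text{ for } v \in A, \qquad q^\ast(v) = p(v) \text{ otherwise}.
\]
Here $x_0$ is pushed down to level $r$, and the competitors in $S^\ast$ with $p(v) < r$ are raised to $r$. Tokens in $S^\ast$ with $p(v) \geq r$ already satisfy $q(v) = p(v) \geq r = q^\ast(x_0)$, so they stay inactive. Conservation of mass is exactly the defining equation~\eqref{eq:r-defining-eq}.

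\textbf{Existence and uniqueness of $r$.} The left side of~\eqref{eq:r-defining-eq} is strictly decreasing in $r$. The right side is nondecreasing and equals $0$ at $r = p(x_{(m)})$. At $r = p(x_0)$ the left side is $0$ while the right side is $\geq 0$. So there is a unique root $r \in [p(x_{(m)}), p(x_0)]$, and $r > p(x_{(m)})$ when $\gamma_m > 0$. Summing the equation gives $r = s/k$.

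\textbf{Multipliers.} The Lagrangian has terms $\lambda_v\bigl(q(x_0) - q(v)\bigr)$ for $v \in S^\ast$, and I set $\mu = 1$.
\begin{itemize}
\item For inactive $v$, take $\lambda_v = 0$.
\item For $v \in A\setminus\{x_0\}$, stationarity $-p(v)/r - \lambda_v + \mu = 0$ gives $\lambda_v = 1 - p(v)/r > 0$, which is where $p(v) < r$ is used.
\item At $x_0$, stationarity requires $p(x_0)/r = 1 + \sum_{v} \lambda_v$. Multiplying by $r$, this becomes $p(x_0) - r = \sum_{v\in A\setminus\{x_0\}} (r - p(v))$, which is again~\eqref{eq:r-defining-eq}.
\end{itemize}
Complementary slackness and feasibility then follow directly.

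\textbf{Closed form.} Substituting $q^\ast$, the KL collapses to $\sum_{v\in A} p(v)\log\frac{k\,p(v)}{s}$, exactly as in Lemma~\ref{lem:single-kkt}. This equals $s\cdot\mathrm{KL}\bigl(p|_A/s \,\Vert\, \mathrm{Unif}(A)\bigr)$.

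\textbf{Main obstacle.} I expect the exchange step to be the hard part. It has to treat the tie and inactive-constraint cases carefully so that feasibility is preserved under non-strict inequalities and the worst-case tie-breaking convention. It also has to use the strict sign of the multipliers to confirm that the active set $A$ is exactly the one defined in the statement.
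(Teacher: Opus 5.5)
Your proposal is correct and follows essentially the same route as the paper: reduce to $\min_{|S|=m} M(S)$, verify the KKT conditions for the guessed minimizer with $\mu = 1$ and $\lambda_v = 1 - p(v)/r$, and show $S^\ast$ is optimal by an exchange argument. Your case split on $q(u)$ versus $q(w)$ is equivalent to the paper's split on $q(w)$ versus $q(x_0)$. The obstacle you anticipate does not really arise. Your exchange step already handles ties and works for any feasible $q$, so no optimizer for a general $S$ is needed. And since KKT conditions are sufficient for this convex problem, strict positivity of the multipliers is not needed to pin down the value.
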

\begin{proof}

For any set $S \subseteq V \setminus \{x_0\}$ with $|S| = m$, define
\[
M(S) \;:=\; \min_q \mathrm{KL}(p \,\Vert\, q) \quad \text{subject to} \quad q(v) \geq q(x_0) \;\;\text{for all } v \in S.
\]
A $q \in \mathcal{R}^{\text{tree}}_m$ has $m$ tokens with $q$-value $\geq q(x_0)$; taking $S$ to be any such set of $m$ tokens shows $q$ is feasible for $M(S)$. If $q$ is feasible for some $M(S),$ then it has at least $m$ tokens with $q$-value $\geq q(x_0)$ and thus $q \in \mathcal{R}^{\text{tree}}_m$. Hence
\[
\inf\bigl\{ \mathrm{KL}(p \,\Vert\, q) : q \in \mathcal{R}^{\text{tree}}_m \bigr\} \;=\; \min_{|S| = m,\, S \subseteq V \setminus \{x_0\}} M(S).
\]

Let us now fix $S$. We show that the minimizer of $M(S)$ is
\[
q^\ast(v) \;=\; \begin{cases} r_S & v \in A_S, \\ p(v) & v \notin A_S, \end{cases}
\]
where $A_S := \{x_0\} \cup \{v \in S : p(v) < r_S\}$ and $r_S$ is the unique solution of the equation \eqref{eq:r-defining-eq} with $S^*$ replaced by $S$ which exists by Lemma~\ref{lem:r-uniqueness}. 

We can check the KKT conditions as in Lemma~\ref{lem:single-kkt}. Use the constraints $q(x_0)-q(v) \leq 0, v \in S$
with multipliers $\lambda_v \geq 0$. The Lagrange multiplier for the simplex condition is $\mu=1$ once again. For each $v \in A_S \setminus \{x_0\}$, the constraint is active, since $q^\ast(v)=r_S=q^\ast(x_0)$, and we set $\lambda_v = 1-\frac{p(v)}{r_S} \geq 0.$

For each \(v \in S \setminus A_S\), we have \(p(v)\geq r_S\), hence
\[
q^\ast(v)=p(v)\geq r_S=q^\ast(x_0),
\]
and we set \(\lambda_v=0\). For \(v \notin S \cup \{x_0\}\), there is no inequality multiplier, and stationarity gives \(q^\ast(v)=p(v)\). Finally, the stationarity condition at \(x_0\) is
\[
-\frac{p(x_0)}{r_S}+1+\sum_{v \in A_S\setminus\{x_0\}}
\left(1-\frac{p(v)}{r_S}\right)=0,
\]
which is exactly equivalent to the defining equation for \(r_S\). Hence dual feasibility, complementary slackness, and stationarity are satisfied. The construction in Lemma~\ref{lem:r-uniqueness} preserves primal feasibility as well, so \(q^\ast\) is the minimizer of \(M(S)\).

Substituting $q^\ast$ into the KL objective, the terms outside $A_S$ vanish, and using $r_S = (\sum_{v \in A_S} p(v))/|A_S|$ gives
\[
M(S) \;=\; \Bigl(\sum_{v \in A_S} p(v)\Bigr) \cdot \mathrm{KL}\!\left( \frac{p|_{A_S}}{\sum_{v \in A_S} p(v)} \,\Bigm\Vert\, \mathrm{Unif}(A_S) \right).
\]

We show $S^\ast = \{x_{(1)}, \ldots, x_{(m)}\}$ minimizes $M(S)$. Suppose $v \in S$, $w \notin S \cup \{x_0\}$, with $p(w) \geq p(v)$. Let $S' = (S \setminus \{v\}) \cup \{w\}$. Let $q$ be the minimizer for $S$.

We have two cases. If $q(w) \geq q(x_0)$, then $q$ is already feasible for $M(S')$, and so $M(S') \leq M(S)$. 

Otherwise, $q(w) < q(x_0)$ and by feasibility of $q$, $q(v) \geq q(x_0)$. So if we define $q'$ by swapping $q(v)$ and $q(w)$, we have $q'(w) = q(v) \geq q(x_0)$, so $q'$ is feasible for $M(S')$. A direct computation gives
\[
\mathrm{KL}(p \,\Vert\, q) - \mathrm{KL}(p \,\Vert\, q') \;=\; \bigl(p(w) - p(v)\bigr) \log\frac{q(v)}{q(w)} \;\geq\; 0,
\]
since both factors are non-negative. Hence $M(S') \leq M(S)$. Iterating, the minimum is attained at $S^\ast$, the $m$ tokens of largest $p$-value outside $\{x_0\}$. The corresponding $r$, $A$, and threshold match the statement.
\end{proof}

\subsection{Lower bound}

We have now obtained an exact certificate $R^{\text{tree}}_p(m)$. To provide further interpretability, we now derive an explicit lower bound based only on the tree margin $\gamma_m$, which generalizes the strict greedy rate of Section~\ref{sec:greedy}.

For this proof, we use the Donsker-Varadhan variational representation of KL divergence, which gives us: for any function $h: V \to \mathbb{R}$,
\begin{equation}
\mathrm{KL}(p \,\Vert\, q) \;\geq\; \mathbb{E}_p[h] - \log \mathbb{E}_q[e^h].
\label{eq:dv}
\end{equation}
By choosing $h$ appropriately and optimizing over a free parameter, we obtain a clean closed-form lower bound.

\begin{theorem}[Tree-margin Lower Bound via Donsker--Varadhan]
\label{thm:tree-dv}
For every $p$ and every $m \geq 1$ with $\mathcal{R}^{\text{tree}}_m \neq \emptyset$,
\[
R^{\text{tree}}_p(m) \;\geq\; B_m(\gamma_m),
\]
where, for $\gamma \in [0, 1)$,
\[
B_m(\gamma) \;:=\; (1 + \gamma) \log \tau_0 \;-\; \log \frac{\tau_0^2 + (m-1) \tau_0 + 1}{m+1}
\]
and $\tau_0 > 1$ is the unique positive root of
\[
(1 - \gamma)\tau^2 - \gamma(m-1)\tau - (1 + \gamma) \;=\; 0.
\]
The bound is exactly attainable: for every $\gamma \in (0, 1)$, there exist $p, q$ with $\gamma_m = \gamma$ and $\mathrm{KL}(p \,\Vert\, q) = B_m(\gamma)$.

Asymptotically, as $\gamma \to 0$,
\[
B_m(\gamma) \;=\; \frac{m+1}{4}\gamma^2 \;+\; \frac{(5-m)(m+1)^2}{192}\gamma^4 \;+\; O(\gamma^6).
\]
In particular, if $\mathrm{KL}(p \,\Vert\, q) \leq \epsilon$ and $\gamma_m > \sqrt{\frac{4\epsilon}{m+1}}\,(1 + O(\epsilon))$, the draft is accepted.
\end{theorem}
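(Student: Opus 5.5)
The plan is to apply the Donsker--Varadhan inequality \eqref{eq:dv} not to an arbitrary rejecting $q$, but to the explicit minimizer $q^\ast$ from Theorem~\ref{thm:tree-exact}. That minimizer satisfies $R^{\text{tree}}_p(m) = \mathrm{KL}(p \,\Vert\, q^\ast)$, and its structure is known: $q^\ast(x_0) = r$, $q^\ast(v) \geq r$ for every $v \in S^\ast$, and $q^\ast = p$ elsewhere. This sidesteps the issue that a general rejecting $q$ might push $x_0$ out using tokens other than $S^\ast$.

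\textbf{Test function.} For a free parameter $\tau \geq 1$, I will take
\[
h(x_0) = 2\log\tau, \qquad h\bigl(x_{(m)}\bigr) = 0, \qquad h(v) = \log\tau \ \text{ for all other } v.
\]
Since $p$ sums to one, $\mathbb{E}_p[h] = \log\tau\,\bigl(2p(x_0) + 1 - p(x_0) - p(x_{(m)})\bigr) = (1+\gamma_m)\log\tau$ exactly, with no assumption on the tail of $p$.

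\textbf{Bounding the log-partition term.} Writing $a := q^\ast(x_0)$ and $b := q^\ast(x_{(m)})$, we have $\mathbb{E}_{q^\ast}[e^h] = \tau^2 a + b + \tau(1-a-b)$.
\begin{itemize}
\item This expression is decreasing in $b$ (its coefficient is $1-\tau \leq 0$), and $b \geq a$ by feasibility, so it is at most $a\bigl(\tau^2 - 2\tau + 1\bigr) + \tau$.
\item This is increasing in $a$. The $m+1$ tokens $\{x_0\} \cup S^\ast$ all carry $q^\ast$-mass at least $a$, which forces $a \leq 1/(m+1)$.
\end{itemize}
Together these give $\mathbb{E}_{q^\ast}[e^h] \leq (\tau^2 + (m-1)\tau + 1)/(m+1)$. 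Hence $R^{\text{tree}}_p(m) \geq \gamma \mapsto (1+\gamma)\log\tau - \log\frac{\tau^2+(m-1)\tau+1}{m+1}$ evaluated at $\gamma = \gamma_m$, for every $\tau \geq 1$.

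\textbf{Optimizing over $\tau$.} Setting the $\tau$-derivative to zero gives $(1+\gamma)(\tau^2+(m-1)\tau+1) = \tau(2\tau+m-1)$, which is exactly $(1-\gamma)\tau^2 - \gamma(m-1)\tau - (1+\gamma) = 0$. The product of the roots is $-(1+\gamma)/(1-\gamma) < 0$, so there is a unique positive root. At $\tau = 1$ the quadratic equals $-2\gamma m \leq 0$, so that root satisfies $\tau_0 \geq 1$, which keeps it in the admissible range. Substituting $\tau_0$ yields $B_m(\gamma_m)$.

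\textbf{Attainability.} DV holds with equality when $p \propto q\, e^h$. So I take $q$ uniform on $m+1$ tokens and
\[
p = \frac{1}{Z}\bigl(\tau^2, \tau, \ldots, \tau, 1\bigr), \qquad Z = \tau^2 + (m-1)\tau + 1.
\]
Then $\gamma_m = (\tau^2-1)/Z$, and it should be checked that this $\tau$ solves the quadratic for $\gamma = \gamma_m$. Under worst-case tie-breaking, $x_0$ is then not in $\mathrm{top}_m(q)$. Since $\gamma_m = (\tau^2-1)/Z$ is continuous and increasing from $0$ toward $1$ as $\tau$ runs over $[1,\infty)$, every $\gamma \in (0,1)$ is realized.

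\textbf{Asymptotics and main obstacle.} For the expansion, set $\tau_0 = 1 + u$ and solve the quadratic perturbatively, giving $u = \tfrac{m+1}{2}\gamma + O(\gamma^2)$ with the higher terms determined order by order. Substituting into $B_m$ and Taylor-expanding produces the $\gamma^2$ and $\gamma^4$ coefficients, with odd orders cancelling by a symmetry check. Inverting $B_m(\gamma) > \epsilon$ then gives the stated sufficient margin. I expect this $\gamma^4$ coefficient bookkeeping to be the most error-prone step. The conceptually delicate point is choosing $h$ so that $\mathbb{E}_p[h]$ is tail-independent while the $q^\ast$-side bound uses only the constraints $b \geq a$ and $a \leq 1/(m+1)$.
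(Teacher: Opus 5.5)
Your proposal is correct and follows the paper's proof essentially step for step. It uses the same Donsker--Varadhan test function, the same two constraints $q(x_{(m)})\ge q(x_0)$ and $q(x_0)\le 1/(m+1)$, the same stationarity quadratic for $\tau_0$, and the same extremal pair $p\propto(\tau^2,\tau,\dots,\tau,1)$ with $q$ uniform; applying DV to the minimizer $q^\ast$ rather than to an arbitrary $q$ feasible for $M(S^\ast)$ is equivalent. The check you left open is immediate, since $(1-\gamma)\tau^2-\gamma(m-1)\tau-(1+\gamma)=(\tau^2-1)-\gamma Z$. Also, the quadratic at $\tau=1$ equals $-\gamma(m+1)$, not $-2\gamma m$, but it is still negative, so the conclusion is unaffected.
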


\begin{proof}
The Donsker-Varadhan representation~\eqref{eq:dv} gives a lower bound on $\mathrm{KL}(p \,\Vert\, q)$ for any choice of $h: V \to \mathbb{R}$. For $\tau > 1$, we use the function
\[
h(v) \;=\; \begin{cases} 2 \log \tau & v = x_0, \\ 0 & v = x_{(m)}, \\ \log \tau & \text{otherwise.} \end{cases}
\]

We first compute $\mathbb{E}_p[h]$. The contribution from $x_0$ is $2 p(x_0) \log \tau$, from $x_{(m)}$ is $0$, and from all other tokens is $(1 - p(x_0) - p(x_{(m)})) \log \tau$. Summing gives
\[
\mathbb{E}_p[h] \;=\; \bigl(1 + p(x_0) - p(x_{(m)})\bigr) \log \tau \;=\; (1 + \gamma_m) \log \tau.
\]

Next, we bound $\mathbb{E}_q[e^h]$. By construction,
\[
\mathbb{E}_q[e^h] \;=\; \tau^2 q(x_0) + \tau\bigl(1 - q(x_0) - q(x_{(m)})\bigr) + q(x_{(m)}) \;=\; \tau + (\tau^2 - \tau) q(x_0) + (1 - \tau) q(x_{(m)}).
\]

By Theorem~\ref{thm:tree-exact}, we have
\[
R_p^{\mathrm{tree}}(m)=M(S^\ast),
\qquad
S^\ast=\{x_{(1)},\ldots,x_{(m)}\}.
\]
Therefore, it suffices to lower-bound \(\mathrm{KL}(p\Vert q)\) for arbitrary $q$ feasible for $M(S^\ast)$, i.e. for arbitrary $q$ satisfying
\[
q(x_{(i)})\geq q(x_0),\qquad i=1,\ldots,m.
\]
For such $q$, in particular, $q(x_{(m)})\geq q(x_0)$. With $\tau > 1$, the coefficient $(1 - \tau)$ is negative, so
\[
\mathbb{E}_q[e^h] \;\leq\; \tau + (\tau^2 - \tau) q(x_0) + (1 - \tau) q(x_0) \;=\; \tau + (\tau - 1)^2 q(x_0).
\]
Furthermore, the $m+1$ tokens $\{x_0, x_{(1)}, \ldots, x_{(m)}\}$ all have $q$-value $\geq q(x_0)$, so we must have $q(x_0) \leq 1/(m+1)$. Hence
\[
\mathbb{E}_q[e^h] \;\leq\; \tau + \frac{(\tau - 1)^2}{m+1} \;=\; \frac{\tau^2 + (m-1)\tau + 1}{m+1}.
\]

Substituting both bounds into~\eqref{eq:dv} gives for every $\tau > 1$:
\[
\mathrm{KL}(p \,\Vert\, q) \;\geq\; (1 + \gamma_m) \log \tau \;-\; \log \frac{\tau^2 + (m-1)\tau + 1}{m+1}.
\]

We now optimize for $\tau$. Differentiating with respect to $\tau$, setting the derivative to zero, and rearranging:
\[
(1 - \gamma_m)\tau^2 \;-\; \gamma_m(m-1)\tau \;-\; (1 + \gamma_m) \;=\; 0.
\]
The product of roots is $-(1+\gamma_m)/(1-\gamma_m) < 0$ for $\gamma_m \in (0, 1)$, so this quadratic has exactly one positive root, which we denote $\tau_0$. $\tau_0 > 1$ since plugging in $\tau = 1$ gives us $-m\gamma_m < 0$ and the quadratic is positive for $\tau \rightarrow \infty$. 
\[
\tau_0 \;=\; \frac{\gamma_m(m-1) + \sqrt{\gamma_m^2(m-1)^2 + 4(1 - \gamma_m^2)}}{2(1 - \gamma_m)}.
\]

We verify that $\tau_0$ is a maximizer (not a minimizer) of the right-hand side. Let
\[
F(\tau) \;:=\; (1 + \gamma_m) \log \tau - \log \frac{\tau^2 + (m-1)\tau + 1}{m+1}.
\]
At the boundary $\tau = 1$, $F(1) = -\log\bigl((m+1)/(m+1)\bigr) = 0$ and
\[
F'(1) \;=\; (1 + \gamma_m) - \frac{m + 1}{m + 1} \;=\; \gamma_m \;>\; 0,
\]
so $F$ is increasing at $\tau = 1$. As $\tau \to \infty$, $F(\tau) \sim (\gamma_m - 1) \log \tau \to -\infty$ since $\gamma_m < 1$. Hence $F$ attains its maximum on $(1, \infty)$ at an interior critical point. Since $\tau_0$ is the unique such critical point, it is the maximizer, and $B_m(\gamma_m) := F(\tau_0)$ is the optimal lower bound from this family.

The resulting bound is $B_m(\gamma_m)$ as stated. We finally derive the series $B_m(\gamma)$ near $\gamma = 0$. Substituting the Puiseux expansion of $\tau_0$ at $\gamma = 0$, we get
\[
B_m(\gamma) \;=\; \frac{m+1}{4}\gamma^2 \;+\; \frac{(5-m)(m+1)^2}{192}\gamma^4 \;+\; O(\gamma^6).
\]
The bound $B_m(\gamma)$ can be attained exactly. For every $\gamma \in (0, 1)$, there exist $p, q$ with $\gamma_m = \gamma$ and $\mathrm{KL}(p \,\Vert\, q) = B_m(\gamma)$. Let $\tau = \tau_0(\gamma)$ and $A = \tau^2 + (m-1)\tau + 1$. Take $V = \{x_0, x_1, \ldots, x_m\}$ and
\[
p(x_0) \;=\; \frac{\tau^2}{A}, \qquad p(x_i) \;=\; \frac{\tau}{A} \;\;\text{for } i = 1, \ldots, m-1, \qquad p(x_m) \;=\; \frac{1}{A},
\]
with $q$ uniform on $V$, i.e.\ $q(v) = 1/(m+1)$ for every $v$. Then $\sum_v p(v) = A/A = 1$, and
\[
p(x_0) - p(x_m) \;=\; \frac{\tau^2 - 1}{A} \;=\; \gamma,
\]
where the last equality follows from the constraint at the optimum (equivalently, $\tau$ satisfies the quadratic by construction). Under the worst-case tie-breaking convention, $q$ uniform places $x_0$ outside $\mathrm{top}_m(q)$, so $q \in \mathcal{R}^{\text{tree}}_m$. Computing the KL,
\begin{align*}
\mathrm{KL}(p \,\Vert\, q) \;&=\; \frac{\tau^2}{A} \log \frac{(m+1)\tau^2}{A} \;+\; \frac{(m-1)\tau}{A} \log \frac{(m+1)\tau}{A} \;+\; \frac{1}{A} \log \frac{m+1}{A} \\
&=\; \frac{2\tau^2 + (m-1)\tau}{A} \log \tau \;+\; \frac{\tau^2 + (m-1)\tau + 1}{A} \log \frac{m+1}{A} \\
&=\; (1 + \gamma) \log \tau \;-\; \log \frac{A}{m+1} \;=\; B_m(\gamma),
\end{align*}
where the second line groups the $\log \tau$ and $\log((m+1)/A)$ contributions, and the third uses the optimality identity $(2\tau^2 + (m-1)\tau)/A = 1 + \gamma$.

For the acceptance threshold, if $B_m(\gamma_m) > \epsilon \geq \mathrm{KL}(p \,\Vert\, q)$, the draft is accepted. Inverting the leading term gives the stated bound $\gamma_m > \sqrt{\frac{4\epsilon}{m+1}}(1 + O(\epsilon))$.
\end{proof}

Theorem~\ref{thm:tree-dv} shows that tree-branching at a single level reduces the required margin from $\sqrt{2\epsilon}$ to $\sqrt{4\epsilon/(m+1)}$ which is a $\sqrt{(m+1)/2}$ improvement for the same training KL bound. Equivalently, branching with $m$ candidates guarantees acceptance under a KL bound $(m+1)/2$ times larger than strict greedy decoding at the same margin.

\subsection{General upper bound}

Similar to the single-token case, we can also derive a general upper bound on the KL threshold for tree-based acceptance similar to Theorem~\ref{thm:log2-ceiling}. However, we observe that instead of $\log(2)$, the upper bound is $\log(m+1)$.

\begin{theorem}[Upper bound for tree-based certificates]
\label{thm:tree-upper}
Let $R^{\text{tree}}_p(m)$ be the exact tree certificate of Theorem~\ref{thm:tree-exact}. Then
\[
R^{\text{tree}}_p(m) \;\leq\; \log(m+1),
\]
and the bound is sharp in the limit $p(x_0) \to 1$ with $p(x_{(1)}), \ldots, p(x_{(m)}) \to 0$.
\end{theorem}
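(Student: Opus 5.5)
We mirror the proof of Theorem~\ref{thm:log2-ceiling}, starting from the closed form of Theorem~\ref{thm:tree-exact}:
\[
R^{\text{tree}}_p(m) \;=\; s\bigl(\log k - H(\nu)\bigr), \qquad \nu_v := p(v)/s \text{ on } A,\; k = |A| \le m+1.
\]
Since $s \le 1$ and $H(\nu) \ge 0$, this immediately gives $R^{\text{tree}}_p(m) \le \log k \le \log(m+1)$. So the upper bound itself is almost free; the only care needed is to justify the closed form. Its two ingredients are already available. First, $A \subseteq \{x_0\} \cup S^\ast$, which gives $k \le m+1$. Second, $r = s/k$, which follows by summing equation~\ref{eq:r-defining-eq} over the active set: tokens with $p(v) \ge r$ contribute zero to the positive parts, so the identity reads $\sum_{v \in A}(r - p(v)) = 0$.

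If a sharper statement in the spirit of Theorem~\ref{thm:log2-ceiling} is desired, we would refine this using the structure of $A$. By construction, $\nu_{x_0} \ge 1/k$, and every other $v \in A$ has $p(v) < r = s/k$, so $\nu_v \le 1/k$. The map $\nu \mapsto \log k - H(\nu)$ is convex, so its maximum over this polytope sits at a vertex. The vertices have the form $(1-(j)/k, 1/k, \ldots, 1/k, 0, \ldots, 0)$. The worst vertex is $\nu = (1, 0, \ldots, 0)$, which gives exactly $\log k$. This confirms that $\log k$ cannot be improved without using $s < 1$ or $p > 0$. It also shows that the supremum of $\log(m+1)$ requires $k = m+1$, meaning all of $S^\ast$ is active.

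For sharpness, we take $V = \{x_0, x_{(1)}, \ldots, x_{(m)}\}$ with $p(x_0) = 1 - m\delta$ and $p(x_{(i)}) = \delta$, and let $\delta \to 0$. Solving equation~\ref{eq:r-defining-eq} gives $r = 1/(m+1)$, since $p(x_0) - r = m(r - \delta)$. Because $\delta < r$, every $x_{(i)}$ is active, so $A = V$, $s = 1$, and $k = m+1$. Then
\[
R^{\text{tree}}_p(m) \;=\; \mathrm{KL}(p \,\Vert\, \mathrm{Unif}(V)) \;=\; \log(m+1) - H(p) \;\to\; \log(m+1),
\]
since $H(p) \to 0$. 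Equivalently, the uniform $q$ is feasible under worst-case tie-breaking, and its KL divergence from $p$ tends to $\log(m+1)$.

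The main (mild) obstacle is confirming that this limit matches the exact certificate rather than merely an upper estimate. That is, we must check that the uniform $q$ is the true minimizer for this $p$. This holds because it is precisely the $q^\ast$ produced by Theorem~\ref{thm:tree-exact} with $A = V$. This is the only point where the active-set characterization is genuinely used.
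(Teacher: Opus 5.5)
Your proof is correct and matches the paper's argument: you bound $R^{\text{tree}}_p(m) = s(\log|A| - H(\nu)) \le \log|A| \le \log(m+1)$ using $A \subseteq \{x_0\} \cup S^\ast$, and you get sharpness from the same family $p(x_0) = 1 - m\delta$, $p(x_{(i)}) = \delta$, where $r = 1/(m+1)$ and $A = V$. Your extra vertex analysis of the polytope is also correct, but the paper omits it and it is not needed, since $H(\nu) \ge 0$ and $s \le 1$ already give the bound.
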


\begin{proof}
By Theorem~\ref{thm:tree-exact},
\[
R^{\text{tree}}_p(m) \;=\; s \cdot \mathrm{KL}\!\left( \frac{p|_A}{s} \,\Bigm\Vert\, \mathrm{Unif}(A) \right),
\]
where $A = \{x_0\} \cup \{v \in S^\ast : p(v) < r\}$ is the active set and $s = \sum_{v \in A} p(v)$. Writing $\nu := p|_A / s \in \Delta^{|A|-1}$,
\[
R^{\text{tree}}_p(m) \;=\; s\bigl(\log |A| - H(\nu)\bigr) \;\leq\; s \log |A| \;\leq\; \log |A|,
\]
using $H(\nu) \geq 0$ and $s \leq 1$. Since $A \subseteq \{x_0\} \cup S^\ast$ and $|S^\ast| = m$, we have $|A| \leq m + 1$, giving $R^{\text{tree}}_p(m) \leq \log(m+1)$.

For sharpness, take $V = \{x_0, x_1, \ldots, x_m\}$ with
\[
p(x_0) = 1 - m\epsilon, \qquad p(x_i) = \epsilon \;\;\text{for } i = 1, \ldots, m,
\]
for small $\epsilon > 0$. Here $S^\ast = \{x_1, \ldots, x_m\}$. Solving the defining equation~\eqref{eq:r-defining-eq} for $r$: since all $m$ tokens in $S^\ast$ have equal probability $\epsilon < r$, the active set is $A = \{x_0, x_1, \ldots, x_m\}$ with $|A| = m+1$, and $r = s/(m+1) = 1/(m+1)$. The certificate is
\[
R^{\text{tree}}_p(m) \;=\; (1 - m\epsilon) \log\bigl((m+1)(1 - m\epsilon)\bigr) \;+\; m\epsilon \log\bigl((m+1)\epsilon\bigr) \;\xrightarrow{\epsilon \to 0}\; \log(m+1),
\]
since the first term tends to $\log(m+1)$ and the second, $m\epsilon \log((m+1)\epsilon) \to 0$. Hence the bound is attained in the limit.
\end{proof}

\section{Empirical Study of Local Acceptance}

In many of our results, we have derived bounds in terms of various kinds of margins on the target model probability distribution. In this section, we provide empirical evidence that margins are typically large in practice, but also that small margins can occur in some cases.

\subsection{Setup}
We estimate the target model distributions of Qwen/Qwen3-1.7B and Qwen/Qwen3-4B models \citep{qwen3}. We use the UltraChat 200k dataset \citep{hfdataset} for evaluation. We choose 500 random prompts with at least 64 prompt tokens. Our generations last 128 steps. We finally store and analyze the target model distributions at each step of generation.

\subsection{Model Distributions}

We first analyze the distribution of the top token probability $p(x_0)$ across decoding steps in Table~\ref{tab:topk-cumprob} and Figure~\ref{fig:qwen-cumprob-top1}.

We first observe that target models are typically very confident. Indeed, the mean top token probability is $\approx 0.85$ for Qwen3-1.7B and $\approx 0.87$ for Qwen3-4B, and the median is even higher at $\approx 0.97$ and $\approx 0.99$ respectively. The gap between the mean and median indeed suggests a long tail of low-margin steps which is where the relaxed criteria and tree-based acceptance can provide the most benefit.

We also observe that the median cumulative probability of the top-3 tokens for both models rounds to 1, meaning that almost all of the probability mass concentrates on a few tokens on typical steps. Even on the hard tail of low-margin steps (p5), the top 5 tokens carry $\geq 0.79$ of the mass and the top 10 carry $\geq 0.90$.

These observations do not by themselves imply high acceptance for an arbitrary drafter. Instead, they show that many target model distributions have local acceptance certificates. Consequently, a drafter that achieves small KL on such prefixes would be certified for acceptance by our bounds. The low-probability tail is where relaxed or tree-based criteria can improve the KL thresholds and allow higher acceptance rates.

\begin{table}[t]
\centering
\caption{Distribution of top-$k$ cumulative target probability $\sum_{i=1}^{k} p_{(i)}$ over decoding steps, where $p_{(i)}$ is the $i$-th largest target probability.}
\label{tab:topk-cumprob}
\small
\begin{tabular}{l rrrr c rrrr}
\toprule
& \multicolumn{4}{c}{Qwen3-1.7B} & & \multicolumn{4}{c}{Qwen3-4B} \\
\cmidrule{2-5} \cmidrule{7-10}
$k$ & Mean & Med & p5 & p25 & & Mean & Med & p5 & p25 \\
\midrule
1   & 0.845 & 0.969 & 0.343 & 0.846 & & 0.871 & 0.988 & 0.391 & 0.886 \\
3   & 0.963 & 1.000 & 0.715 & 0.984 & & 0.974 & 1.000 & 0.787 & 0.992 \\
5   & 0.980 & 1.000 & 0.792 & 0.995 & & 0.988 & 1.000 & 0.852 & 0.998 \\
10  & 0.991 & 1.000 & 0.901 & 0.999 & & 0.995 & 1.000 & 0.927 & 1.000 \\
25  & 0.997 & 1.000 & 0.964 & 1.000 & & 0.998 & 1.000 & 0.976 & 1.000 \\
\bottomrule
\end{tabular}
\end{table}

\begin{figure*}[t]
\centering
\begin{subfigure}[t]{0.49\textwidth}
\centering
\includegraphics[width=\linewidth]{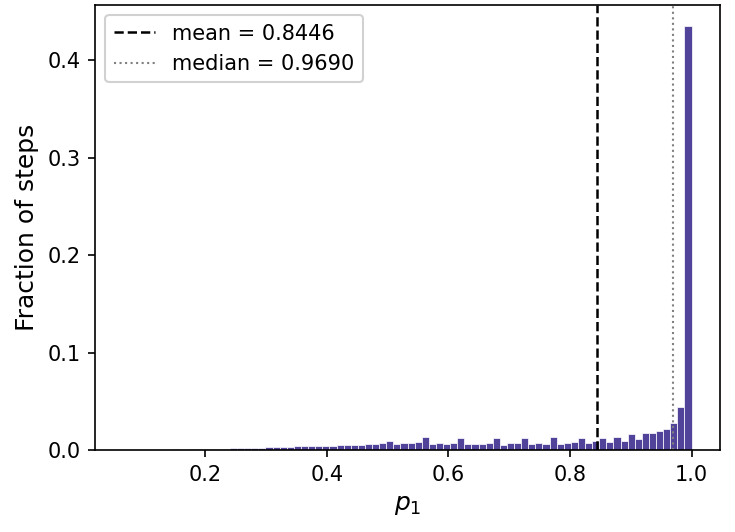}
\caption{Qwen3-1.7B}
\end{subfigure}\hfill
\begin{subfigure}[t]{0.49\textwidth}
\centering
\includegraphics[width=\linewidth]{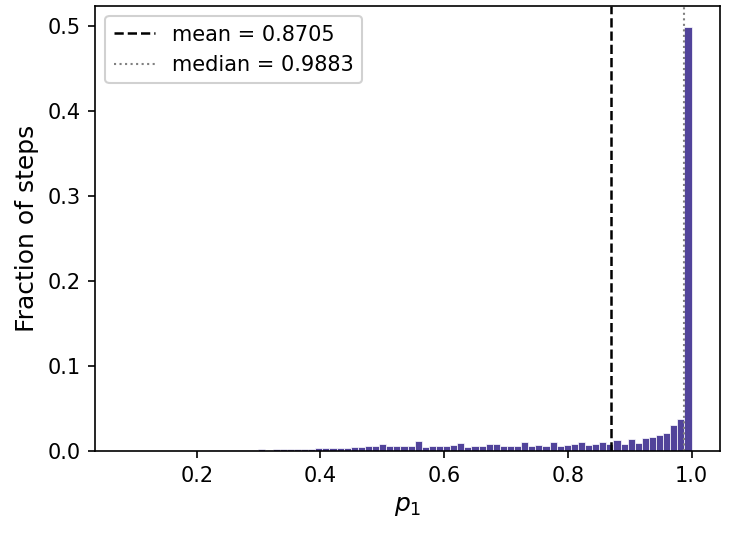}
\caption{Qwen3-4B}
\end{subfigure}
\caption{Distribution of the probability of the argmax token of both models. More than 40\% of steps have a top-1 probability close to 1, however there is a long tail of low-probability steps.}
\label{fig:qwen-cumprob-top1}
\end{figure*}

\subsection{KL Certificates}

We also analyze the exact certificates for each acceptance criterion. For each criterion, we report the Mean, Median, p5 and p25 quantiles of the KL threshold distribution across decoding steps in Table~\ref{tab:main-certificates}. We focus on the lower quantiles because the upper quantiles are typically very large and less informative about the differences between criteria. Figure~\ref{fig:qwen-frac-greedy} shows the fraction of steps with certificates larger than $\epsilon$.

Higher certificates mean that the criterion is more tolerant of training error. We can see that additive relaxed acceptances are both more tolerant than strict greedy acceptance, although median for $t = 0.1$ is very close to the greedy case. The same can be seen for multiplicative relaxed acceptance with $\alpha = 0.5$. These relaxed acceptances mainly increase the certificate size on the tail of low margin steps as shown by increases in p5 quantiles.

On the other hand, Medusa's entropy based acceptance ($\epsilon_0=0.1,\delta_0=0.09$) substantially lifts the tail, with p5 $\approx 0.375$ for both models. This is because the entropy-based threshold is typically much smaller than the top-1 probability, so the margin $\gamma_H$ is typically large even on low-margin steps.

Finally, tree branching at $m = 2$ doubles the median certificate compared to strict greedy acceptance, and substantially lifts the tail as well. Increasing $m$ further continues to increase the certificates, although with diminishing returns.

Note the steep drop at $\log(2)$ for the greedy case which empirically validates the certificate upper bound in Theorem~\ref{thm:log2-ceiling} and corresponds to the large number of confident steps with $p(x_0) \approx 1$ which matches our limiting construction in Theorem~\ref{thm:log2-ceiling}. 

\begin{table}[t]
\centering
\caption{Empirical distribution of exact KL certificates over decoding steps. Higher certificates mean more tolerant of training error. We report the Mean, Median, 5th Percentile and 25th Percentile certificate values.}
\label{tab:main-certificates}
\small
\begin{tabular}{l rrrr c rrrr}
\toprule
& \multicolumn{4}{c}{Qwen3-1.7B} & & \multicolumn{4}{c}{Qwen3-4B} \\
\cmidrule{2-5} \cmidrule{7-10}
Criterion & Mean & Med & p5 & p25 & & Mean & Med & p5 & p25 \\
\midrule
Strict greedy             & 0.438 & 0.559 & 0.006 & 0.160 & & 0.468 & 0.619 & 0.007 & 0.217 \\
Additive ($t = 0.1$)      & 0.445 & 0.559 & 0.023 & 0.178 & & 0.475 & 0.619 & 0.025 & 0.236 \\
Additive ($t = 0.3$)      & 0.482 & 0.573 & 0.102 & 0.272 & & 0.505 & 0.625 & 0.107 & 0.315 \\
Mult. ($\alpha = 0.5$)    & 0.464 & 0.560 & 0.061 & 0.235 & & 0.492 & 0.619 & 0.066 & 0.288 \\
Mult. ($\alpha = 0.1$)    & 0.555 & 0.610 & 0.279 & 0.435 & & 0.573 & 0.643 & 0.296 & 0.465 \\
Medusa ($\epsilon_0=0.1,\delta_0=0.09$) & 0.598 & 0.646 & 0.375 & 0.522 & & 0.606 & 0.664 & 0.378 & 0.536 \\
Tree ($m = 2$)            & 0.772 & 0.962 & 0.071 & 0.448 & & 0.820 & 1.036 & 0.099 & 0.538 \\
Tree ($m = 4$)            & 1.213 & 1.456 & 0.249 & 0.862 & & 1.276 & 1.541 & 0.327 & 0.995 \\
Tree ($m = 8$)            & 1.736 & 2.038 & 0.531 & 1.355 & & 1.814 & 2.126 & 0.663 & 1.520 \\
\bottomrule
\end{tabular}
\end{table}

\begin{figure*}[t]
\centering
\begin{subfigure}[t]{0.49\textwidth}
\centering
\includegraphics[width=\linewidth]{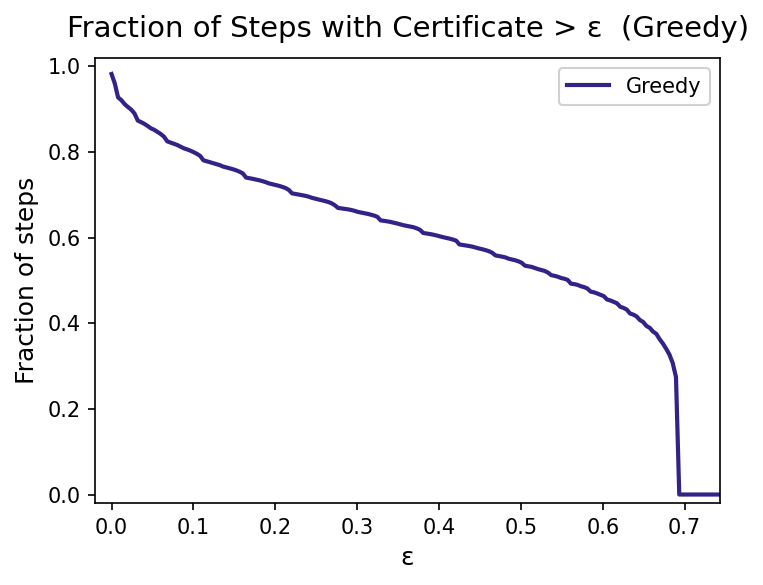}
\caption{Qwen3-1.7B}
\end{subfigure}\hfill
\begin{subfigure}[t]{0.49\textwidth}
\centering
\includegraphics[width=\linewidth]{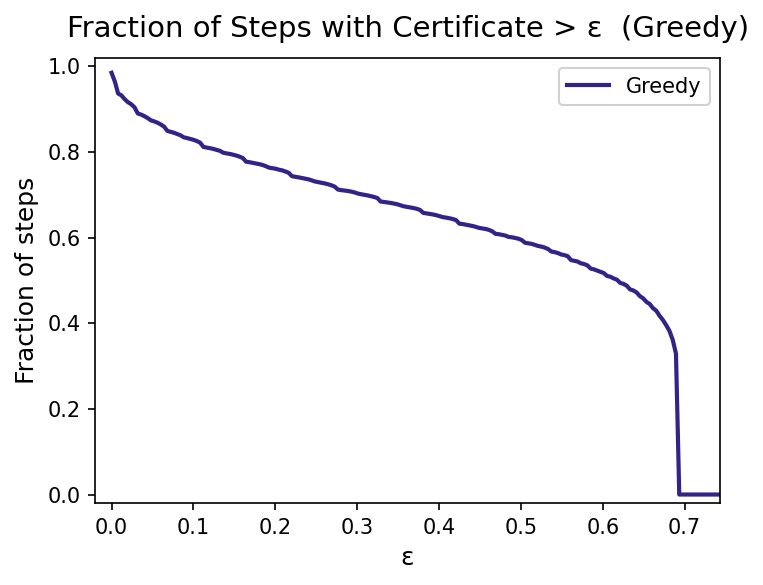}
\caption{Qwen3-4B}
\end{subfigure}
\caption{Both models have similar output distributions and hence have similar greedy certificate distributions. We note that there is a sharp drop at $\log(2)$ because the certificate is bounded above by $\log(2)$ which corresponds to the large number of very certain generation steps. We maintain the same x-limits across all graphs for easier comparison.}
\label{fig:qwen-frac-greedy}
\end{figure*}

In Table~\ref{tab:ablation-relaxed}, we analyze the sensitivity of the additive and multiplicative certificates to the threshold parameters $t$ and $\alpha$. We can see that small values of $t$ and large values of $\alpha$ are close to the greedy case, while large $t$ and small $\alpha$ substantially increase the certificates. We note that the values of $t$ smaller than 0.1 and $\alpha$ larger than 0.5 have negligible effect on the certificates.

This phenomenon is also illustrated in Figure~\ref{fig:qwen-frac-additive} and Figure~\ref{fig:qwen-frac-multiplicative}, where we plot the fraction of steps with certificates larger than $\epsilon$ for different values of $t$ and $\alpha$. We can see that the curves for $t = 0.01$ and $\alpha = 0.9$ are almost identical to the greedy case, while larger $t$ and smaller $\alpha$ substantially increase the fraction of steps with large certificates. We also note that the same drop at $\log(2)$ occurs for all curves, which corresponds to the large number of very certain steps with $p(x_0) \approx 1$ consistent with Theorem~\ref{thm:log2-ceiling}.

\begin{figure*}[t]
\centering
\begin{subfigure}[t]{0.49\textwidth}
\centering
\includegraphics[width=\linewidth]{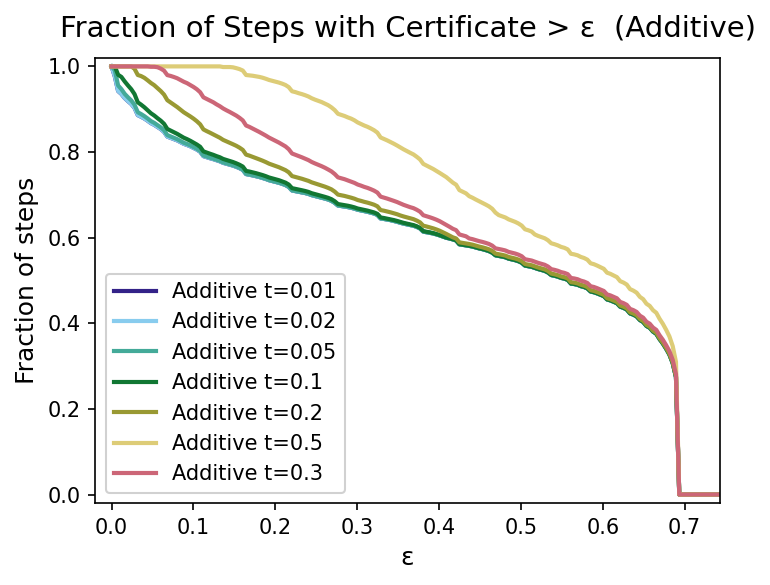}
\caption{Qwen3-1.7B}
\end{subfigure}\hfill
\begin{subfigure}[t]{0.49\textwidth}
\centering
\includegraphics[width=\linewidth]{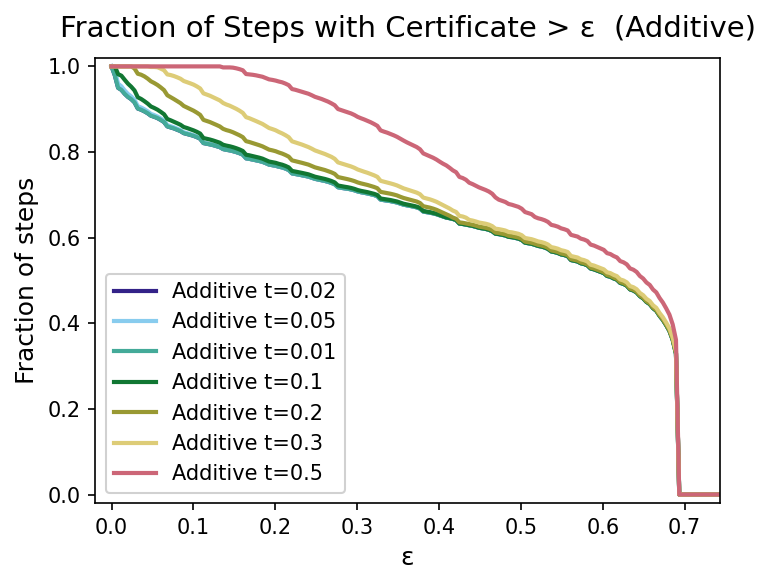}
\caption{Qwen3-4B}
\end{subfigure}
\caption{Additive relaxed certificate comparison across different values of $t$. We see the same drop at $\log(2)$ as in the greedy case corresponding to the limiting case in Theorem~\ref{thm:log2-ceiling}.}
\label{fig:qwen-frac-additive}
\end{figure*}

\begin{figure*}[t]
\centering
\begin{subfigure}[t]{0.49\textwidth}
\centering
\includegraphics[width=\linewidth]{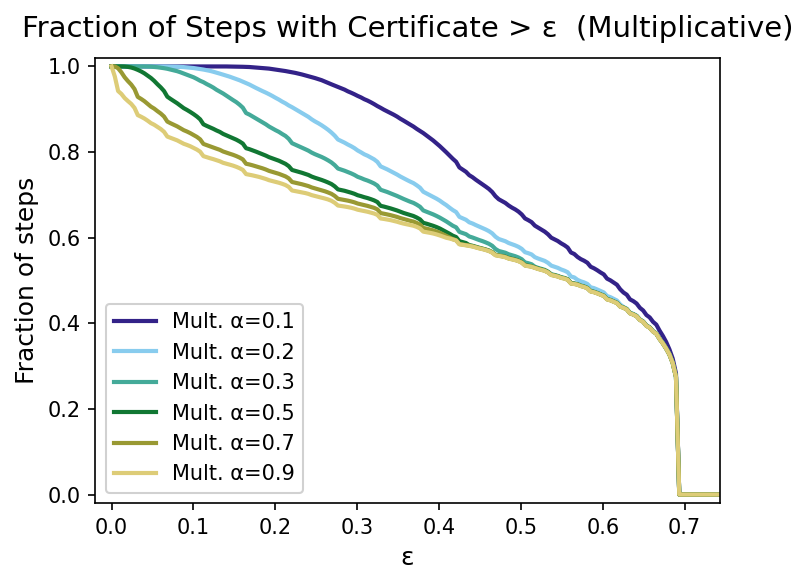}
\caption{Qwen3-1.7B}
\end{subfigure}\hfill
\begin{subfigure}[t]{0.49\textwidth}
\centering
\includegraphics[width=\linewidth]{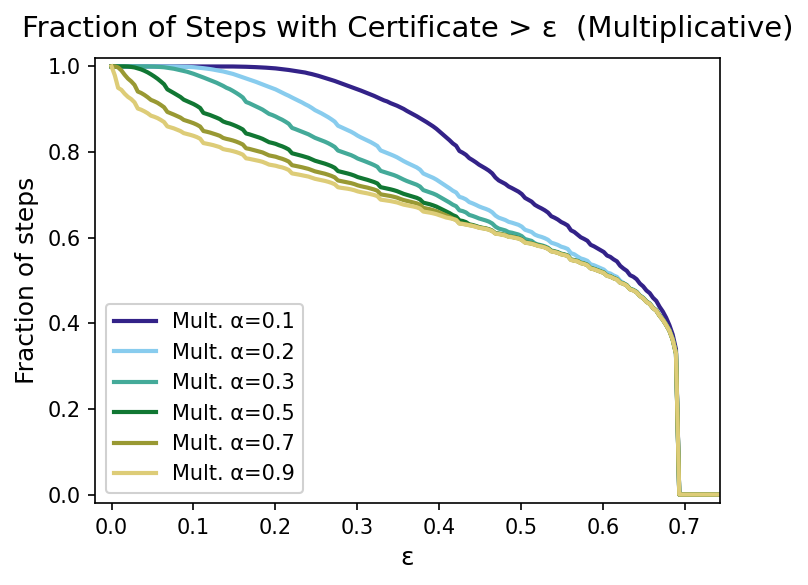}
\caption{Qwen3-4B}
\end{subfigure}
\caption{Multiplicative relaxed certificate comparison across different values of $\alpha$. We see the same drop at $\log(2)$ as in the greedy case corresponding to the limiting case in Theorem~\ref{thm:log2-ceiling}.}
\label{fig:qwen-frac-multiplicative}
\end{figure*}

\begin{table}[t]
\centering
\caption{Sensitivity of additive and multiplicative certificates to the threshold parameters.}
\label{tab:ablation-relaxed}
\small
\begin{minipage}[t]{0.47\textwidth}
\centering
\textbf{Additive certificate}\par\vspace{0.25em}
\begin{tabular}{l rr}
\toprule
$t$ & p5 (1.7B) & p5 (4B) \\
\midrule
0.01 & 0.008 & 0.008 \\
0.05 & 0.011 & 0.014 \\
0.10 & 0.023 & 0.025 \\
0.20 & 0.056 & 0.061 \\
0.30 & 0.102 & 0.107 \\
0.50 & 0.217 & 0.218 \\
\bottomrule
\end{tabular}
\end{minipage}
\begin{minipage}[t]{0.47\textwidth}
\centering
\textbf{Multiplicative certificate}\par\vspace{0.25em}
\begin{tabular}{l rr}
\toprule
$\alpha$ & p5 (1.7B) & p5 (4B) \\
\midrule
0.9 & 0.008 & 0.008 \\
0.7 & 0.027 & 0.030 \\
0.5 & 0.061 & 0.066 \\
0.3 & 0.125 & 0.141 \\
0.2 & 0.176 & 0.196 \\
0.1 & 0.279 & 0.296 \\
\bottomrule
\end{tabular}
\end{minipage}
\end{table}

For the mixed certificate of top-m and additive relaxed acceptance case, we notice that for small $t$ as we expected, the certificate is mostly determined by the additive relaxation threshold, while for smaller $m$ and larger $t$ the top-m gate has a greater effect. Namely, we find that the certificates for $t=0.1,m=2$ and $t=0.2,m=2$ are mostly constrained by $m=2$ while for larger $m$ the additive branch is more constraining. In addition, for $t=0.2$ the effect of the top-m condition can be seen to be more pronounced as $m$ increases as there is a small distinction between $m=4$ and $m=6$ in both models.

\begin{table}[t]
\centering
\caption{Top-$m$ additive relaxed acceptance 5th percentile of the certificate.}
\label{tab:ablation-topm}
\small
\begin{tabular}{l rrrr c rrrr}
\toprule
& \multicolumn{4}{c}{p5 (1.7B)} & & \multicolumn{4}{c}{p5 (4B)} \\
\cmidrule{2-5} \cmidrule{7-10}
$t$ & $m=2$ & $m=4$ & $m=6$ & $m=8$ & & $m=2$ & $m=4$ & $m=6$ & $m=8$ \\
\midrule
0.01 & 0.008 & 0.008 & 0.008 & 0.008 & & 0.011 & 0.011 & 0.011 & 0.011 \\
0.05 & 0.011 & 0.012 & 0.012 & 0.012 & & 0.014 & 0.015 & 0.015 & 0.015 \\
0.10 & 0.018 & 0.023 & 0.024 & 0.024 & & 0.023 & 0.027 & 0.027 & 0.027 \\
0.20 & 0.037 & 0.054 & 0.056 & 0.057 & & 0.045 & 0.060 & 0.063 & 0.063 \\
\bottomrule
\end{tabular}
\end{table}

Finally, we also examine the Tree-based acceptance certificates. We also observe the $\log(m+1)$ upper bound from Theorem~\ref{thm:tree-upper} in practice for $m$-branching which is indeed higher than $\log(2)$ for single-token criteria.

\begin{figure*}[t]
\centering
\begin{subfigure}[t]{0.49\textwidth}
\centering
\includegraphics[width=\linewidth]{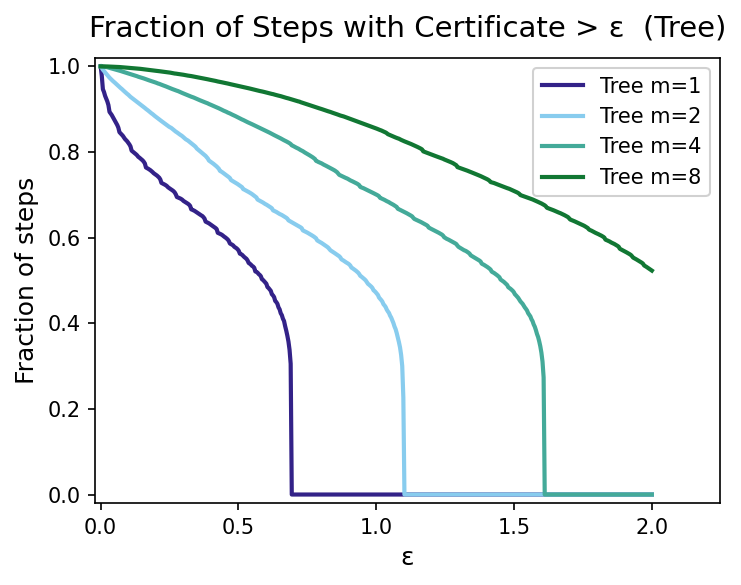}
\caption{Qwen3-1.7B}
\end{subfigure}\hfill
\begin{subfigure}[t]{0.49\textwidth}
\centering
\includegraphics[width=\linewidth]{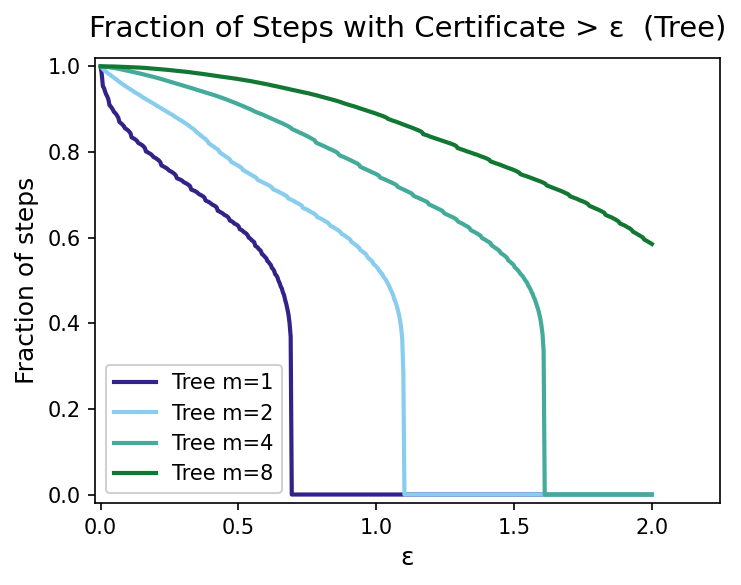}
\caption{Qwen3-4B}
\end{subfigure}
\caption{Tree certificate comparison.}
\label{fig:qwen-frac-tree}
\end{figure*}

\subsection{Guaranteed Acceptance Lengths}
\label{sec:acceptance-length}

In practice, one of the most common metrics for evaluating the quality of a draft model is the expected acceptance length, i.e., the expected number of tokens accepted before the first rejection. We can use our certificates to derive lower bounds on this quantity assuming that each draft can have unlimited length similar to \citet{yin2024a}. 

For a fixed $\epsilon$, call a position $\epsilon$-certified for a criterion if its exact certificate exceeds $\epsilon$. We compute the average length of consecutive $\epsilon$-certified positions along the generated trajectories. If a drafter satisfies $\mathrm{KL}(p_h\Vert q_h) < \epsilon$ at every position in such a run, then the corresponding tokens are guaranteed to be accepted.

In Tables~\ref{tab:mal-1p7b} and~\ref{tab:mal-4b}, we report these certifiable acceptance lengths assuming that at each step we are able to guarantee $\mathrm{KL}(p, q) < \epsilon$. We vary $\epsilon$ from 0.01 to 0.69 (which is approximately $\log(2)$, the universal ceiling for single-token criteria) to see how the expected certifiable acceptance length changes as we relax the training KL bound. A token is accepted if its exact certificate exceeds $\epsilon$. The number of steps we sampled exceeds 63000.

\begin{table}[t]
\centering
\caption{Mean certifiable acceptance length for Qwen3-1.7B as a function of the training KL bound $\epsilon$. Each entry is the empirical mean accepted run starting from a random decoding position, where a position is accepted if its exact certificate exceeds $\epsilon$. We capped the run lengths at 100. The ones exceeding 100 are marked with $>$100.}
\label{tab:mal-1p7b}
\small
\setlength{\tabcolsep}{4pt}
\begin{tabular}{l rrrrrrrr}
\toprule
& \multicolumn{8}{c}{$\epsilon$} \\
\cmidrule(lr){2-9}
Criterion & 0.01 & 0.05 & 0.10 & 0.20 & 0.30 & 0.40 & 0.50 & 0.69 \\
\midrule
Strict greedy             & 12.06 &  6.51 &  4.83 &  3.53 &  2.90 &  2.49 &  2.16 & 1.36 \\
Additive ($t = 0.1$)      & 34.57 &  8.37 &  5.41 &  3.71 &  2.97 &  2.51 &  2.16 & 1.37 \\
Additive ($t = 0.3$)      &$>$100 &$>$100 & 17.79 &  5.53 &  3.55 &  2.73 &  2.23 & 1.38 \\
Mult.\ ($\alpha = 0.5$)   &$>$100 & 26.16 &  8.42 &  4.46 &  3.27 &  2.61 &  2.17 & 1.37 \\
Mult.\ ($\alpha = 0.1$)   &$>$100 &$>$100 &$>$100 & 63.18 & 13.07 &  5.21 &  2.86 & 1.38 \\
Medusa ($\epsilon_0=0.1,\delta_0 = 0.09$) &$>$100 &$>$100 &$>$100 &$>$100 & 45.38 & 12.57 &  4.61 & 1.43 \\
\midrule
Tree ($m = 2$)            & 57.15 & 22.25 & 13.57 &  8.02 &  5.81 &  4.36 &  3.55 & 2.72 \\
Tree ($m = 4$)            &$>$100 & 68.99 & 41.51 & 21.95 & 14.14 & 10.31 &  7.88 & 5.31 \\
Tree ($m = 8$)            &$>$100 &$>$100 & 90.03 & 54.67 & 36.46 & 25.27 & 18.78 &11.93 \\
\bottomrule
\end{tabular}
\end{table}

\begin{table}[t]
\centering
\caption{Mean certifiable acceptance length for Qwen3-4B as a function of the training KL bound $\epsilon$. Each entry is the empirical mean accepted run starting from a random decoding position, where a position is accepted if its exact certificate exceeds $\epsilon$. We capped the run lengths at 100. The ones exceeding 100 are marked with $>$100.}
\label{tab:mal-4b}
\small
\setlength{\tabcolsep}{4pt}
\begin{tabular}{l rrrrrrrr}
\toprule
& \multicolumn{8}{c}{$\epsilon$} \\
\cmidrule(lr){2-9}
Criterion & 0.01 & 0.05 & 0.10 & 0.20 & 0.30 & 0.40 & 0.50 & 0.69 \\
\midrule
Strict greedy             & 13.65 &  7.42 &  5.60 &  4.07 &  3.31 &  2.81 &  2.44 & 1.47 \\
Additive ($t = 0.1$)      & 36.13 &  9.69 &  6.38 &  4.31 &  3.40 &  2.84 &  2.44 & 1.47 \\
Additive ($t = 0.3$)      &$>$100 & 98.09 & 19.67 &  6.39 &  4.05 &  3.07 &  2.51 & 1.48 \\
Mult.\ ($\alpha = 0.5$)   &$>$100 & 34.21 & 10.44 &  5.32 &  3.78 &  2.97 &  2.45 & 1.47 \\
Mult.\ ($\alpha = 0.1$)   &$>$100 &$>$100 &$>$100 & 74.15 & 16.24 &  6.30 &  3.30 & 1.49 \\
Medusa ($\epsilon_0=0.1,\delta_0=0.09$) &$>$100 &$>$100 &$>$100 & 99.57 & 45.57 & 13.33 &  5.06 & 1.55 \\
\midrule
Tree ($m = 2$)            & 64.57 & 28.39 & 17.25 & 10.24 &  7.30 &  5.27 &  4.19 & 3.16 \\
Tree ($m = 4$)            &$>$100 & 84.46 & 54.05 & 29.54 & 19.03 & 13.82 & 10.44 & 6.65 \\
Tree ($m = 8$)            &$>$100 &$>$100 &$>$100 & 73.77 & 50.31 & 35.21 & 26.67 &16.14 \\
\bottomrule
\end{tabular}
\end{table}

We observe that for very accurate draft models, usually unattainable in practice, the relaxed criteria can cause very long certifiable acceptance runs. However, in the more realistic regimes where $\epsilon$ is larger, we see certifiable acceptance length of around $2$ to $5$ for single-token criteria which are quite consistent with many speculative decoding papers \citep{li2026eagle,chen_dflash_2026}.

Some relaxed acceptance schemes can cause almost every draft to be accepted. This is not necessarily beneficial, since it may cause quality degradation in extreme cases, for instance when the KL bound $\epsilon$ is as large as $0.3$ under our Medusa settings.

Comparing the tree based acceptance to the single-token criteria, we see that it can substantially increase the certifiable acceptance lengths with also no output quality degradation as our tree-based analysis performs strict greedy acceptance. However, this also increases the computational requirements significantly. 

\FloatBarrier
\section{Conclusion}

In this paper, we connected speculative decoding to local stability. While much existing theory focuses on exact stochastic sampling and distribution preservation, many practical decoding variants are governed by deterministic local events such as whether the draft mode matches the target mode, whether a drafted token lies above a relaxed target threshold, or whether a draft tree covers the target greedy token. We showed that these events admit exact KL certificates. For several cases when a single token is drafted, we identified rejection regions that were lower-level sets of the target model distribution. This framework recovers strict greedy decoding, additive and multiplicative relaxation, top-(m) relaxed acceptance, and entropy-based acceptance as special cases. 

Our formulation also enables us to derive lower bounds on these KL thresholds for each criterion, and additionally construct examples that are asymptotically tight and in a few cases exact. We also derived universal upper bounds, showing that no single-token criterion of this form can have a KL threshold larger than $\log(2)$, and that for tree-based decoding with branching factor $m$ this ceiling extends to $\log(m+1)$. Both bounds are tight in the limit.

These results highlight the role of target margins. Greedy decoding is often difficult on target distributions with small margins. Additive relaxed acceptance replaces this uncontrolled margin by a user-chosen tolerance, while multiplicative relaxation provides an acceptance method based on model confidence. Drafting multiple tokens such as branching at each step reduces the margin requirement from $\sqrt{2\epsilon}$ to $\sqrt{4\epsilon/(m+1)}$. Empirical certificate diagnostics on Qwen3 models show that these effects are substantial in practice, particularly on the steps with low margin where greedy acceptance is least stable.

We would like to conclude this paper by posing some additional questions. The guarantees of this paper are local, in that they certify acceptance at a fixed prefix under a draft-target KL bound. A study on how successive drafts change these assumptions is an interesting direction and further bridges theory to practice. Furthermore, many tree-based decoding methods are dynamic, in that the tree structure can change at each draft step. Another question would be to develop a theory for these dynamic methods. Another question is whether other assumptions on the target distribution could lead to improved guarantees.

\bibliography{tmlr}

@misc{chen_dflash_2026,
	title = {{DFlash}: {Block} {Diffusion} for {Flash} {Speculative} {Decoding}},
	copyright = {Creative Commons Attribution 4.0 International},
	shorttitle = {{DFlash}},
	url = {https://arxiv.org/abs/2602.06036},
	doi = {10.48550/ARXIV.2602.06036},
	urldate = {2026-06-04},
	publisher = {arXiv},
	author = {Chen, Jian and Liang, Yesheng and Liu, Zhijian},
	year = {2026},
	note = {Version Number: 2},
}

@misc{an_pard-2_2026,
	title = {{PARD}-2: {Target}-{Aligned} {Parallel} {Draft} {Model} for {Dual}-{Mode} {Speculative} {Decoding}},
	copyright = {arXiv.org perpetual, non-exclusive license},
	shorttitle = {{PARD}-2},
	url = {https://arxiv.org/abs/2605.08632},
	doi = {10.48550/ARXIV.2605.08632},
	urldate = {2026-06-04},
	publisher = {arXiv},
	author = {An, Zihao and Liu, Taichi and Liu, Ziqiong and Li, Dong and Liu, Ruofeng and Barsoum, Emad},
	year = {2026},
	note = {Version Number: 1},
}

@misc{chen2023accelerating,
	title = {Accelerating {Large} {Language} {Model} {Decoding} with {Speculative} {Sampling}},
	copyright = {Creative Commons Attribution 4.0 International},
	url = {https://arxiv.org/abs/2302.01318},
	doi = {10.48550/ARXIV.2302.01318},
	urldate = {2026-06-04},
	publisher = {arXiv},
	author = {Chen, Charlie and Borgeaud, Sebastian and Irving, Geoffrey and Lespiau, Jean-Baptiste and Sifre, Laurent and Jumper, John},
	year = {2023},
	note = {Version Number: 1},
}

@inproceedings{miao_specinfer_2024,
	address = {La Jolla CA USA},
	title = {{SpecInfer}: {Accelerating} {Large} {Language} {Model} {Serving} with {Tree}-based {Speculative} {Inference} and {Verification}},
	isbn = {979-8-4007-0386-7},
	shorttitle = {{SpecInfer}},
	url = {https://dl.acm.org/doi/10.1145/3620666.3651335},
	doi = {10.1145/3620666.3651335},
	language = {en},
	urldate = {2026-06-05},
	booktitle = {Proceedings of the 29th {ACM} {International} {Conference} on {Architectural} {Support} for {Programming} {Languages} and {Operating} {Systems}, {Volume} 3},
	publisher = {ACM},
	author = {Miao, Xupeng and Oliaro, Gabriele and Zhang, Zhihao and Cheng, Xinhao and Wang, Zeyu and Zhang, Zhengxin and Wong, Rae Ying Yee and Zhu, Alan and Yang, Lijie and Shi, Xiaoxiang and Shi, Chunan and Chen, Zhuoming and Arfeen, Daiyaan and Abhyankar, Reyna and Jia, Zhihao},
	month = apr,
	year = {2024},
	pages = {932--949},
}

@inproceedings{li_eagle-2_2024,
	address = {Miami, Florida, USA},
	title = {{EAGLE}-2: {Faster} {Inference} of {Language} {Models} with {Dynamic} {Draft} {Trees}},
	shorttitle = {{EAGLE}-2},
	url = {https://aclanthology.org/2024.emnlp-main.422},
	doi = {10.18653/v1/2024.emnlp-main.422},
	language = {en},
	urldate = {2026-06-05},
	booktitle = {Proceedings of the 2024 {Conference} on {Empirical} {Methods} in {Natural} {Language} {Processing}},
	publisher = {Association for Computational Linguistics},
	author = {Li, Yuhui and Wei, Fangyun and Zhang, Chao and Zhang, Hongyang},
	year = {2024},
	pages = {7421--7432},
}

@misc{wertheimer2024combined,
	title = {Accelerating {Production} {LLMs} with {Combined} {Token}/{Embedding} {Speculators}},
	copyright = {Creative Commons Attribution 4.0 International},
	url = {https://arxiv.org/abs/2404.19124},
	doi = {10.48550/ARXIV.2404.19124},
	urldate = {2026-06-05},
	publisher = {arXiv},
	author = {Wertheimer, Davis and Rosenkranz, Joshua and Parnell, Thomas and Suneja, Sahil and Ranganathan, Pavithra and Ganti, Raghu and Srivatsa, Mudhakar},
	year = {2024},
	note = {Version Number: 2},
}

@inproceedings{leviathan2023fast, author = {Leviathan, Yaniv and Kalman, Matan and Matias, Yossi}, title = {Fast inference from transformers via speculative decoding}, year = {2023}, publisher = {JMLR.org}, booktitle = {Proceedings of the 40th International Conference on Machine Learning}, articleno = {795}, numpages = {13}, location = {Honolulu, Hawaii, USA}, series = {ICML'23} }

@inproceedings{cai2024medusa, author = {Cai, Tianle and Li, Yuhong and Geng, Zhengyang and Peng, Hongwu and Lee, Jason D. and Chen, Deming and Dao, Tri}, title = {MEDUSA: Simple LLM inference acceleration framework with multiple decoding heads}, year = {2024}, publisher = {JMLR.org}, booktitle = {Proceedings of the 41st International Conference on Machine Learning}, articleno = {203}, numpages = {27}, location = {Vienna, Austria}, series = {ICML'24} }

@inproceedings{
zhang2025learning,
title={Learning Harmonized Representations for Speculative Sampling},
author={Lefan Zhang and Xiaodan Wang and Yanhua Huang and Ruiwen Xu},
booktitle={The Thirteenth International Conference on Learning Representations},
year={2025},
url={https://openreview.net/forum?id=T9u56s7mbk}
}

@inproceedings{zhang2024draftverify,
    title = "Draft {\&} Verify: Lossless Large Language Model Acceleration via Self-Speculative Decoding",
    author = "Zhang, Jun  and
      Wang, Jue  and
      Li, Huan  and
      Shou, Lidan  and
      Chen, Ke  and
      Chen, Gang  and
      Mehrotra, Sharad",
    editor = "Ku, Lun-Wei  and
      Martins, Andre  and
      Srikumar, Vivek",
    booktitle = "Proceedings of the 62nd Annual Meeting of the Association for Computational Linguistics (Volume 1: Long Papers)",
    month = aug,
    year = "2024",
    address = "Bangkok, Thailand",
    publisher = "Association for Computational Linguistics",
    url = "https://aclanthology.org/2024.acl-long.607/",
    doi = "10.18653/v1/2024.acl-long.607",
    pages = "11263--11282"
}

@inproceedings{
liu2024kangaroo,
title={Kangaroo: Lossless Self-Speculative Decoding for Accelerating {LLM}s via Double Early Exiting},
author={Fangcheng Liu and Yehui Tang and Zhenhua Liu and Yunsheng Ni and Duyu Tang and Kai Han and Yunhe Wang},
booktitle={The Thirty-eighth Annual Conference on Neural Information Processing Systems},
year={2024},
url={https://openreview.net/forum?id=lT3oc04mDp}
}

@inproceedings{
li2026eagle,
title={{EAGLE}-3: Scaling up Inference Acceleration of Large Language Models via Training-Time Test},
author={Yuhui Li and Fangyun Wei and Chao Zhang and Hongyang Zhang},
booktitle={The Thirty-ninth Annual Conference on Neural Information Processing Systems},
year={2026},
url={https://openreview.net/forum?id=4exx1hUffq}
}

@inproceedings{
chen2024sequoia,
title={Sequoia: Scalable and Robust Speculative Decoding},
author={Zhuoming Chen and Avner May and Ruslan Svirschevski and Yu-Hsun Huang and Max Ryabinin and Zhihao Jia and Beidi Chen},
booktitle={The Thirty-eighth Annual Conference on Neural Information Processing Systems},
year={2024},
url={https://openreview.net/forum?id=rk2L9YGDi2}
}

@inproceedings{li2024eagle, author = {Li, Yuhui and Wei, Fangyun and Zhang, Chao and Zhang, Hongyang}, title = {EAGLE: speculative sampling requires rethinking feature uncertainty}, year = {2024}, publisher = {JMLR.org}, booktitle = {Proceedings of the 41st International Conference on Machine Learning}, articleno = {1162}, numpages = {14}, location = {Vienna, Austria}, series = {ICML'24} }

@inproceedings{
ankner2024hydra,
title={Hydra: Sequentially-Dependent Draft Heads for Medusa Decoding},
author={Zachary Ankner and Rishab Parthasarathy and Aniruddha Nrusimha and Christopher Rinard and Jonathan Ragan-Kelley and William Brandon},
booktitle={First Conference on Language Modeling},
year={2024},
url={https://openreview.net/forum?id=FbhjirzvJG}
}

@inproceedings{
sun2023spectr,
title={SpecTr: Fast Speculative Decoding via Optimal Transport},
author={Ziteng Sun and Ananda Theertha Suresh and Jae Hun Ro and Ahmad Beirami and Himanshu Jain and Felix Yu},
booktitle={Thirty-seventh Conference on Neural Information Processing Systems},
year={2023},
url={https://openreview.net/forum?id=SdYHLTCC5J}
}

@inproceedings{
ahn2023spectr,
title={SpecTr++: Improved transport plans for speculative decoding of large language models},
author={Kwangjun Ahn and Ahmad Beirami and Ziteng Sun and Ananda Theertha Suresh},
booktitle={NeurIPS 2023 Workshop Optimal Transport and Machine Learning},
year={2023},
url={https://openreview.net/forum?id=o8ZGNn9LpN}
}

@inproceedings{
yin2024a,
title={A Theoretical Perspective for Speculative Decoding Algorithm},
author={Ming Yin and Minshuo Chen and Kaixuan Huang and Mengdi Wang},
booktitle={The Thirty-eighth Annual Conference on Neural Information Processing Systems},
year={2024},
url={https://openreview.net/forum?id=wSqpNeMVLU}
}

@inproceedings{stern2018blockwise, author = {Stern, Mitchell and Shazeer, Noam and Uszkoreit, Jakob}, title = {Blockwise parallel decoding for deep autoregressive models}, year = {2018}, publisher = {Curran Associates Inc.}, address = {Red Hook, NY, USA}, booktitle = {Proceedings of the 32nd International Conference on Neural Information Processing Systems}, pages = {10107–10116}, numpages = {10}, location = {Montr\'{e}al, Canada}, series = {NIPS'18} }

@inproceedings{
an2026pard,
title={{PARD}: Accelerating {LLM} Inference with Low\nobreakdash-Cost {PAR}allel Draft Model Adaptation},
author={Zihao An and Huajun Bai and Ziqiong Liu and Dong Li and Emad Barsoum},
booktitle={The Fourteenth International Conference on Learning Representations},
year={2026},
url={https://openreview.net/forum?id=XbOyv7iVGL}
}

@inproceedings{zheng2024sglang,
title={{SGL}ang: Efficient Execution of Structured Language Model Programs},
author={Lianmin Zheng and Liangsheng Yin and Zhiqiang Xie and Chuyue Sun and Jeff Huang and Cody Hao Yu and Shiyi Cao and Christos Kozyrakis and Ion Stoica and Joseph E. Gonzalez and Clark Barrett and Ying Sheng},
booktitle={The Thirty-eighth Annual Conference on Neural Information Processing Systems},
year={2024},
url={https://openreview.net/forum?id=VqkAKQibpq}
}

@software{sglang_2026,
  author  = {{SGLang Team}},
  title   = {{SGLang: A high-performance serving framework for large language models and multimodal models}},
  year    = {2026},
  version = {0.5.12},
  url     = {https://github.com/sgl-project/sglang},
  note    = {Accessed: 2026-06-05}
}

@software{nvidia_tensorrt_llm_2026,
  author  = {{NVIDIA Corporation}},
  title   = {{TensorRT-LLM: A TensorRT toolbox for optimized large language model inference}},
  year    = {2026},
  version = {1.3.0rc16},
  url     = {https://github.com/NVIDIA/TensorRT-LLM},
  note    = {Accessed: 2026-06-05}
}

@inproceedings{holsman2025fuzzy,
    title = "Fuzzy Speculative Decoding for a Tunable Accuracy-Runtime Tradeoff",
    author = "Holsman, Maximilian  and
      Huang, Yukun  and
      Dhingra, Bhuwan",
    editor = "Che, Wanxiang  and
      Nabende, Joyce  and
      Shutova, Ekaterina  and
      Pilehvar, Mohammad Taher",
    booktitle = "Findings of the Association for Computational Linguistics: ACL 2025",
    month = jul,
    year = "2025",
    address = "Vienna, Austria",
    publisher = "Association for Computational Linguistics",
    url = "https://aclanthology.org/2025.findings-acl.1346/",
    doi = "10.18653/v1/2025.findings-acl.1346",
    pages = "26257--26273",
    ISBN = "979-8-89176-256-5"
}

@misc{zhong2025approxverify,
      title={Speeding up Speculative Decoding via Sequential Approximate Verification}, 
      author={Meiyu Zhong and Noel Teku and Ravi Tandon},
      year={2025},
      eprint={2502.04557},
      archivePrefix={arXiv},
      primaryClass={cs.LG},
      url={https://arxiv.org/abs/2502.04557}, 
}

@misc{song2026mars,
      title={MARS: Unleashing the Power of Speculative Decoding via Margin-Aware Verification}, 
      author={Jingwei Song and Xinyu Wang and Hanbin Wang and Xiaoxuan Lei and Bill Shi and Shixin Han and Eric Yang and Xiao-Wen Chang and Lynn Ai},
      year={2026},
      eprint={2601.15498},
      archivePrefix={arXiv},
      primaryClass={cs.LG},
      url={https://arxiv.org/abs/2601.15498}, 
}

@misc{qwen3,
      title={Qwen3 Technical Report}, 
      author={An Yang and Anfeng Li and Baosong Yang and Beichen Zhang and Binyuan Hui and Bo Zheng and Bowen Yu and Chang Gao and Chengen Huang and Chenxu Lv and Chujie Zheng and Dayiheng Liu and Fan Zhou and Fei Huang and Feng Hu and Hao Ge and Haoran Wei and Huan Lin and Jialong Tang and Jian Yang and Jianhong Tu and Jianwei Zhang and Jianxin Yang and Jiaxi Yang and Jing Zhou and Jingren Zhou and Junyang Lin and Kai Dang and Keqin Bao and Kexin Yang and Le Yu and Lianghao Deng and Mei Li and Mingfeng Xue and Mingze Li and Pei Zhang and Peng Wang and Qin Zhu and Rui Men and Ruize Gao and Shixuan Liu and Shuang Luo and Tianhao Li and Tianyi Tang and Wenbiao Yin and Xingzhang Ren and Xinyu Wang and Xinyu Zhang and Xuancheng Ren and Yang Fan and Yang Su and Yichang Zhang and Yinger Zhang and Yu Wan and Yuqiong Liu and Zekun Wang and Zeyu Cui and Zhenru Zhang and Zhipeng Zhou and Zihan Qiu},
      year={2025},
      eprint={2505.09388},
      archivePrefix={arXiv},
      primaryClass={cs.CL},
      url={https://arxiv.org/abs/2505.09388}, 
}

@misc{hfdataset,
      title={Enhancing Chat Language Models by Scaling High-quality Instructional Conversations}, 
      author={Ning Ding and Yulin Chen and Bokai Xu and Yujia Qin and Zhi Zheng and Shengding Hu and Zhiyuan Liu and Maosong Sun and Bowen Zhou},
      year={2023},
      eprint={2305.14233},
      archivePrefix={arXiv},
      primaryClass={cs.CL}
}
\bibliographystyle{tmlr}

\appendix
\section{Appendix}
 
In the appendix, we provide additional proofs for the results stated in the main text.

\subsection{Bounds and inverse expansion for $g$}
\label{app:g-bounds}

\begin{lemma}[Bounds for $g$]
\label{lem:g-bounds}
For every $d \in [0,1)$,
\[
\frac{d^2}{2} + \frac{d^4}{12} \;\leq\; g(d) \;\leq\; \frac{d^2}{2(1-d^2)}.
\]
\end{lemma}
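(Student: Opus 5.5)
Both inequalities follow from the power series of $g$ around $0$. I would derive that series, show every coefficient is non-negative, and compare it term by term with the two bounds.

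Start from the derivative computed in Theorem~\ref{thm:greedy-tightness}, $g'(x) = \tfrac12\log\frac{1+x}{1-x} = \operatorname{artanh}(x)$. Since $g(0)=0$, we have $g(d)=\int_0^d \operatorname{artanh}(x)\,dx$. For $|x|<1$,
\[
\operatorname{artanh}(x) = \sum_{n\ge 0} \frac{x^{2n+1}}{2n+1},
\]
and integrating termwise (the series converges on $[0,1)$ with non-negative terms) gives
\[
g(d) \;=\; \sum_{n\ge 1} \frac{d^{2n}}{(2n-1)(2n)} \;=\; \frac{d^2}{2} + \frac{d^4}{12} + \frac{d^6}{30} + \cdots .
\]

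For the lower bound, every term is non-negative for $d\in[0,1)$. Dropping all terms of order $d^6$ and higher therefore gives $g(d)\ge d^2/2 + d^4/12$, with equality at $d=0$.

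For the upper bound, expand $\frac{d^2}{2(1-d^2)} = \sum_{n\ge1} \tfrac12 d^{2n}$ as a geometric series. The coefficient of $d^{2n}$ in $g$ is $\frac{1}{(2n-1)2n}$, and this is at most $\tfrac12$ for every $n\ge1$, since $(2n-1)2n\ge 2$. Both series have non-negative terms on $[0,1)$, so the termwise comparison gives $g(d)\le \frac{d^2}{2(1-d^2)}$.

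If one prefers to avoid series, the upper bound can instead be proved by comparing derivatives. The right-hand side has derivative $\frac{d}{(1-d^2)^2}$, and $\operatorname{artanh}(x)\le \frac{x}{1-x^2}$ holds because its series has coefficients $1 \le 1$ termwise. That derivative bound is still weaker than needed, since $\frac{x}{1-x^2}\le\frac{x}{(1-x^2)^2}$, so it also suffices after integrating from $0$.

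No step is a real obstacle. The only care needed is to justify termwise integration and comparison, which is immediate because the series are absolutely convergent with non-negative terms on $[0,1)$. The endpoint $d\to1$ is excluded, so divergence of the upper bound there causes no issue.
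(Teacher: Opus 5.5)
Your proof is correct, and your lower bound is the paper's argument exactly: expand $g(d)=\sum_{n\ge1} d^{2n}/((2n-1)2n)$ and drop the non-negative terms beyond $d^4$.

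For the upper bound you take a different route. The paper does not use the series there. It notes $g(0)=g'(0)=0$ and $g''(x)=1/(1-x^2)$, which is increasing, so $g''(x)\le 1/(1-d^2)$ on $[0,d]$; integrating twice gives $d^2/(2(1-d^2))$.

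Your coefficient comparison, $\frac{1}{(2n-1)2n}\le \tfrac12$ against the geometric series, derives both bounds from one expansion. It also shows the slack order by order: the two sides agree at order $d^2$, and the difference is $\sum_{n\ge2}\bigl(\tfrac12-\frac{1}{(2n-1)2n}\bigr)d^{2n}\ge 0$. The paper's argument is a Taylor-remainder bound that needs no series manipulation. It works for any function with $g(0)=g'(0)=0$ and $g''$ increasing on $[0,d]$.

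Your alternative first-derivative argument is also valid, but two phrases in it are garbled. The termwise comparison behind $\tfrac12\log\frac{1+x}{1-x}\le \frac{x}{1-x^2}$ is $\frac{1}{2n+1}\le 1$, not $1\le 1$. And $\frac{x}{1-x^2}$ is a stronger, not weaker, bound than the needed $\frac{x}{(1-x^2)^2}$.
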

\begin{proof}
The Taylor expansion of $g$ at the origin is
\[
g(d) \;=\; \sum_{i = 1}^{\infty} \frac{d^{2i}}{(2i-1)(2i)}.
\]
All coefficients are non-negative, so truncating after the $d^4$ term gives
\[
g(d) \;\geq\; \frac{d^2}{2} + \frac{d^4}{12}.
\]

For the upper bound, note that $g(0) = g'(0) = 0$ and
\[
g''(x) \;=\; \frac{1}{2(1+x)} + \frac{1}{2(1-x)} \;=\; \frac{1}{1-x^2}.
\]
Hence $g''(x) \leq 1/(1-d^2)$ for every $x \in [0,d]$. Integrating both sides twice from $0$ to $d$ gives:
\[
g(d) \;\leq\; \frac{d^2}{2(1-d^2)}.
\]
\end{proof}

\begin{lemma}[Expansion of $g^{-1}$] 
\label{lem:g-inverse} 
The function $g$ is strictly increasing on $[0,1)$, so $g^{-1}$ is well defined on $[0,\log 2)$. As $\eta \to 0$, 
\[ 
g^{-1}(\eta) = \sqrt{2\eta}\,\Bigl(1-\frac{\eta}{6}+O(\eta^2)\Bigr). 
\] 
\end{lemma}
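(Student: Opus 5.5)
Monotonicity comes straight from the derivative computed in Theorem~\ref{thm:greedy-tightness}: $g'(x)=\tfrac12\log\frac{1+x}{1-x}$, which is strictly positive on $(0,1)$ and zero only at $x=0$. So $g$ is strictly increasing and continuous on $[0,1)$. We also have $g(0)=0$ and $g(d)\to\log 2$ as $d\to1^-$, because $\frac{1+d}{2}\log(1+d)\to\log 2$ and $\frac{1-d}{2}\log(1-d)\to0$. Hence $g$ is a bijection from $[0,1)$ onto $[0,\log 2)$, and $g^{-1}$ is well defined there.

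For the expansion we use the power series from Lemma~\ref{lem:g-bounds}:
\[
g(d)=\frac{d^2}{2}+\frac{d^4}{12}+\frac{d^6}{30}+\cdots .
\]
Set $d=g^{-1}(\eta)$ and factor out the leading term:
\[
\eta=\frac{d^2}{2}\Bigl(1+\frac{d^2}{6}+O(d^4)\Bigr),
\]
where the bracket is analytic and even in $d$. Taking square roots,
\[
\sqrt{2\eta}=d\,\phi(d^2),\qquad \phi(u)=\sqrt{1+u/6+O(u^2)}=1+\frac{u}{12}+O(u^2).
\]
Since $\phi(0)=1$ and $u\mapsto u\,\phi(u)^2$ has nonzero derivative at $0$, the analytic inverse function theorem lets us write $d^2$ as an analytic function of $2\eta$ near $0$. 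In particular $d\sim\sqrt{2\eta}$, which also follows directly from the two-sided bounds of Lemma~\ref{lem:g-bounds}.

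Now invert by one step of bootstrapping. Substituting $d^2=2\eta\,(1+O(\eta))$ into $\phi$ gives
\[
\phi(d^2)=1+\frac{\eta}{6}+O(\eta^2),
\]
so
\[
d=\frac{\sqrt{2\eta}}{\phi(d^2)}=\sqrt{2\eta}\Bigl(1-\frac{\eta}{6}+O(\eta^2)\Bigr).
\]
As a consistency check, let $d_0=\sqrt{2\eta}(1-\eta/6)$. Then $d_0^2/2=\eta-\eta^2/3+O(\eta^3)$ and $d_0^4/12=\eta^2/3+O(\eta^3)$, so $g(d_0)=\eta+O(\eta^3)$. Because $g'(d)\approx d\approx\sqrt{2\eta}$, this residual error corrects $d$ by only $O(\eta^{5/2})$, which is within the claimed remainder.

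The main obstacle is making the $O(\eta^2)$ remainder rigorous rather than purely formal. The cleanest route is the analytic inverse function argument applied to $u\,\phi(u)^2$ in the variable $u=d^2$. Alternatively, one can use the mean value theorem: combine $|g(d)-g(d_0)|=O(\eta^3)$ with $g'\geq c\sqrt{\eta}$ on the relevant interval to get $|d-d_0|=O(\eta^{5/2})=\sqrt{2\eta}\cdot O(\eta^2)$.
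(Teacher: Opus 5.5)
Your proposal is correct and follows essentially the same route as the paper. Both prove monotonicity from $g'(d)=\tfrac12\log\frac{1+d}{1-d}>0$, and both get the expansion by inverting the power series of $g$ in the variable $d^2$: the paper reverts $g(\sqrt t)=t/2+t^2/12+\cdots$ to $t=2\eta-\tfrac23\eta^2+O(\eta^3)$ and takes square roots, which is your analytic-inverse-function step plus the bootstrap. Your version is slightly more complete, since you also show that $g$ maps $[0,1)$ onto $[0,\log 2)$ and make the $O(\eta^2)$ remainder rigorous via the inverse function theorem or the mean value theorem.
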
 
\begin{proof} 
Since 
\[ 
g'(d)=\frac12\log\frac{1+d}{1-d}, 
\] 
we have $g'(d)>0$ for every $d\in(0,1)$, and hence $g$ is strictly increasing. The Taylor expansion at the origin is 
\[ 
g(d)=\frac{d^2}{2}+\frac{d^4}{12}+O(d^6). 
\] 
Consider
\[ 
g(\sqrt{t}) \;=\; \sum_{i = 1}^{\infty} \frac{t^{i}}{(2i-1)(2i)}.
\] 
The power series inverse of $g(\sqrt{t})$ at $t = 0$ is:
\[ 
t \;=\; 2\eta-\frac{2}{3}\eta^2+O\left(\eta^3\right)
\] 
Taking square roots gives us
\[ 
d \;=\; \sqrt{2\eta}\, \left(1-\frac{\eta}{6}+O(\eta^2)\right), 
\] 
as claimed. 
\end{proof}

\subsection{Lemmas for Construction}

\begin{lemma}
\label{lem:existence-of-c}
Let $V$ be finite, fix $x_1 \in V$, and let $p:V \to \mathbb{R}$. Then there exists a unique real number $c=c(x_1)$ satisfying 
\[ 
c \;=\; \frac{p(x_1) + \sum_{v \neq x_1\,:\, p(v) > c} p(v)} {1 + \bigl|\{v \neq x_1 : p(v) > c\}\bigr|} \in [p(x_1), 1].
\]
\end{lemma}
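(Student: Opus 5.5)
The plan is to treat the defining relation as a fixed-point problem for a monotone scalar function. Define
\[
\Phi(c) \;:=\; c\Bigl(1 + \bigl|\{v \neq x_1 : p(v) > c\}\bigr|\Bigr) \;-\; p(x_1) \;-\; \sum_{v \neq x_1\,:\,p(v) > c} p(v),
\]
so that $c$ satisfies the stated identity if and only if $\Phi(c) = 0$. Equivalently, $\Phi(c) = (c - p(x_1)) - \sum_{v \neq x_1} \bigl(p(v) - c\bigr)_+$, because each term with $p(v) > c$ contributes $c - p(v)$. In this form $\Phi$ is the difference of the strictly increasing affine function $c - p(x_1)$ and a sum of non-increasing continuous functions $(p(v) - c)_+$. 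Hence $\Phi$ is continuous and strictly increasing on $\mathbb{R}$, with slope at least $1$.

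Existence and uniqueness then follow from the intermediate value theorem applied to the endpoints of the claimed interval. At $c = p(x_1)$ the first term vanishes and the second is non-negative, so $\Phi(p(x_1)) \le 0$. At $c = 1$, and more generally at any $c \ge \max_v p(v)$, the positive parts all vanish. Then $\Phi(c) = c - p(x_1) \ge 0$ as long as $p(x_1) \le 1$, which holds in the paper's setting where $p$ is a probability vector. So a root exists in $[p(x_1), 1]$. Strict monotonicity makes it the unique real root, not merely the unique root in that interval. The lemma as stated allows an arbitrary $p : V \to \mathbb{R}$, but it is only used for $p \in \Delta^{|V|-1}$, and the bound $c \le 1$ requires $p \le 1$. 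I would state this assumption explicitly, or take the upper endpoint to be $\max(1, \max_v p(v))$.

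The only step needing care is showing that the piecewise definition, where the set $\{v : p(v) > c\}$ depends on $c$, is exactly equivalent to $\Phi(c) = 0$ with $\Phi$ continuous at the breakpoints $c = p(v)$. This is the point of rewriting the identity through positive parts, which are manifestly continuous. Finally, I would note that the root satisfies $c = s/|A|$ with $A$ as in Lemma~\ref{lem:single-kkt}, since rearranging $\Phi(c) = 0$ gives exactly that identity.
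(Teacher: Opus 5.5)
Your proof is correct and follows essentially the paper's argument: your $\Phi$ is exactly the negative of the paper's $F(t) = p(x_1) - t + \sum_{v\neq x_1}\max\{p(v)-t,0\}$, and both proofs rest on its continuity, its strict monotonicity, and the intermediate value theorem. The only difference is cosmetic. You apply the intermediate value theorem directly on $[p(x_1),1]$, which locates the root at the same time and avoids the paper's somewhat informal convex-combination step (the paper instead uses the limits at $\pm\infty$ and then locates the root separately). Your remark that $c \le 1$ requires $p \le 1$ matches the paper's implicit use of $p(v)\in[0,1]$.
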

\begin{proof}

Let us define
\[
F(t) := p(x_1)-t+\sum_{v\neq x_1} \max\{p(v)-t,0\}.
\]
Since $V$ is finite, $F$ is a continuous function of $t$. Moreover, if $s<t$, then
\[
F(s)-F(t)
= (t-s)+\sum_{v\neq x_1}
\Bigl(\max\{p(v)-s,0\}-\max\{p(v)-t,0\}\Bigr).
\]
Since each term in the sum is non-negative, $F(s) > F(t)$, so $F$ is strictly decreasing.

Also,
\[
\lim_{t\to -\infty} F(t)=+\infty,
\qquad
\lim_{t\to +\infty} F(t)=-\infty.
\]

Thus, there exists a $c$ such that $F(c)=0$ by the intermediate value theorem. Rewriting the equation $F(c)=0$ gives the desired fixed-point equation for $c$.

Since $F$ is strictly decreasing, the solution is unique.

Finally, since each $p(v) \in [0,1]$, the right-hand side of the fixed-point equation is a convex combination of numbers in $[p(x_1),1]$, so $c$ must also lie in $[p(x_1),1]$.
\end{proof}

\begin{lemma}
\label{lem:r-uniqueness}

For any $p$ and $m$ with $x_0,x_1,...,x_m$ satisfying $p(x_0) > p(x_1) \geq \cdots \geq p(x_m)$, there exists a unique solution $r$ to the equation

\[
p(x_0) - r \;=\; \sum_{i=1}^{m} \bigl(r - p(x_{i})\bigr)_+.
\]

\end{lemma}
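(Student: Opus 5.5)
The plan mirrors the proof of Lemma~\ref{lem:existence-of-c}: we rewrite the defining equation as the root of a single monotone function. Define
\[
F(r) \;:=\; p(x_0) - r - \sum_{i=1}^{m} \bigl(r - p(x_i)\bigr)_+ ,
\]
so that solutions of the stated equation are exactly the zeros of $F$. Each map $r \mapsto (r - p(x_i))_+$ is continuous and non-decreasing, and $r \mapsto p(x_0) - r$ is continuous and strictly decreasing. Hence $F$ is continuous and strictly decreasing on $\mathbb{R}$. More precisely, for $s < r$ we will have $F(s) - F(r) \geq r - s > 0$, with the same bookkeeping as in Lemma~\ref{lem:existence-of-c}.

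\textbf{Existence.} We evaluate $F$ at two explicit points rather than appealing only to limits, since this also pins down where $r$ lies.
\begin{itemize}
\item At $r = p(x_1)$: every term $(p(x_1) - p(x_i))_+$ vanishes because $p(x_i) \leq p(x_1)$. Therefore $F(p(x_1)) = p(x_0) - p(x_1) > 0$, using the strict inequality $p(x_0) > p(x_1)$.
\item At $r = p(x_0)$: we get $F(p(x_0)) = -\sum_i (p(x_0) - p(x_i))_+ \leq -(p(x_0) - p(x_1)) < 0$.
\end{itemize}
The intermediate value theorem then gives a root $r \in (p(x_1), p(x_0))$. Alternatively, $F(r) \to \pm\infty$ as $r \to \mp\infty$ gives existence directly. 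Strict monotonicity gives uniqueness.

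\textbf{Closed form.} If a closed form is wanted for use in Theorem~\ref{thm:tree-exact}, we will note the following. At the root, the set $A = \{x_0\} \cup \{x_i : p(x_i) < r\}$ satisfies $r = \frac{1}{|A|}\sum_{v \in A} p(v)$; this follows by rearranging $F(r) = 0$ on the piece where the active indices are fixed. The root also lies strictly between $p(x_1)$ and $p(x_0)$, so $x_1 \in A$ and $|A| \geq 2$. This is what the KKT verification in Theorem~\ref{thm:tree-exact} uses: $\lambda_v = 1 - p(v)/r \geq 0$ on $A \setminus \{x_0\}$, and $p(v) \geq r$ off $A$.

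\textbf{Main obstacle.} There is no real difficulty. The one point needing care is the strict inequality $p(x_0) > p(x_1)$, which ensures $F(p(x_1)) > 0$ and hence that the root is not degenerate. This uses the standing assumption that the argmax $x_0$ is unique.
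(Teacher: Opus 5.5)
\textbf{The lemma is proved, but your main existence step and the localization you derive from it are wrong.}

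Your overall route is the paper's. You define $F(r) = p(x_0) - r - \sum_{i=1}^m (r - p(x_i))_+$, note that it is continuous and strictly decreasing, and find a sign change. Your strict-monotonicity argument is correct. So is your fallback, $F(r) \to \pm\infty$ as $r \to \mp\infty$, and it already suffices for existence and uniqueness. The paper instead uses $F(0) = p(x_0) > 0$ and $F(p(x_0)) < 0$, which places $r$ in $(0, p(x_0))$.

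\textbf{The error at $r = p(x_1)$.} Since $p(x_i) \le p(x_1)$, the term $(p(x_1) - p(x_i))_+$ equals $p(x_1) - p(x_i) \ge 0$. It vanishes only when $p(x_i) \ge p(x_1)$, not when $p(x_i) \le p(x_1)$. Hence
\[
F(p(x_1)) = p(x_0) - p(x_1) - \sum_{i=2}^m \bigl(p(x_1) - p(x_i)\bigr),
\]
which can be negative. For instance, take $m = 2$, $p(x_0) = 0.5$, $p(x_1) = 0.4$, $p(x_2) = 0.01$. Then $F(0.4) = 0.1 - 0.39 = -0.29$. The actual root is $r = 0.255 < p(x_1)$, with active set $A = \{x_0, x_2\}$.

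\textbf{Consequences.} Your localization $r \in (p(x_1), p(x_0))$ is false. So is the claim $x_1 \in A$ in your ``closed form'' paragraph. If $r > p(x_1)$ held in general, every token of $S^\ast$ would be active, and the thresholding in the definition of $A$ in Theorem~\ref{thm:tree-exact} would be redundant.

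\textbf{The fix.} Evaluate at the smallest probability $p(x_m)$ instead. There every positive part vanishes, so $F(p(x_m)) = p(x_0) - p(x_m) > 0$. This gives $r \in (p(x_m), p(x_0))$, hence $x_m \in A$ and $|A| \ge 2$. This matches the paper's remark that the rearrangement starts from the least probable of the top $m+1$ tokens. Your ``main obstacle'' remark should be corrected too: the strict inequality $p(x_0) > p(x_1)$ is what makes $F(p(x_0)) < 0$, i.e.\ $r < p(x_0)$, not what makes $F(p(x_1)) > 0$.
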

\begin{proof}

The function 
\[
F(r) := p(x_0) - r - \sum_{i=1}^{m} \bigl(r - p(x_{i})\bigr)_+
\]
is continuous. It is differentiable on all but finitely many points (those being the $x_1, \ldots, x_m$). Where it is differentiable, the derivative is:
\[
F'(r) = -1 - \sum_{i=1}^{m} \mathbf{1}\{r > p(x_{i})\} \leq -1
\]
It is hence strictly decreasing. At the endpoints, we have $F(0) = p(x_0) > 0$ and $F(p(x_0)) = -\sum_{i=1}^{m} \bigl(p(x_0) - p(x_{i})\bigr)_+ < 0$. By the intermediate value theorem, there exists a solution to $F(r) = 0$ with $r \in (0, p(x_0))$, and by strict monotonicity, the solution is unique.
\end{proof}

\end{document}